\documentclass[11pt,a4paper]{article}
\usepackage{a4wide}
\usepackage{graphicx}
\usepackage[fleqn]{amsmath}
\usepackage{amssymb}
\usepackage{amsthm}
\usepackage{bm}
\usepackage{url}
\newcommand{\dd}{~\mathrm{d}}
\newcommand{\sgn}{\mathrm{sgn}\,}
\newcommand{\LandauO}{\mathcal{O}}
\newcommand{\transpose}{^{\mathrm{T}}}
\newcommand{\bbbr}{\mathbb{R}}
\theoremstyle{plain}
\newtheorem{theorem}{Theorem}
\newtheorem{corollary}{Corollary}
\theoremstyle{remark}
\newtheorem{remark}{Remark}
\allowdisplaybreaks
\clubpenalty10000\widowpenalty10000
\begin{document}
\title{Analysis of Amoeba Active Contours}
\author{Martin Welk\\ University for Health Sciences, Medical Informatics and Technology (UMIT),\\ Eduard-Walln\"ofer-Zentrum 1, 6060 Hall/Tyrol, Austria\\ \url{martin.welk@umit.at} }
\date{July 10, 2014}
\maketitle

\begin{abstract} Subject of this paper is the theoretical analysis of structure-adaptive median filter algorithms that approximate curvature-based PDEs for image filtering and segmentation.  These so-called morphological amoe\-ba filters are based on a concept introduced by Lerallut et al. They achieve similar results as the well-known geodesic active contour and self-snakes PDEs.  In the present work, the PDE approximated by amoeba active contours is derived for a general geometric situation and general amoeba metric. This PDE is structurally similar but not identical to the geodesic active contour equation.  It reproduces the previous PDE approximation results for amoeba median filters as special cases.  Furthermore, modifications of the basic amoeba active contour algorithm are analysed that are related to the morphological force terms frequently used with geodesic active contours.  Experiments demonstrate the basic behaviour of amoeba active contours and its similarity to geodesic active contours.

\bigskip \noindent%
\textbf{Keywords:} Adaptive morphology $\bullet$ Curvature-based PDE $\bullet$ Morphological amoebas $\bullet$ Geodesic active contours $\bullet$ Self-snakes \end{abstract}

\sloppy \section{Introduction} Introduced by Lerallut et al.\ \cite{Lerallut-ismm05,Lerallut-IVC07}, morphological amoeba filtering is a class of discrete image filtering procedures based on image-adaptive structuring elements. These structuring elements are defined by a so-called amoeba metric that combines spatial proximity and grey-value similarity.  Amoeba filters adapt flexibly to image structures.  For example, iterated \emph{amoeba median filtering} (AMF) improves the favourable edge-preserving denoising capabilities of traditional iterated median filtering \cite{Tukey-Book71} by removing its tendency to dislocate edges, and introducing even edge-enhancing behaviour.

This paper is an extended version of the conference paper \cite{Welk-ssvm13}. Continuing the author's earlier work with co-authors \cite{Welk-ssvm11,Welk-JMIV11}, it is concerned with comparing AMF methods to two curvature-based PDEs of image processing. Firstly, we consider \emph{geodesic active contours} \cite{Caselles-iccv95,Caselles-IJCV97,Kichenassamy-iccv95,Kichenassamy-ARMA96}
\begin{equation} \label{gac} u_t = \lvert\bm{\nabla} u\rvert~\mathrm{div}\left( g(\lvert\bm{\nabla} f\rvert^2)\, \frac{\bm{\nabla} u}{\lvert\bm{\nabla} u\rvert} \right) \end{equation}
which can be used to segment a given image $f$ by evolving a contour towards regions of high contrast in $f$. The evolving contour is encoded as zero-level set of the function $u$. The (decreasing, nonnegative) edge-stopping function $g$ can be chosen e.g.\ as a Perona-Malik-type function \cite{Perona-PAMI90}
\begin{equation}\label{peronamalikg} g(s^2)=\frac1{1+s^2/\lambda^2}\;,\qquad \lambda>0\;.  \end{equation}
Secondly, we are interested in \emph{self-snakes} \cite{Sapiro-icip96}, a PDE filter for a single image $u$ that is obtained from \eqref{gac} by identifying $f$ with the evolving function $u$.  \medskip

As shown in \cite{Welk-JMIV11}, AMF is linked to the self-snakes equation in a way similar to the connection of traditional median filtering to (mean) curvature motion \cite{Alvarez-SINUM92} that was proven by Guichard and Morel \cite{Guichard-sana97}: One amoeba median filtering step asymptotically approximates a time step of size $\varrho^2/6$ of an explicit time discretisation for the self-snakes PDE when the radius $\varrho$ of the structuring element goes to zero.  The exact shape of the (decreasing, nonnegative) edge-stopping function $g$ depends on the specific choice of the amoeba metric, with the Perona-Malik-type function \eqref{peronamalikg} being associated to the $L^2$ amoeba metric.

Building on this amoeba/self-snakes connection, \cite{Welk-ssvm11} proposed a morphological amoeba algorithm for active contour segmentation. Experimentally, this process behaves similar to geodesic active contours, with a tendency to refined adaptation to structure details, see \cite[Fig.~2]{Welk-ssvm11}.  Analysis in \cite{Welk-ssvm11} was restricted to a rotationally symmetric situation where asymptotic equivalence to geo\-de\-sic active contours \eqref{gac} could be proven.  A more comprehensive asymptotic equivalence result proven in \cite{Welk-ssvm13} for the case of an $L^2$ amoeba metric brings out that perfect equivalence between amoeba and geodesic active contours does not hold in general geometric situations but amoeba active contours approximate a PDE similar to geodesic active contours.  The main goal of the present paper is to extend this theoretical analysis to general amoeba metrics. All previous findings on amoeba median filters can be recovered from the new general result as special cases.

Already in \cite{Welk-ssvm11} the possibility was mentioned to introduce into amoeba active contours a force term similar to the \emph{balloon force} proposed by Cohen \cite{Cohen-CVGIPIU91} or its modifications in more recent works \cite{Caselles-NUMA93,Kichenassamy-ARMA96,Malladi-PAMI95}. The benefits of such a force term are that contour evolution in homogeneous image regions is accelerated, that evolution can be prevented from getting caught in undesired local minima, and that initialisation is also possible with contours inside the region to be segmented. The corresponding modifications to the amoeba active contour algorithm that were proposed in \cite{Welk-ssvm11} have not been analysed theoretically so far. To close this gap is a further goal of the present paper.

\paragraph{Our contribution.} We extend the analytical investigation of amoeba median algorithms.  First, we derive the PDE corresponding to the amoeba active contour method in a general geometric situation and for a general amoeba metric. As already in the $L^2$ case \cite{Welk-ssvm13}, this PDE is no longer fully identical to the geodesic active contour equation.  The proof strategy follows that introduced in \cite{Welk-ssvm13}, which differs substantially from the one used in \cite{Welk-ssvm11,Welk-JMIV11}.  Based on the approximation result, qualitative differences between geodesic and amoeba active contours are discussed for the $L^2$ amoeba metric case.  {\sloppy\par}

In a further step, we analyse in detail the modification of the amoeba active contour method by a bias that was proposed in \cite{Welk-ssvm11} to mimick the force terms often used in connection with geodesic active contours. In the context of this analysis, we will also propose a further variant of this bias.

While the focus in the present paper is on theoretical analysis, we demonstrate segmentation via amoeba active contours with two experiments, which are extended from \cite{Welk-ssvm11}.

\paragraph{Structure of the paper.} We give a short account of the basic concepts of amoeba filtering in Section~\ref{sec-amf}.  Our main theoretical result on PDE approximation is proven in Section~\ref{sec-aaca}. Relations to previous results on PDE approximation by amoeba median filtering algorithms are established in Section~\ref{sec-spec}.  On the ground of the PDE approximation result, a comparison between amoeba active contours and geodesic active contours is made in Section~\ref{sec-aacgac}.  Force terms in active contour methods are considered in Section~\ref{sec-bias}.  Experiments are presented in Section~\ref{sec-exp}.  The paper ends with a conclusion in Section~\ref{sec-conc}.

\section{Amoeba Filters}\label{sec-amf} In this section we recall shortly the definition of amoeba metrics and amoeba filters. We assume that a 2D image is given as a smooth function $f:\varOmega\to\bbbr$ on a closed domain $\varOmega\subset\bbbr^2$.

\subsection{Morphological amoebas} Following the spatially continuous formulation of the amoeba framework in \cite{Welk-ssvm11,Welk-JMIV11}, we associate with $f$ the image manifold $\varGamma\subset\bbbr^3$ consisting of the points $(x,y,\beta\,f(x,y))$.  The construction of \emph{morphological amoebas} as adaptive structuring elements relies on introducing an \emph{amoeba metric} on $\varGamma$.

To this end, we start by choosing a function $\nu:\bbbr\to\bbbr_0^+$ with $\nu(-s)=\nu(s)$, which is increasing on $\bbbr_0$, and for which $\lVert(s,t)\rVert_{\nu}:=t\,\nu(\lvert s/t\rvert)$ is a norm in $\bbbr^2$.  In the following, we will give a general definition of an amoeba metric based on $\nu$ but pay special attention to the following two cases:
\begin{itemize} \item the $L^2$ amoeba metric with $\nu(s)=\sqrt{1+s^2}$, where $\lVert(s,t)\rVert_{\nu}$ is the Euclidean norm, and \item the $L^1$ amoeba metric given by $\nu(s)=1+\lvert s\rvert$.  \end{itemize}

To construct from $\nu$ an amoeba metric, we consider regular curves $\bm{c}:[0,1]\to\varGamma$ with $t\mapsto \bm{c}(t)\equiv(x(t),y(t),f(x(t),y(t)))$ and $\mathrm{d}\bm{c}/\mathrm{d}t =: (\dot{x},\dot{y},\dot{f})$.  For such a curve, we define a curve length $L_{\nu}(\bm{c})$ as
\begin{equation} L_{\nu}(\bm{c}) := \int\limits_{0}^{1} \nu\left(\frac{\beta\,\dot{f}}{\sqrt{\dot{x}^2+\dot{y}^2}}\right)\, \sqrt{\dot{x}^2+\dot{y}^2}\, \dd t \;.  \label{Lnuc} \end{equation}
Note that in the $L^2$ case, $L_{\nu}(\bm{c})$ is just the standard curve length on $\varGamma$ induced by the Euclidean metric of the surrounding space $\bbbr^3$.

The \emph{amoeba distance} $d(\bm{p},\bm{q})$ between two points $\bm{p}$, $\bm{q}$ of the image domain is then the minimum of $L_\nu(\bm{c})$ among all curves $\bm{c}$ connecting $\bm{p}$ with $\bm{q}$. (The minimising curve $\bm{c}$ is called geodesic between $\bm{p}$ and $\bm{q}$.)

In \eqref{Lnuc}, the use of the Euclidean norm $\sqrt{\dot{x}^2+\dot{y}^2}$ in the spatial component ensures rotational invariance of the amoeba metric, while the combination of spatial and tonal distances is governed by $\nu$. The factor $\beta$ is a contrast scale that balances the spatial and tonal information.

The choice of $\beta$ in practical image filtering with amoeba filters is not quite obvious. The same holds for its scaling behaviour when resampling the image.  We will not discuss here strategies how to choose $\beta$.  However, in the light of our results later in this paper the choice of $\beta$ appears analogous to the choice of contrast parameters for Perona-Malik diffusion \cite{Perona-PAMI90}, for which heuristics based on statistics of gradient magnitudes in the image have been proposed, see e.g.\ \cite{Chao-IVC08}.

For amoeba filters \cite{Lerallut-ismm05,Lerallut-IVC07}, one defines a structuring element $\mathcal{A}_{\bm{p}}$ for each point $\bm{p}\in\varOmega$ as the set of all $\bm{q}\in\varOmega$ such that $d(\bm{p},\bm{q})\le\varrho$, where the global parameter $\varrho$ is the \emph{amoeba radius}.  Note that the same kind of image patches has also been used in \cite{Spira-TIP07} for short-time Beltrami kernels.

\subsection{Continuous-scale amoeba filtering formulation} With the so defined structuring elements several morphological filters can be applied straightforward. For the purpose of the present work, morphological filters are characterised by their invariance under automorphisms of the image plane (translations, rotations) and under strictly monotonically increasing transformations of the intensities. This notion, compare e.g.\ \cite{Maragos-icip09}, naturally includes median and other rank-order filters.

In particular, for amoeba median filtering (AMF), the median of the intensity values of the given image $f$ within $\mathcal{A}_{\bm{p}}$ becomes the filtered intensity at $\bm{p}$. Like traditional median filtering, this filter can be applied iteratively. This process was studied in \cite{Welk-JMIV11}.

\subsection{Amoeba active contours} The amoeba active contour method described in \cite{Welk-ssvm11} acts in a similar way: Structuring elements are determined as before but on the basis of the given image $f$, and are used for median-filtering the evolving level-set function $u$.  In analysing amoeba active contours, the amoeba contrast parameter $\beta$ can be fixed to $1$ since a change of this parameter is equivalent to a simple rescaling of the steering function $f$.

\subsection{Discrete amoeba filtering algorithms} Practically, computations are carried out on discrete images.  To this end, a discrete version of the above-mentioned amoeba distance is defined by restricting curves to paths in the neighbourhood graph of the image grid, either with 4-neighbourhoods as in \cite{Lerallut-ismm05,Lerallut-IVC07} or with 8-neighbourhoods as in \cite{Welk-ssvm11,Welk-JMIV11}. More sophisticated constructions using geometric distance transforms \cite{Borgefors-CVGIP86,Ikonen-IVC05} would be possible but are not investigated here due to our focus on space-continuous analysis.

\section{Analysis of Amoeba Active Contours}\label{sec-aaca} We study an amoeba median filter in which $f$ is a smooth function from which the amoeba structuring elements are generated, and $u$ is another smooth function, to which the median filter is applied.  Note that the role played by $f$ here can be compared to that of a ``pilot image'' in some works on adaptive morphology, see e.g.\ \cite{Lerallut-IVC07}. In such a setup, the pilot image usually is some prefiltered version of the same input image that is processed later on by the morphological filter, with the structuring elements derived from the pilot image.  This setting (which we do not consider in detail) is obviously also covered by our analysis in the sequel. However, our hypothesis does not require any relation between $f$ and $u$.

In our subsequent analysis, local orthonormal bases aligned to the gradient and level-line directions of both functions will play an important role.  Given a location $\bm{x}_0$ in the image domain, we will therefore denote by $\bm{\chi}=(\cos\varphi,\sin\varphi)\transpose$ the normalised gradient vector of $f$ at $\bm{x}_0$. The unit vector $\bm{\zeta}\perp\bm{\chi}$ then indicates the local level line direction of $f$.  At locations with $\bm{\nabla}f=\bm{0}$, the directions $\bm{\chi}$, $\bm{\zeta}$ are not well-defined.  For the following derivations we therefore assume that $\bm{\nabla}f\ne0$. However, we will see that the resulting PDE still describes a well-defined evolution.

Analogously, we denote by $\bm{\eta}$ a normalised gradient vector for $u$, and by $\bm{\xi}\perp\bm{\eta}$ the unit vector in the level line direction. The angle between the gradient directions will be called $\alpha$, such that $\bm{\eta}=\bigl(\cos(\varphi+\alpha), \sin(\varphi+\alpha)\bigr)\transpose$.  We will prove the following fact.

\begin{theorem}\label{thm} One step of amoeba median filtering of a smooth function $u$ governed by amoebas generated from $f$ with an amoeba radius of $\varrho$ asymptotically approximates for $\varrho\to0$ a time step of size $\tau=\varrho^2/6$ of an explicit time discretisation for the PDE \begin{align} u_t &= \frac{u_{\bm{\xi\xi}}} {\nu(\lvert\bm{\nabla}f\rvert\,\sin\alpha)^2}-\frac32\,\nu(\lvert\bm{\nabla}f\rvert\,\sin\alpha)\, \lvert\bm{\nabla}u\rvert \times{}\notag\\*&\qquad\quad{}\times\bigl(J_1(\lvert\bm{\nabla}f(\bm{x})\rvert,\alpha)\,f_{\bm{\zeta\zeta}}  +2\,J_2(\lvert\bm{\nabla}f(\bm{x})\rvert,\alpha)\,f_{\bm{\zeta\chi}}  +J_3(\lvert\bm{\nabla}f(\bm{x})\rvert,\alpha)\,f_{\bm{\chi\chi}}\bigr) \label{genaacmed} \end{align} where $J_1$, $J_2$, $J_3$ at the location $\bm{x}$ are given by \begin{align} J_1(s,\alpha) &= \int\limits_{\alpha-\pi/2}^{\alpha+\pi/2} \frac{\nu'(s\cos\vartheta)} {\nu(s\cos\vartheta)^4}\, \sin^2\vartheta \dd\vartheta\;, \label{J1} \\ J_2(s,\alpha) &= \int\limits_{\alpha-\pi/2}^{\alpha+\pi/2} \frac{\nu'(s\cos\vartheta)} {\nu(s\cos\vartheta)^4}\, \sin\vartheta\cos\vartheta \dd\vartheta\;, \label{J2} \\ J_3(s,\alpha) &= \int\limits_{\alpha-\pi/2}^{\alpha+\pi/2} \frac{\nu'(s\cos\vartheta)} {\nu(s\cos\vartheta)^4}\, \cos^2\vartheta \dd\vartheta\;.  \label{J3} \end{align} \end{theorem}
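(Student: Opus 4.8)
The plan is to reduce one amoeba median filtering step to an area-bisection problem for the evolving level line of $u$ and to expand this balance asymptotically in $\varrho$. I fix a point $\bm{x}_0$ with $\bm{\nabla}f\ne\bm{0}$, work in the orthonormal frame $(\bm{\chi},\bm{\zeta})$ attached to $f$, and Taylor-expand both $f$ and $u$ to second order about $\bm{x}_0$, writing $\bm{r}=\bm{q}-\bm{x}_0$. The first task is to describe $\mathcal{A}_{\bm{x}_0}$ to the required order. Along the straight ray in direction $\vartheta$ (measured from $\bm{\chi}$, arc length $\sigma$) the slope of $f$ seen by the integrand in \eqref{Lnuc} is $\lvert\bm{\nabla}f\rvert\cos\vartheta+\sigma\,(\bm{e}_\vartheta\transpose H_f\bm{e}_\vartheta)+\LandauO(\sigma^2)$, so integrating $\nu$ of this and equating to $\varrho$ gives a boundary radius $R(\vartheta)=\varrho/\nu(\lvert\bm{\nabla}f\rvert\cos\vartheta)+\varrho^2 c(\vartheta)+\LandauO(\varrho^3)$, where $c(\vartheta)$ is proportional to $\nu'(\lvert\bm{\nabla}f\rvert\cos\vartheta)\,\nu(\lvert\bm{\nabla}f\rvert\cos\vartheta)^{-3}\,(\bm{e}_\vartheta\transpose H_f\bm{e}_\vartheta)$. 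I would argue that replacing the true geodesic by this straight ray shifts $R(\vartheta)$ only at order $\varrho^3$, since the metric varies by $\LandauO(\varrho)$ across $\mathcal{A}_{\bm{x}_0}$ and length is stationary at a geodesic. The zeroth-order shape is centrally symmetric because $\nu$ is even, whereas $c$ is \emph{odd} under $\vartheta\mapsto\vartheta+\pi$ (as $\nu'$ is odd), so all asymmetry of the amoeba is carried by $c$.

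Second, I would characterise the median $m=u(\bm{x}_0)+\delta$ through $\int_{\mathcal{A}_{\bm{x}_0}}\sgn\!\big(u(\bm{q})-m\big)\dd\bm{q}=0$ and linearise it, exploiting that $\delta=\LandauO(\varrho^2)$ while $\mathcal{A}_{\bm{x}_0}$ has size $\LandauO(\varrho)$. With $u(\bm{q})-m=\lvert\bm{\nabla}u\rvert\,b+\tfrac12\bm{r}\transpose H_u\bm{r}-\delta$ and $b=\bm{r}\cdot\bm{\eta}$, the variation of the sign concentrates on the chord through $\bm{x}_0$ in the level-line direction $\bm{\xi}$, where $b=0$ and $\tfrac12\bm{r}\transpose H_u\bm{r}=\tfrac12 u_{\bm{\xi\xi}}a^2$. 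The half-length of this chord is the radial extent of the leading amoeba along $\bm{\xi}$; evaluating $R(\vartheta)$ at the angle $\alpha+\pi/2$ gives exactly $\varrho/\nu(\lvert\bm{\nabla}f\rvert\sin\alpha)$. The resulting one-dimensional integral over the chord yields $\delta_1=u_{\bm{\xi\xi}}\,\varrho^2/\big(6\,\nu(\lvert\bm{\nabla}f\rvert\sin\alpha)^2\big)$, i.e.\ the first term of \eqref{genaacmed} after multiplication by $6/\varrho^2$.

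Third, the active-contour term comes from the asymmetric part $c$. Splitting $\mathcal{A}_{\bm{x}_0}$ into its symmetric leading shape and the thin signed layer of width $\varrho^2 c$, the layer contributes $\int\sgn(b)\dd\bm{q}$ over it, which in polar coordinates of the $\bm{\eta}$-frame (angle $\phi$ measured from $\bm{\eta}$, so $\vartheta=\phi+\alpha$) becomes $\int_0^{2\pi}\sgn(\cos\phi)\,R_0(\phi)\,\varrho^2 c(\phi+\alpha)\dd\phi$ with $R_0$ the leading-order radius. Balancing this against the chord computation gives $\delta_2$ as this integral times $\lvert\bm{\nabla}u\rvert\,\nu(\lvert\bm{\nabla}f\rvert\sin\alpha)$ up to a numerical constant. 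Substituting $\bm{e}_\vartheta\transpose H_f\bm{e}_\vartheta=f_{\bm{\chi\chi}}\cos^2\vartheta+2f_{\bm{\zeta\chi}}\cos\vartheta\sin\vartheta+f_{\bm{\zeta\zeta}}\sin^2\vartheta$ and passing to $\vartheta=\phi+\alpha$, the weight becomes $\sgn(\cos(\vartheta-\alpha))\,\nu'(\lvert\bm{\nabla}f\rvert\cos\vartheta)\,\nu(\lvert\bm{\nabla}f\rvert\cos\vartheta)^{-4}$, the extra power of $\nu$ relative to $c$ coming from the factor $R_0$. The key identity is that this integrand is odd under $\vartheta\mapsto\vartheta+\pi$, so its weighted integral over the full circle equals twice the integral over the half-circle $\vartheta\in[\alpha-\pi/2,\alpha+\pi/2]$ on which $\sgn(\cos(\vartheta-\alpha))=+1$. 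This collapses the three angular moments to exactly $J_1,2J_2,J_3$ in \eqref{J1}--\eqref{J3} and produces the second term of \eqref{genaacmed} with the prefactor $-\tfrac32\,\nu(\lvert\bm{\nabla}f\rvert\sin\alpha)\,\lvert\bm{\nabla}u\rvert$.

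I expect the main obstacle to be the careful asymptotic bookkeeping of the median rather than any isolated estimate. In particular one must verify that the bisecting level line genuinely leaves the amoeba near the tips of the $\bm{\xi}$-chord, which is precisely why a naive slice-and-integrate area balance must be replaced by the linearised sign-integral localised on the central chord. One must further confirm that the interaction between the amoeba asymmetry $c$ and the Hessian of $u$, the cubic Taylor terms of $u$, and the geodesic-versus-ray discrepancy are all of order $\varrho^3$ or smaller, so that they do not perturb the coefficient at order $\varrho^2$. Controlling these cross-terms uniformly, together with the parity argument that reduces the circle integral to the stated half-range, is the technical heart of the proof.
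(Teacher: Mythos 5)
Your proposal is correct and follows essentially the same route as the paper's proof: a polar-coordinate expansion of the amoeba contour $R(\vartheta)=\varrho/\nu(\lvert\bm{\nabla}f\rvert\cos\vartheta)+\varrho^2c(\vartheta)+\LandauO(\varrho^3)$ with straight rays replacing geodesics at $\LandauO(\varrho^3)$ cost, a decomposition of the median imbalance into the level-line-curvature effect along the $\bm{\xi}$-chord and the amoeba-asymmetry effect carried by $c$, and compensation via the chord length $2\varrho/\nu(\lvert\bm{\nabla}f\rvert\sin\alpha)$, with cross-effects relegated to $\LandauO(\varrho^4)$. The only cosmetic differences are that you characterise the median by the sign integral $\int_{\mathcal{A}}\sgn(u-m)=0$ and reduce a full-circle integral with weight $\sgn(\cos(\vartheta-\alpha))$ by parity, where the paper directly bisects $\mathcal{A}$ into $\mathcal{A}_\pm$ and writes the areas as half-circle polar integrals over $[\varphi+\alpha-\pi/2,\varphi+\alpha+\pi/2]$ --- the computations, including the constants $\varrho^2/6$ and $-\tfrac32\,\nu(\lvert\bm{\nabla}f\rvert\sin\alpha)\,\lvert\bm{\nabla}u\rvert$, agree.
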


\begin{remark} The PDE \eqref{genaacmed} describes an evolution process similar but not identical to geodesic active contours.  An interpretation of the individual terms on the right-hand side of the PDE on an intuitive level is not straightforward.  Several detailed results in Sections~\ref{sec-spec} and \ref{sec-aacgac} give an account of communities and differences between \eqref{genaacmed} and geodesic active contours. In particular, Corollaries~\ref{cor1} and \ref{cor2} in Section~\ref{sec-spec} specify cases where \eqref{genaacmed} coincides with a geodesic active contour or self-snakes equation.  Corollaries~\ref{cor-aacl2} and \ref{cor-aacl1} specialise \eqref{genaacmed} to the $L^2$ and $L^1$ amoeba metrics where it is easier to compare to geodesic active contours. Section~\ref{sec-aacgac} studies the differences between both evolutions in the $L^2$ case based on special cases.  \end{remark}

\begin{remark} As pointed out above, $\bm{\chi}$ and $\bm{\eta}$ are well-defined only at locations where the gradient of $f$ does not vanish.  Let us study therefore what happens with the evolution \eqref{genaacmed} at singular points where $\bm{\nabla}f$ vanishes.  Firstly, if $\nu$ fulfils $\nu'(0)=0$, one has simply $J_1(0,\alpha)=J_2(0,\alpha)=J_3(0,\alpha)=0$ (remember that $\nu(0)>0$ holds by definition), making the right-hand side of \eqref{genaacmed} collaps into the well-defined expression $u_{\bm{\xi\xi}}/\nu(0)^2$.  If $\nu'(0)\ne0$, we notice that the set of singular points can be subdivided into boundary and interior points. Interior points form constant regions of $f$ such that the second derivatives of $f$ vanish, too, making the right-hand side of \eqref{genaacmed} collaps into the same well-defined expression as mentioned before. The remaining boundary points, however, form curves and isolated points. In both cases, smoothness of the gradient field $\bm{\nabla}f$ implies that the resulting definition gaps of the right-hand side of \eqref{genaacmed} can be continuously closed such that again a unique and continuous evolution $u$ is obtained.  \end{remark}

\begin{remark} In the theorem and its forthcoming proof we have fixed $\beta$ to $1$, as mentioned before. However, it is obvious how to adapt the statement to variable $\beta$ because it just takes to replace $f$ with $\beta\,f$ in all places.  \end{remark}

The remainder of the present Section~\ref{sec-aaca} is devoted to the proof of this theorem. In Subsection~\ref{ssec-proofstrat} the overall strategy of the proof is outlined. It involves two main steps that are subsequently treated in Subsection~\ref{ssec-contour} and Subsection~\ref{ssec-imbalance}, respectively.

\subsection{Remark on the proof strategy} \label{ssec-proofstrat} In \cite{Welk-ssvm11,Welk-JMIV11}, related but more restricted results were proven (which are repeated as Corollaries~\ref{cor1} and \ref{cor2} in Section~\ref{sec-spec} below).  The proofs in \cite{Welk-ssvm11,Welk-JMIV11} were based on measuring level line segments within the amoeba. The structure of $f$ and $u$ was represented by their Taylor coefficients up to second order in the calculations. This strategy is well suitable for the amoeba median filter considered in \cite{Welk-JMIV11} where the same image from which the structuring elements are obtained is also being filtered.  It is also useful when analysing more general amoeba filters than median filters, which is a subject of forthcoming work.  For analysing amoeba active contours the same approach is still manageable in the special case treated in \cite{Welk-ssvm11}.  However, the complexity of such calculations would increase a lot in the general case we are about to discuss.

In the following proof of the theorem we therefore pursue a different strategy that was introduced in the proof in \cite{Welk-ssvm13} in the more restricted case of the $L^2$ amoeba metric.  Instead of measuring areas of segments of amoebas, this approach considers sectors of amoebas via a polar coordinate representation.  Level lines other than the one through the amoeba centre are not considered directly any more.

\subsection{Finding the amoeba contour} \label{ssec-contour} As the first part of our proof of Theorem~\ref{thm}, we want to determine the shape of the amoeba $\mathcal{A}:=\mathcal{A}_{\bm{x}_0}$ around a point $\bm{x}_0\in\varOmega$. To this end, we start by considering the 1D case: given $f:\bbbr\to\bbbr$, we seek $z_{\pm}\in\bbbr$ such that the arc-length of the image graph of $f$ between $x_0$ and each of $x_0+z_+$, $x_0-z_-$ equals $\varrho$. Certainly, $z_\pm\le\varrho$.

Using Taylor expansions for $f$ and $\nu$, we have for the arc-length from $x_0$ to $x_0+z$ (where $z>0$) \begin{align} \kern1em&\kern-1em \int\limits_{x_0}^{x_0+z}\nu(f'(x))\dd x =z\,\nu(f'(x_0)) +\frac{z^2}2\,\nu'(f'(x_0))\,f''(x_0) +\LandauO(z^3)\;.  \label{arclengthzplus} \end{align} Equating this to $\varrho$, and taking into account that $\LandauO(z^3)$ is also $\LandauO(\varrho^3)$ within the amoeba, yields a quadratic equation in $z$ with the solutions \begin{align} z_{1,2} &= \frac{\nu(f'(x_0))} {\nu'(f'(x_0))\,f''(x_0)} \left(-1\pm\sqrt{1+ \frac{2\,\varrho\,\nu'(f'(x_0))\,f''(x_0)} {\nu(f'(x_0))^2}}\,\right) +\LandauO(\varrho^3) \;.  \label{zplus} \end{align} The first solution with the ``$+$'' sign, i.e.\ $z_1$, is in fact the sought $z_+$ (because of $z>0$). Note that the second, negative solution, $z_2$, is \emph{not} $z_-$ but refers, for small $\varrho$, to a location far outside the amoeba, and does not go to $0$ when $\varrho\to0$. In fact, this second solution is just a spurious solution introduced by our perturbation approach via the truncated Taylor series (a common behaviour in this kind of approximation).

To find also $z_-$, an expression analogous to \eqref{arclengthzplus} is written down for the arc length from $x_0-z$ to $x_0$ (with $z>0$), leading again to a quadratic equation with $z_-$ as one of its solutions.

Using the Taylor expansion $\sqrt{1+t} = 1 + \frac12 t - \frac18 t^2 + \LandauO(t^3)$ for the square root in \eqref{zplus} and the analogous expression for $z_-$, both results can be combined into \begin{align} z_\pm &= \frac{\varrho}{\nu(f'(x_0))} \mp\frac{\varrho^2\,\nu'(f'(x_0))\,f''(x_0)} {2\,\nu(f'(x_0))^3} +\LandauO(\varrho^3)\;.  \end{align} Turning to the 2D case, we approximate each shortest path in the amoeba metric from $\bm{x}_0$ to a point on the amoeba contour by a Euclidean straight line in the image plane. This introduces only an $\LandauO(\varrho^3)$ error for the path length. We consider now the straight line through $\bm{x}_0$ in the direction of a given unit vector $\bm{v}\in\bbbr^2$.  By our previous 1D result, with the directional derivatives $f_{\bm{v}}(\bm{x}_0)=\langle\bm{v}, \bm{\nabla}f(\bm{x}_0)\rangle$ and $f_{\bm{vv}}(\bm{x}_0)=\bm{v}\transpose\, \mathrm{D}^2f(\bm{x}_0)\,\bm{v}$, we see that said straight line intersects the amoeba contour at $\bm{x}_0\pm z_{\pm}(\bm{v})\cdot\bm{v}$ with \begin{align} z_\pm(\bm{v}) &= \frac{\varrho}{\nu(\langle\bm{v},\bm{\nabla}f(\bm{x}_0)\rangle)} \mp\frac{\varrho^2\, \nu'(\langle\bm{v},\bm{\nabla}f(\bm{x}_0)\rangle)\, \bm{v}\transpose\,\mathrm{D}^2f(\bm{x}_0)\,\bm{v}} {2\,\nu(\langle\bm{v},\bm{\nabla}f(\bm{x}_0)\rangle)^3} +\LandauO(\varrho^3)\;.  \label{xipm-v} \end{align}

\subsection{Contributions to the amoeba median} \label{ssec-imbalance} The second part of our proof of Theorem~\ref{thm} consists in analysing the median of $u$ within the structuring element $\mathcal{A}$ whose polar coordinate representation has been derived in the preceding subsection.

This median equals $u(\bm{x}_0)$ if (a) the amoeba is point-symmetric w.r.t.\ $\bm{x}_0$, and (b) the level lines of $u$ are straight: The central level line $u(\bm{x})=u(\bm{x}_0)$ of $u$ then bisects $\mathcal{A}$, i.e.\ $\mathcal{A}_+:=\{\bm{x}\in\mathcal{A}~|~u(\bm{x})\ge u(\bm{x}_0)\}$ and $\mathcal{A}_-:=\{\bm{x}\in\mathcal{A}~|~u(\bm{x})\le u(\bm{x}_0)\}$ have equal area. For a similar bisection approach in a gradient descent for segmentation compare \cite{HoltzmanGazit-TIP06,Kimmel-bookchapter03}.

\begin{figure*}[t] \setlength{\unitlength}{0.001\textwidth} \begin{picture}(1000,350) \put(60,0){\includegraphics[height=0.35\textwidth]{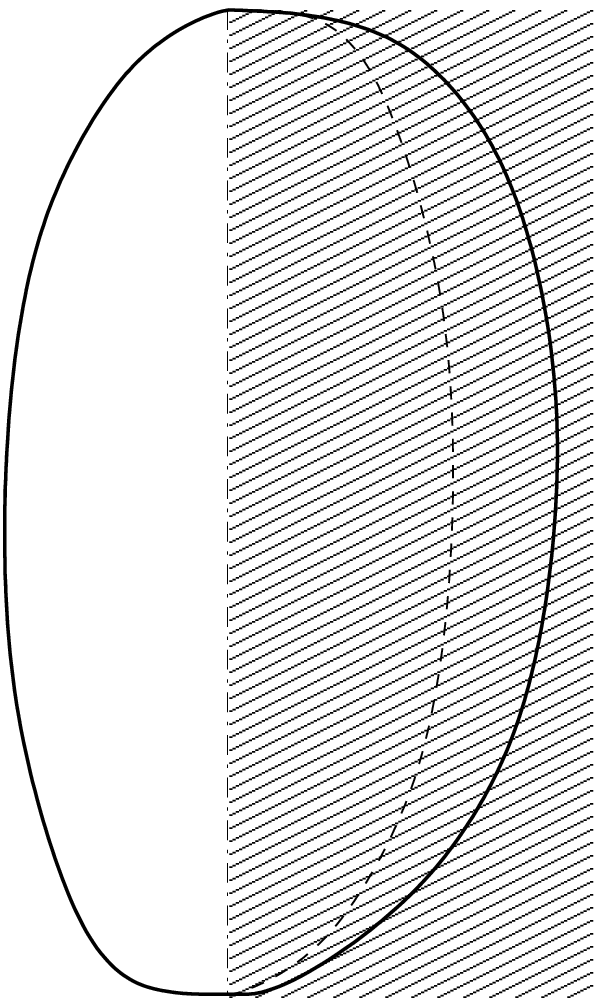}} \put(40,0){(a)} \put(135,165){$\bullet$} \put(112,160){$\bm{x}_0$} \put(140,263){\line(1,0){10}} \put(152,260){{\tiny straight}} \put(152,247){{\tiny level}} \put(152,234){{\tiny line}} \put(85,120){$\mathcal{A}_-$} \put(171,120){$\mathcal{A}_+$} \put(240,200){\line(1,0){25}} \put(265,194){$\varDelta_1$} \put(236,290){\line(1,0){12}} \put(252,300){{\tiny asymmetric}} \put(252,287){{\tiny amoeba}} \put(380,0){\includegraphics[height=0.35\textwidth]{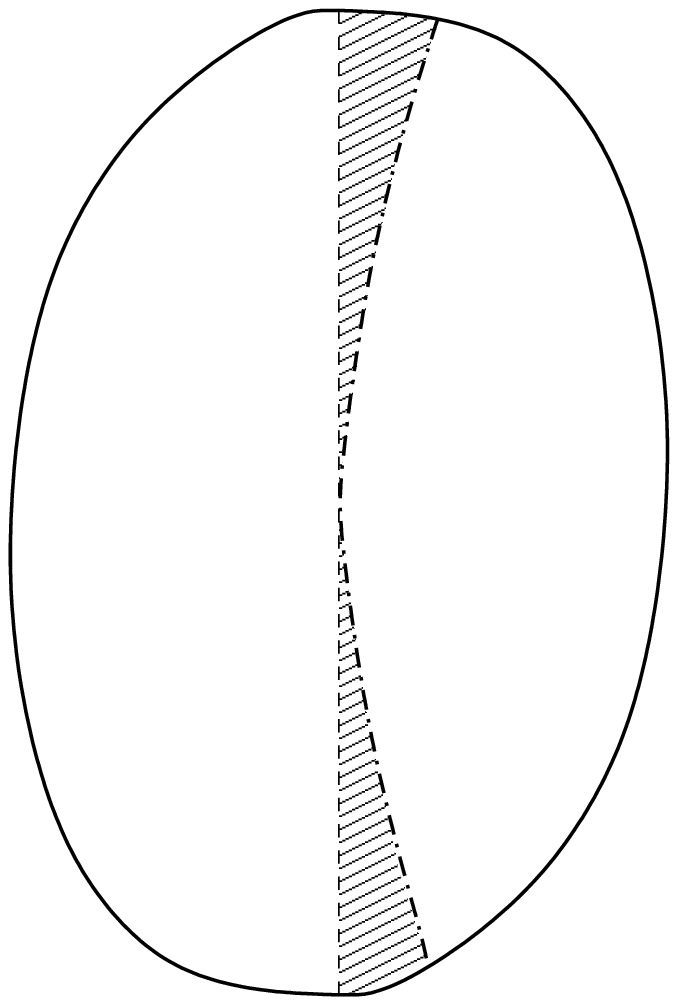}} \put(492,165){$\bullet$} \put(469,160){$\bm{x}_0$} \put(510,263){\line(1,0){10}} \put(522,260){{\tiny curved}} \put(522,247){{\tiny level}} \put(522,234){{\tiny line}} \put(432,120){$\mathcal{A}_-$} \put(528,120){$\mathcal{A}_+$} \put(510,50){\line(1,0){15}} \put(525,44){$\varDelta_2$} \put(593,290){\line(1,0){12}} \put(609,300){{\tiny symmetric}} \put(609,287){{\tiny amoeba}} \put(360,0){(b)} \put(740,0){\includegraphics[height=0.35\textwidth]{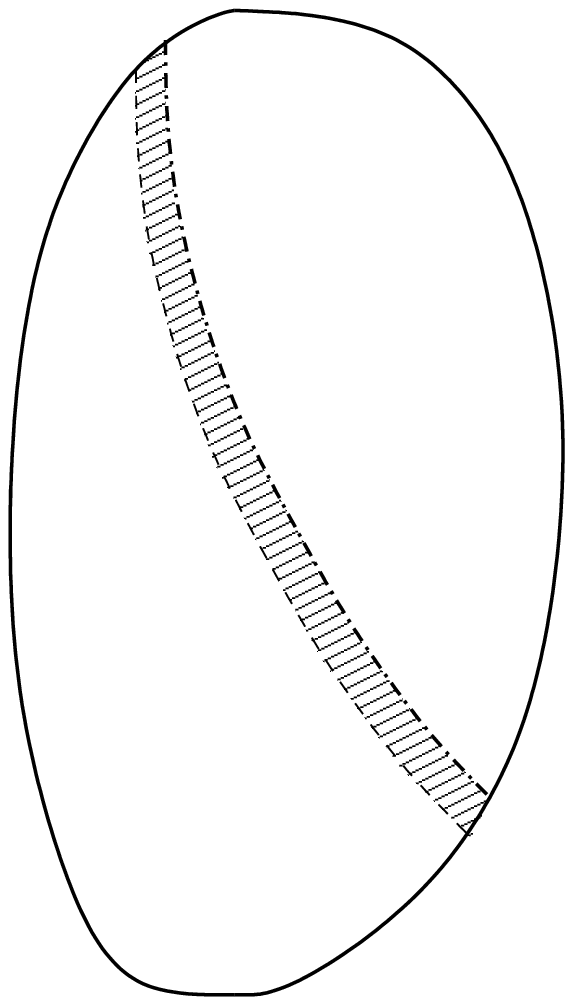}} \put(817,165){$\bullet$} \put(794,160){$\bm{x}_0$} \put(798,290){\line(1,0){12}} \put(812,287){{\tiny shifted}} \put(812,274){{\tiny level}} \put(812,261){{\tiny line}} \put(780,100){$\mathcal{A}_-$} \put(860,220){$\mathcal{A}_+$} \put(860,120){\line(1,0){20}} \put(880,114){$\varDelta$} \put(720,0){(c)} \end{picture} \caption{\label{fi-delta}\textbf{Left to right:} \textbf{(a)} Area difference $\varDelta_1$ in an asymmetric amoeba with straight level lines. -- \textbf{(b)} Area difference $\varDelta_2$ in a symmetric amoeba with curved level lines. -- \textbf{(c)} Compensation of the area difference $\varDelta$ by shifting the central level line (schematic). -- From \cite{Welk-ssvm13}.} \end{figure*}

Deviations from conditions (a) and (b) lead to imbalances between $\mathcal{A}_+$ and $\mathcal{A}_-$. The median is determined by the shift of the central level line that is necessary to compensate for the resulting area difference.  The separate area effects of asymmetry of the amoeba, and curvature of $u$'s level lines are of order $\mathcal{O}(\varrho^3)$, while any cross-effects are at least of order $\mathcal{O}(\varrho^4)$, and can be neglected for the purpose of our analysis. Therefore, the two effects can be studied independently by considering the two special cases in which only one of the effects takes place.

\subsubsection{Asymmetry of the amoeba} We start by analysing the effect of asymmetries of the point set $\mathcal{A}$, compare Figure~\ref{fi-delta}(a).  As the amoeba shape is governed by $f$, we will use the $\bm{\zeta}$, $\bm{\chi}$ local coordinates.  For an arbitrary unit vector $\bm{v}=\bigl(\cos(\varphi+\vartheta),\sin(\varphi+\vartheta)\bigr)\transpose$ we have then \begin{align} f_{\bm{v}}(\bm{x}_0) &= \lvert\bm{\nabla}f(\bm{x}_0)\rvert\cos\vartheta\;, \label{fv} \\ \bm{v}\transpose\,\mathrm{D}^2f(\bm{x}_0)\,\bm{v} &= f_{\bm{\zeta\zeta}}\sin^2\vartheta +2\,f_{\bm{\zeta\chi}}\cos\vartheta\,\sin\vartheta +f_{\bm{\chi\chi}}\cos^2\vartheta \end{align} which can be inserted into \eqref{xipm-v} to obtain $z_\pm(\varphi+\vartheta):=z_\pm(\bm{v})$.

Consider now the case in which $u$ has straight level lines; remember that $\varphi+\alpha$ is the direction angle of its gradient direction.  Since the amoeba shape is given by $z_\pm(\bm{v})$ in polar coordinates, the areas of $\mathcal{A}_+$ and $\mathcal{A}_-$ can be written down using the standard integral for the area enclosed by a function graph in polar coordinates as \begin{align} \lvert\mathcal{A}_{\pm}\rvert &= \int\limits_{\varphi+\alpha-\pi/2}^{\varphi+\alpha+\pi/2} \frac12 z_{\pm}(\vartheta)^2\dd\vartheta\;, \end{align} such that the sought area difference is then obtained as \begin{align} \varDelta_1&:= \lvert\mathcal{A}_+\rvert-\lvert\mathcal{A}_-\rvert = \int\limits_{\varphi+\alpha-\pi/2}^{\varphi+\alpha+\pi/2} \bigl(z_+(\vartheta)-z_-(\vartheta)\bigr)\, \frac{z_+(\vartheta)+z_-(\vartheta)}2\dd\vartheta +\LandauO(\varrho^4) \;.  \label{Delta1} \end{align} The integral on the right-hand side equals \begin{align} & -\varrho^3 \int\limits_{\alpha-\pi/2}^{\alpha+\pi/2} \frac{\nu'(\lvert\bm{\nabla}f(\bm{x}_0)\rvert\cos\vartheta)} {\nu(\lvert\bm{\nabla}f(\bm{x}_0)\rvert\cos\vartheta)^4} \left( f_{\bm{\zeta\zeta}}\,\sin^2\vartheta +2\,f_{\bm{\zeta\chi}}\,\cos\vartheta\,\sin\vartheta +f_{\bm{\chi\chi}}\,\cos^2\vartheta\right) \dd\vartheta\;, \end{align} thus we have \begin{align} \varDelta_1 &= -\varrho^3\left( J_1\,f_{\bm{\zeta\zeta}}+2\,J_2\,f_{\bm{\zeta\chi}}+J_3\,f_{\bm{\chi\chi}} \right) +\LandauO(\varrho^4) \label{Del1J1J2J3} \end{align} where $J_1$, $J_2$, $J_3$ are as stated in Theorem~\ref{thm}.

\subsubsection{Curvature of the level lines} The second source of area imbalance between $\mathcal{A}_+$ and $\mathcal{A}_-$ is the curvature of the level line of $u$ through $\bm{x}_0$.  To study this contribution, we consider the case in which the amoeba is symmetric, such that the level line curvature is the single source of area imbalance.

Using the $\bm{\xi}$, $\bm{\eta}$ local coordinates pertaining to $u$, the level line curvature equals $u_{\bm{\xi\xi}}/(2\,\lvert\bm{\nabla}u\rvert)$.  The resulting area difference is \begin{align} \varDelta_2 &:= \lvert\mathcal{A}_+\rvert-\lvert\mathcal{A}_-\rvert = -2\int\limits_{-z_-(\varphi+\alpha+\pi/2)}^{z_+(\varphi+\alpha+\pi/2)} -\frac{u_{\bm{\xi\xi}}}{2\,\lvert\bm{\nabla}u\rvert}z^2\dd z +\LandauO(\varrho^4) \notag\\* &= \frac23\,\frac{u_{\bm{\xi\xi}}}{\lvert\bm{\nabla}u\rvert}\, \frac{\varrho^3} {\nu(\lvert\bm{\nabla}f\rvert\sin\alpha)^3} + \LandauO(\varrho^4) \;.  \label{Del2} \end{align}

\subsection{Median calculation}\label{ssec-median} We return now to the general situation in which both effects discussed in Subsection~\ref{ssec-imbalance} occur, making the area difference between $\mathcal{A}_+$ and $\mathcal{A}_-$ equal $\varDelta_1+\varDelta_2$ up to higher order terms.

As the median $\mu$ of $u$ within $\mathcal{A}$ belongs to the level line of $u$ that bisects the area of the amoeba, the difference $\mu-u(\bm{x}_0)$ corresponds to a shift of the central level line that compensates the area difference $\varDelta_1+\varDelta_2$. This compensation is obtained when \begin{align} & 2\,\frac{\mu-u(\bm{x}_0)}{\lvert\bm{\nabla}u\rvert} \bigl(z_+(\varphi+\alpha+\pi/2)+z_-(\varphi+\alpha+\pi/2)\bigr) = \varDelta_1+\varDelta_2 + \LandauO(\varrho^4)\;.  \label{compens} \end{align} Inserting \eqref{xipm-v} and \eqref{fv} on the left-hand side, and \eqref{Del1J1J2J3} and \eqref{Del2} on the right-hand side takes \eqref{compens} into \begin{align} & 2\,\frac{\mu-u(\bm{x}_0)}{\lvert\bm{\nabla}u\rvert} \,\frac{2\,\varrho}{\nu(\lvert\bm{\nabla}f(\bm{x}_0)\rvert\sin\alpha)} \notag\\*&\quad{} = \varrho^3 \biggl( -J_1\,f_{\bm{\zeta\zeta}}-2\,J_2\,f_{\bm{\zeta\chi}}-J_3\,f_{\bm{\chi\chi}} + \frac{2\,u_{\bm{\xi\xi}}}{3\,\lvert\bm{\nabla}u\rvert\, \nu(\lvert\bm{\nabla}f\rvert\sin\alpha)^3} \biggr) + \LandauO(\varrho^4)\;.  \label{compens2} \end{align} Solving for $\mu$ gives \begin{align} \mu &= u(\bm{x}_0)+ \frac{\varrho^2}{6} \biggl( \frac{u_{\bm{\xi\xi}}} {\nu(\lvert\bm{\nabla}f\rvert\,\sin\alpha)^2} -\frac32\,\nu(\lvert\bm{\nabla}f\rvert\,\sin\alpha)\, \lvert\bm{\nabla}u\rvert \times{}\notag\\*&\quad{}\times\Bigl(J_1(\lvert\bm{\nabla}f(\bm{x})\rvert,\alpha)\,f_{\bm{\zeta\zeta}} +2\,J_2(\lvert\bm{\nabla}f(\bm{x})\rvert,\alpha)\,f_{\bm{\zeta\chi}} +J_3(\lvert\bm{\nabla}f(\bm{x})\rvert,\alpha)\,f_{\bm{\chi\chi}}\Bigr) \biggr)  + \LandauO(\varrho) \;, \end{align} which is the claimed explicit time step for \eqref{genaacmed}.  This concludes the proof of Theorem~\ref{thm}.

\section{Special Cases}\label{sec-spec} In the following we relate Theorem~\ref{thm} to earlier results referring to more specialised configurations.

\subsection{$L^2$ and $L^1$ Amoeba Metrics} For general $\nu$, the integrals $J_1$, $J_2$, and $J_3$ in Theorem~\ref{thm} can often only be treated numerically.  For specific amoeba norms, however, the integrals can be evaluated in closed form. The following corollary states the PDE for amoeba active contours in the case of the $L^2$ amoeba metric.

\begin{corollary}\label{cor-aacl2} Amoeba median filtering of a smooth function $u$ governed by amoebas generated from $f$ with amoeba radius $\varrho$ and $L^2$ amoeba norm asymptotically approximates the PDE \begin{align} u_t &= \frac{u_{\bm{\xi\xi}}} {1+\lvert\bm{\nabla}f\rvert^2\,\sin^2\alpha} \notag\\*&\quad{} -\frac{\lvert\bm{\nabla}f\rvert\,\lvert\bm{\nabla}u\rvert} {1+\lvert\bm{\nabla}f\rvert^2\,\sin^2\alpha}\cdot \Biggl( \frac{ f_{\bm{\zeta\zeta}}\, \cos^3\alpha} {1+\lvert\bm{\nabla}f\rvert^2} +2\, f_{\bm{\zeta\chi}}\, \sin^3\alpha \notag\\*&\qquad{} +\frac{ f_{\bm{\chi\chi}}\, \cos\alpha\,\bigl( 2+\sin^2\alpha+3\,\lvert\bm{\nabla}f\rvert^2\, \sin^2\alpha\bigr)} {\bigl(1+\lvert\bm{\nabla}f\rvert^2\bigr)^2} \Biggr) \label{l2aacmed} \end{align} in the sense of Theorem~\ref{thm}.  \end{corollary}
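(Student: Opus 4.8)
The plan is to specialise Theorem~\ref{thm} by inserting $\nu(s)=\sqrt{1+s^2}$, so that $\nu'(s)=s/\sqrt{1+s^2}$, and then to evaluate the three integrals \eqref{J1}--\eqref{J3} in closed form; everything else in \eqref{genaacmed} carries over by direct substitution. Writing $s=\lvert\bm\nabla f\rvert$ for brevity, the common factor of the three integrands becomes $\nu'(s\cos\vartheta)/\nu(s\cos\vartheta)^4 = s\cos\vartheta/(1+s^2\cos^2\vartheta)^{5/2}$. The two global ingredients are immediate: $\nu(s\sin\alpha)^2 = 1+s^2\sin^2\alpha$ turns the first summand of \eqref{genaacmed} into $u_{\bm{\xi\xi}}/(1+\lvert\bm\nabla f\rvert^2\sin^2\alpha)$, and the prefactor $-\tfrac32\,\nu(s\sin\alpha)=-\tfrac32\sqrt{1+s^2\sin^2\alpha}$ will later be absorbed by the $E^{3/2}$ coming from the integrals. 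Thus the entire task reduces to computing $J_1$, $J_2$, $J_3$.

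Second, I would find elementary antiderivatives for the three integrands. Abbreviating $D(\vartheta)=1+s^2\cos^2\vartheta$, the cleanest is $J_2$: using $\cos^2\vartheta=(D-1)/s^2$ one splits its integrand as $\tfrac1s\sin\vartheta\,(D^{-3/2}-D^{-5/2})$ and integrates with $u=\cos\vartheta$, or one simply verifies that $-s\cos^3\vartheta/(3D^{3/2})$ differentiates to the integrand. For $J_1$ I would use that $\tfrac{\mathrm{d}}{\mathrm{d}\vartheta}\bigl(\sin\vartheta\,D^{-3/2}\bigr)=\cos\vartheta\,D^{-3/2}+3s^2\sin^2\vartheta\cos\vartheta\,D^{-5/2}$, which expresses the wanted integral of $\sin^2\vartheta\cos\vartheta\,D^{-5/2}$ through the standard integral $\int\cos\vartheta\,D^{-3/2}\dd\vartheta=\sin\vartheta/((1+s^2)D^{1/2})$ (obtained by the substitution $w=s\sin\vartheta$, $D=(1+s^2)-w^2$). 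The same substitution handles $J_3$: after writing $\cos^2\vartheta=(D-1)/s^2$ its integrand becomes $\tfrac1s\cos\vartheta\,(D^{-3/2}-D^{-5/2})$, and both pieces follow from the tabulated forms $\int((1+s^2)-w^2)^{-3/2}\dd w$ and $\int((1+s^2)-w^2)^{-5/2}\dd w$.

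Third comes the evaluation at the endpoints $\vartheta=\alpha\pm\pi/2$, which is where the structure of the answer emerges. At both endpoints one has $\cos\vartheta=\mp\sin\alpha$ and $\sin\vartheta=\pm\cos\alpha$, so $D$ takes the \emph{same} value $E:=1+s^2\sin^2\alpha$ at each end, while the antiderivatives are odd in $\sin\vartheta$ (resp.\ in $\cos\vartheta$ for $J_2$). Consequently the two boundary terms add instead of cancelling, each integral acquiring a factor $2$ and a denominator $E^{3/2}$. After the algebraic simplifications $1/E-1/(1+s^2)=s^2\cos^2\alpha/\bigl(E(1+s^2)\bigr)$ for $J_1$ and $3\sin^2\alpha+2\cos^2\alpha=2+\sin^2\alpha$ for $J_3$, this yields
\begin{align}
J_1 &= \frac23\,\frac{s\cos^3\alpha}{(1+s^2)\,E^{3/2}}, &
J_2 &= \frac23\,\frac{s\sin^3\alpha}{E^{3/2}}, \notag\\
J_3 &= \frac23\,\frac{s\cos\alpha\,\bigl(2+\sin^2\alpha+3s^2\sin^2\alpha\bigr)}{(1+s^2)^2\,E^{3/2}}. \notag
\end{align}

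Finally I would substitute these back into \eqref{genaacmed}. Multiplying the bracket by $-\tfrac32\sqrt E\,\lvert\bm\nabla u\rvert$ cancels one power of $E^{1/2}$ against $E^{3/2}$ and turns the constants $-\tfrac32\cdot\tfrac23$ into $-1$, producing exactly the common prefactor $-\lvert\bm\nabla f\rvert\,\lvert\bm\nabla u\rvert/(1+\lvert\bm\nabla f\rvert^2\sin^2\alpha)$ of \eqref{l2aacmed} together with its three bracket terms (the $2J_2$ giving $2\sin^3\alpha$, and $J_3$ giving the $f_{\bm{\chi\chi}}$ coefficient). I expect the only real obstacle to be the bookkeeping in $J_3$: keeping the half-integer powers and the factor $3(1+s^2)E$ organised correctly so that the numerator collapses to $s^2(2+\sin^2\alpha+3s^2\sin^2\alpha)$; the integrations themselves and the other two cases are routine.
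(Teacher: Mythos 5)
Your proposal is correct and follows essentially the same route as the paper's proof: specialise $\nu(s)=\sqrt{1+s^2}$ in Theorem~\ref{thm}, evaluate $J_1$, $J_2$, $J_3$ in closed form via elementary antiderivatives (yours, obtained through the substitution $w=s\sin\vartheta$ and standard reductions, agree with the ones the paper simply states, e.g.\ your $J_1$ antiderivative equals $\sin^3\vartheta/\bigl(3(1+s^2)(1+s^2\cos^2\vartheta)^{3/2}\bigr)$ after the identity $(1+s^2)-D=s^2\sin^2\vartheta$), exploit that $D=E$ at both endpoints $\vartheta=\alpha\pm\pi/2$ while the antiderivatives are odd there, and substitute back into \eqref{genaacmed}. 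All three closed forms and the final bookkeeping match the paper exactly.
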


\begin{remark} This corollary reproduces the statement of Theorem~1 in~\cite{Welk-ssvm13}.  \end{remark}

\begin{proof} For the $L^2$ amoeba norm, one has $\nu(s)=\sqrt{1+s^2}$ and thus $\nu'(s)=s/\sqrt{1+s^2}$.  In this case, the integrals $J_1$, $J_2$, $J_3$ from Theorem~\ref{thm} reduce to \begin{align} J_1(s,\alpha)&= \int\limits_{\alpha-\pi/2}^{\alpha+\pi/2} \frac{s\,\sin^2\vartheta\,\cos\vartheta} {(1+s^2\cos^2\vartheta)^{5/2}}\dd\vartheta \notag\\*&= s\,\left[\frac{\sin^3\vartheta}{3(1+s^2)(1+s^2\cos^2\vartheta)^{3/2}} \right]^{\vartheta=\alpha+\pi/2}_{\vartheta=\alpha-\pi/2} \notag\\*&= \frac23\,s\, \frac{\cos^3\alpha} {\bigl(1+s^2\bigr) \bigl(1+s^2\sin^2\alpha\bigr)^{3/2}}\;, \\ J_2(s,\alpha)&= \int\limits_{\alpha-\pi/2}^{\alpha+\pi/2} \frac{s\,\sin\vartheta\,\cos^2\vartheta} {(1+s^2\cos^2\vartheta)^{5/2}}\dd\vartheta \notag\\*&= s\,\left[\frac{-\cos^3\vartheta}{3(1+s^2\cos^2\vartheta)^{3/2}} \right]^{\vartheta=\alpha+\pi/2}_{\vartheta=\alpha-\pi/2} \notag\\*&= \frac23\,s\, \frac{\sin^3\alpha} {\bigl(1+s^2\sin^2\alpha\bigr)^{3/2}}\;, \\ J_3(s,\alpha)&= \int\limits_{\alpha-\pi/2}^{\alpha+\pi/2} \frac{s\,\cos^3\vartheta} {(1+s^2\cos^2\vartheta)^{5/2}}\dd\vartheta \notag\\*&= s\,\left[\frac{\sin\vartheta(2+\cos^2\vartheta+3\,s^2\cos^2\vartheta)} {3(1+s^2)^2(1+s^2\cos^2\vartheta)^{3/2}} \right]^{\vartheta=\alpha+\pi/2}_{\vartheta=\alpha-\pi/2} \notag\\*&= \frac23\,s\, \frac{\cos\alpha\,\bigl( 2+\sin^2\alpha+ 3\,s^2\sin^2\alpha \bigr)} {\bigl(1+s^2\bigr)^2 \bigl(1+s^2\,\sin^2\alpha\bigr)^{3/2}} \;.  \end{align} Inserting these into \eqref{Del1J1J2J3} and \eqref{compens} yields the claim.  \end{proof}

\begin{figure*}[t!] \centerline{\begin{tabular}{@{}cccc@{}} \includegraphics[width=0.24\textwidth]{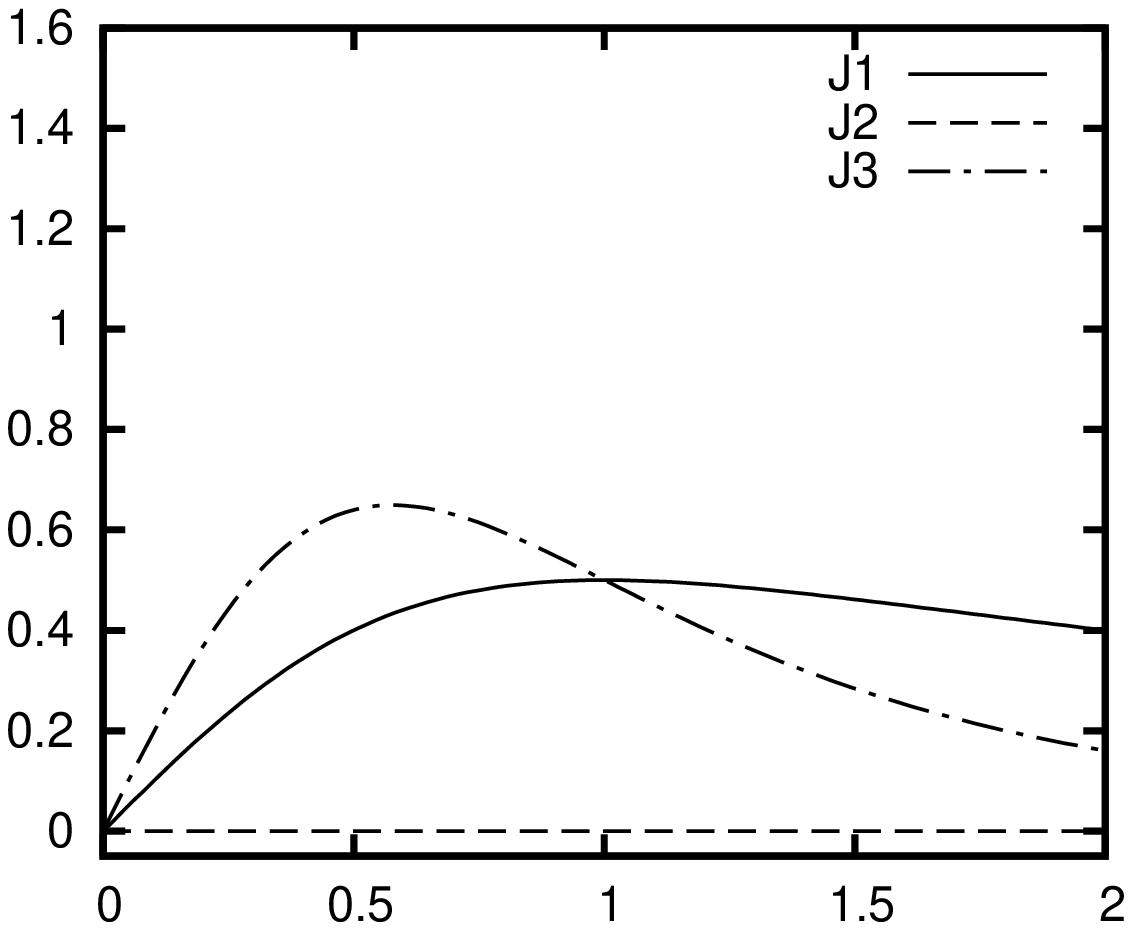}& \includegraphics[width=0.24\textwidth]{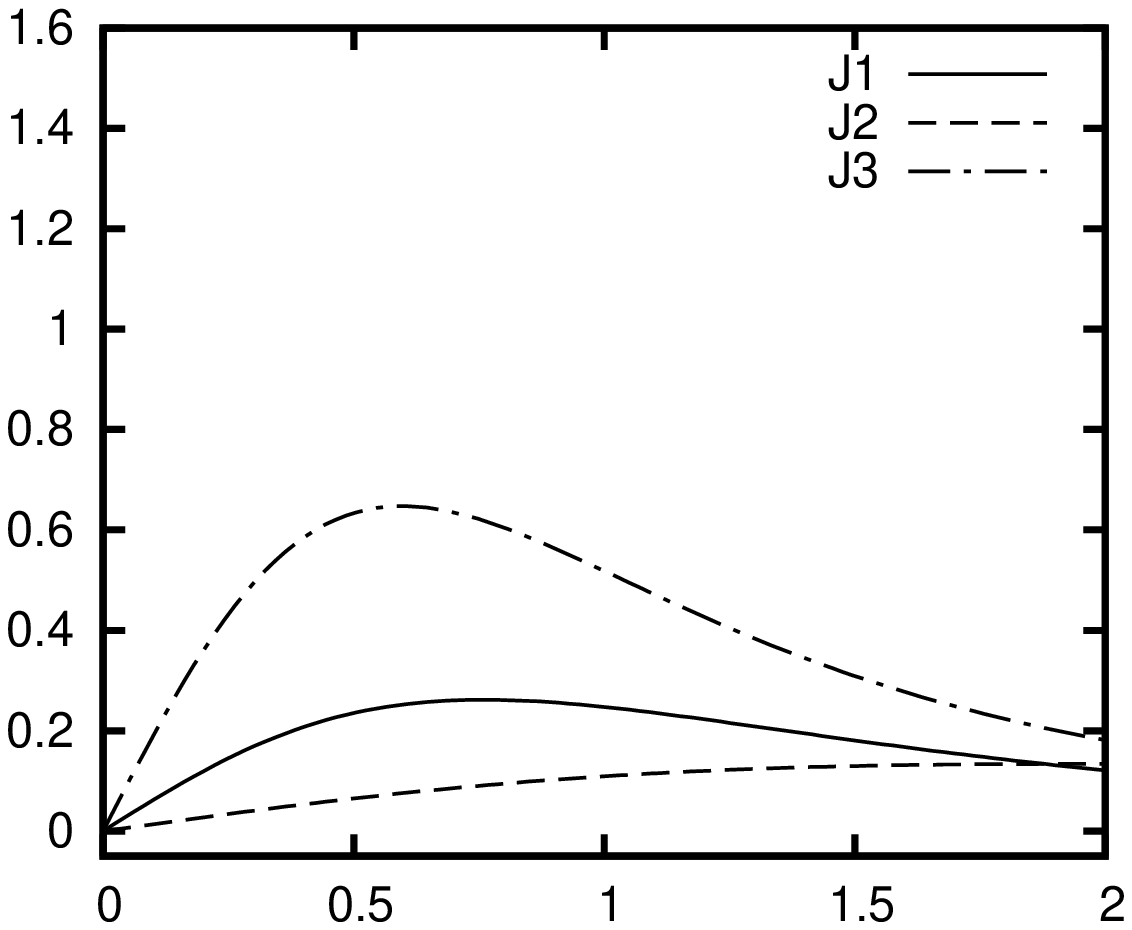}& \includegraphics[width=0.24\textwidth]{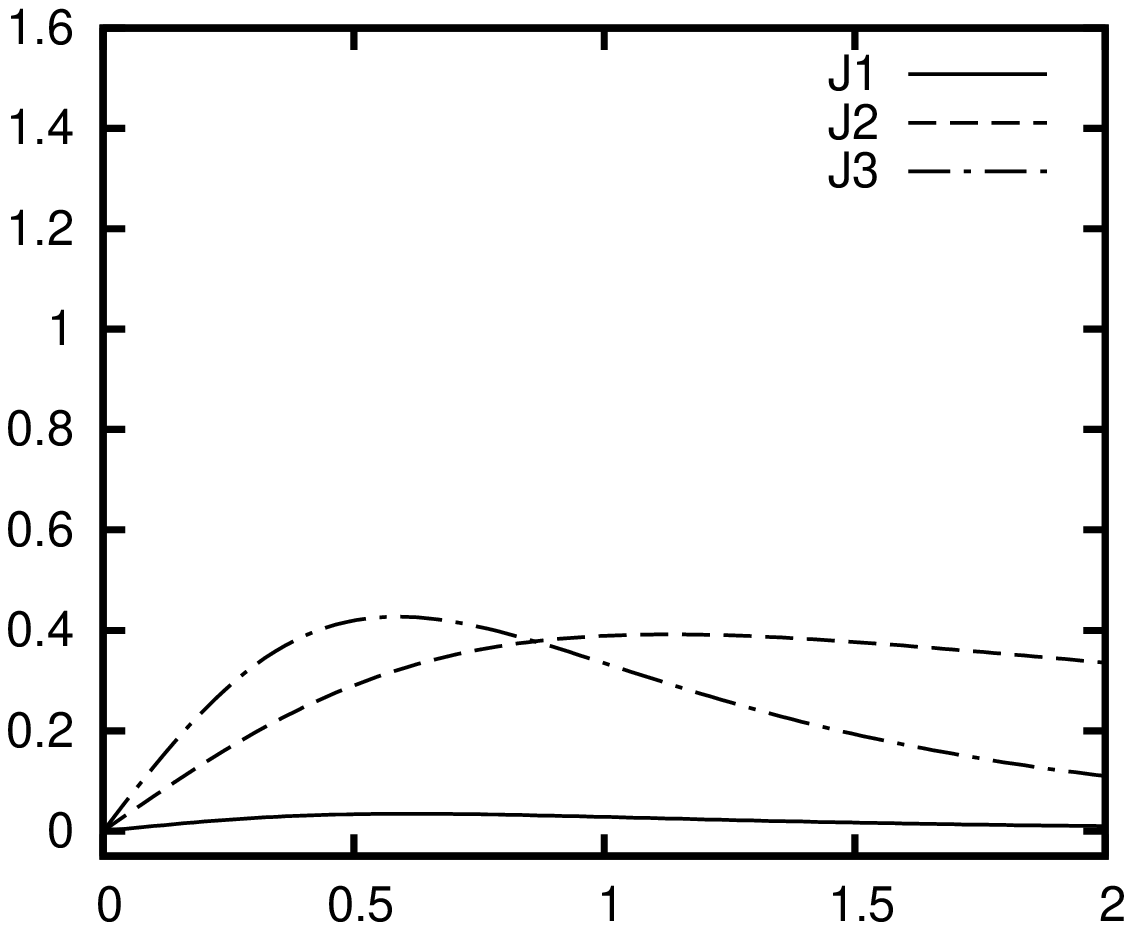}& \includegraphics[width=0.24\textwidth]{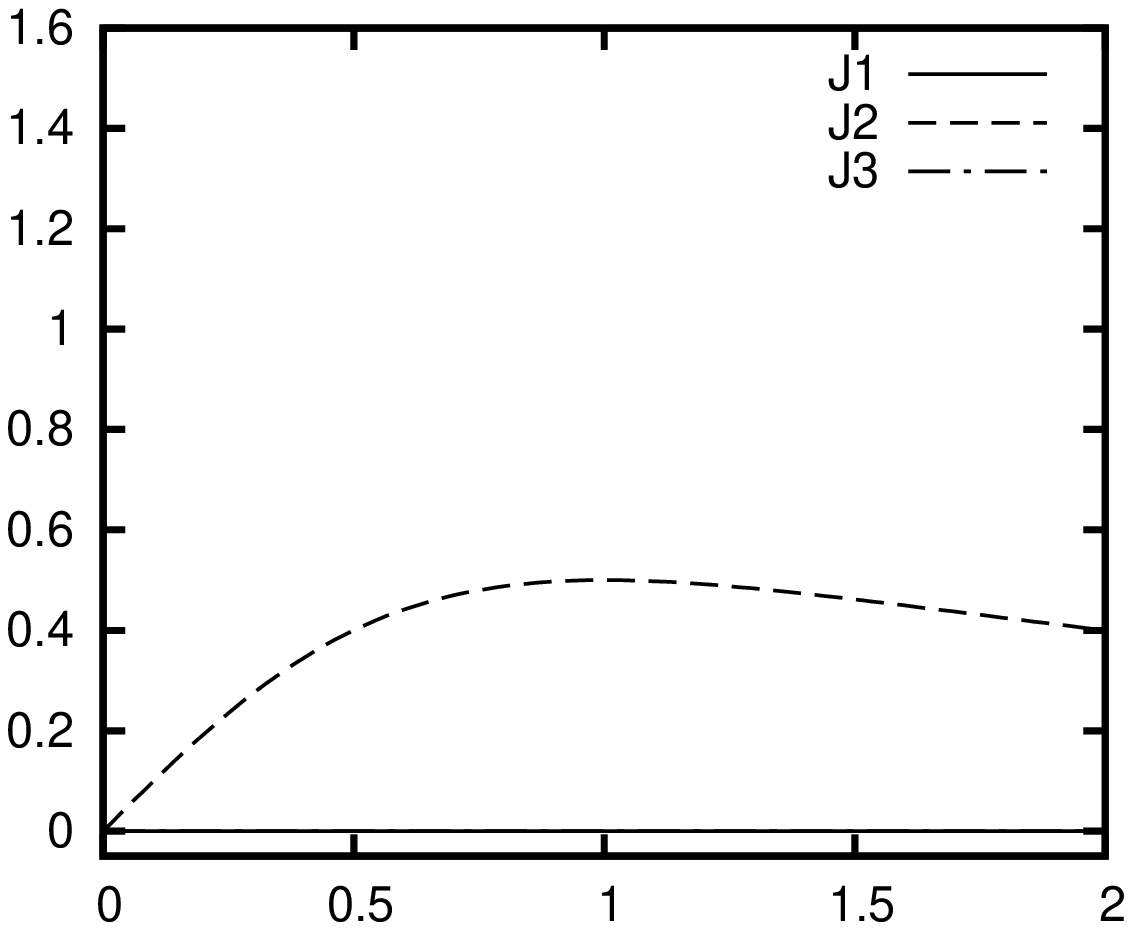}\\ \parbox[t]{0.24\textwidth}{\centering\small\textbf{(a)} $\alpha=0$, $s\in[0,2]$}& \parbox[t]{0.24\textwidth}{\centering\small\textbf{(b)} $\alpha=\pi/6$, $s\in[0,2]$}& \parbox[t]{0.24\textwidth}{\centering\small\textbf{(c)} $\alpha=\pi/3$, $s\in[0,2]$}& \parbox[t]{0.24\textwidth}{\centering\small\textbf{(d)} $\alpha=\pi/2$, $s\in[0,2]$} \end{tabular}} \caption{\label{fi-j123-l2-s} Coefficients $\frac32(1+s^2\sin^2\alpha)^{1/2}J_k$, $k=1,2,3$ for $f_{\bm{\zeta\zeta}}$, $f_{\bm{\zeta\chi}}$, $f_{\bm{\chi\chi}}$, respectively, in \eqref{genaacmed} with $L^2$ amoeba metric for four fixed angles $\alpha$ and $s\in[0,2]$.} \end{figure*}

\begin{figure*}[t!] \centerline{\begin{tabular}{@{}cccc@{}} \includegraphics[width=0.24\textwidth]{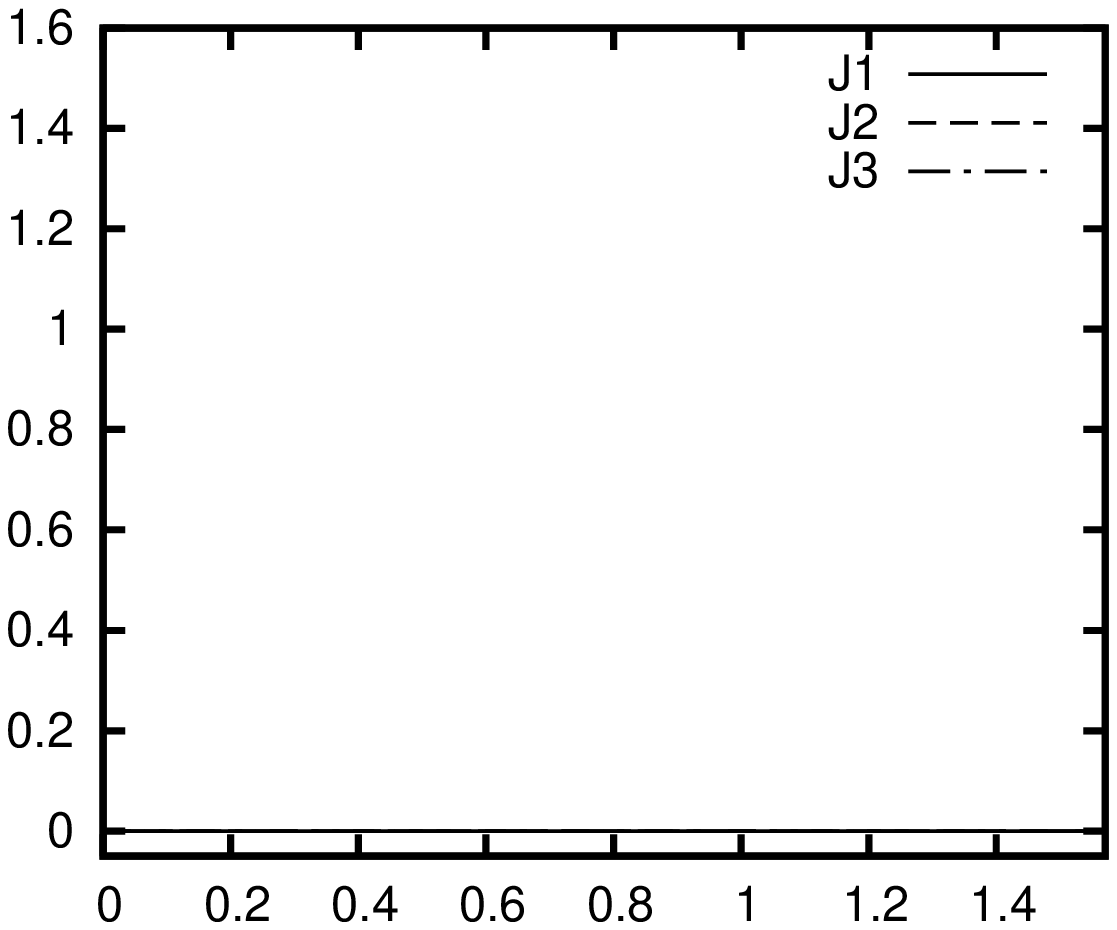}& \includegraphics[width=0.24\textwidth]{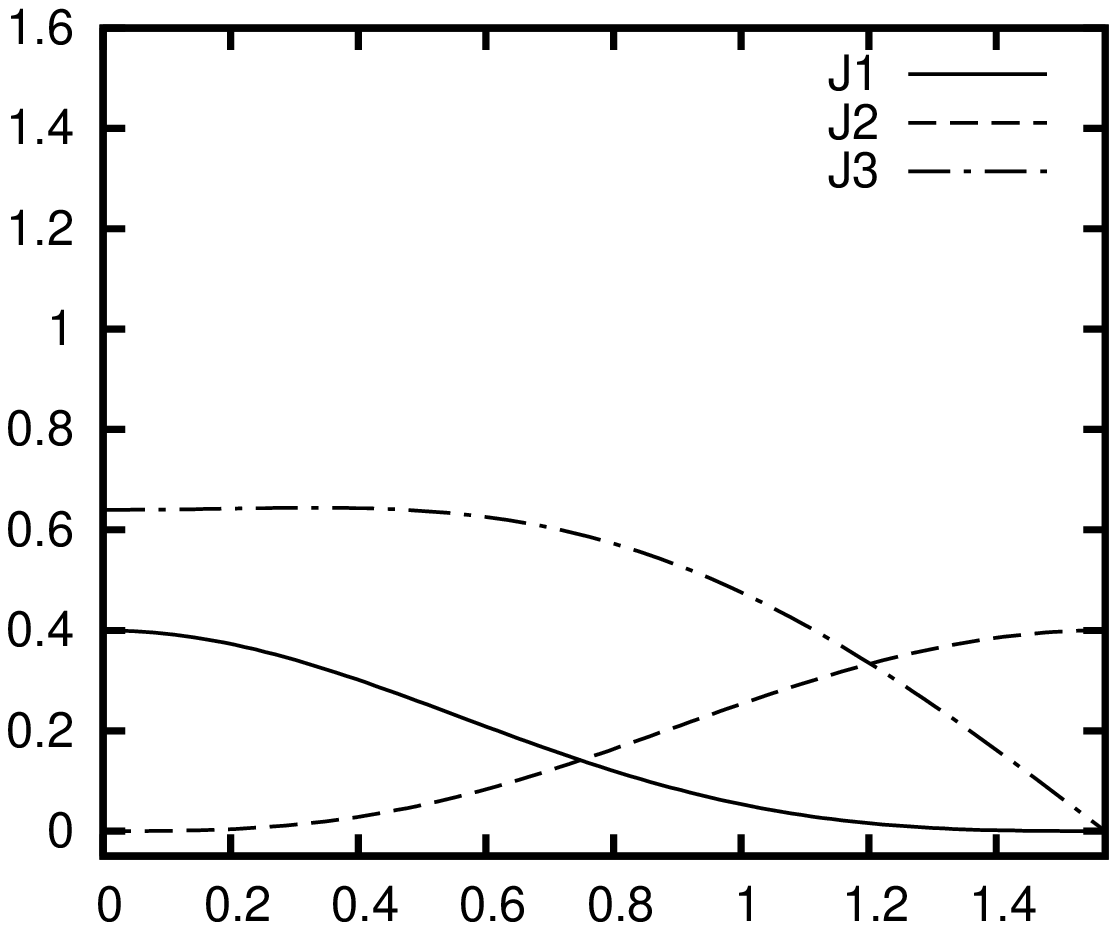}& \includegraphics[width=0.24\textwidth]{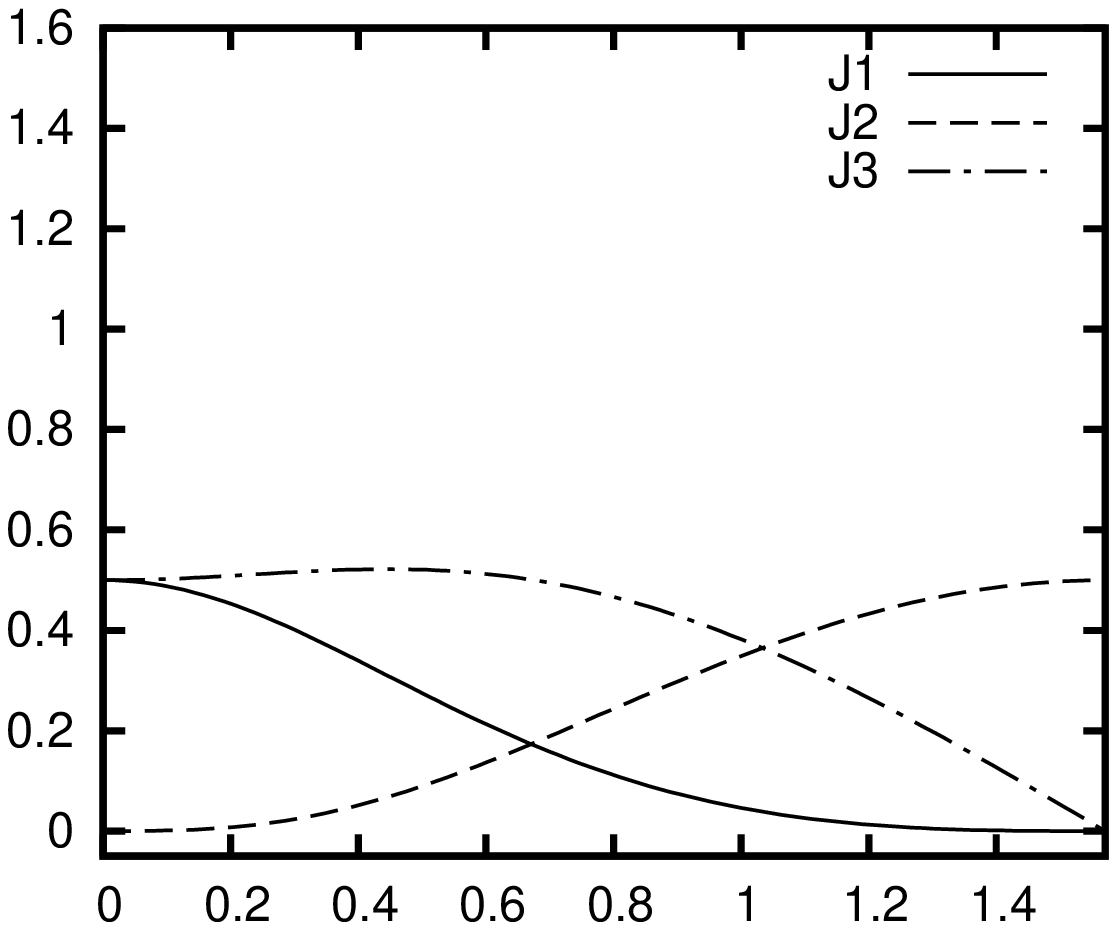}& \includegraphics[width=0.24\textwidth]{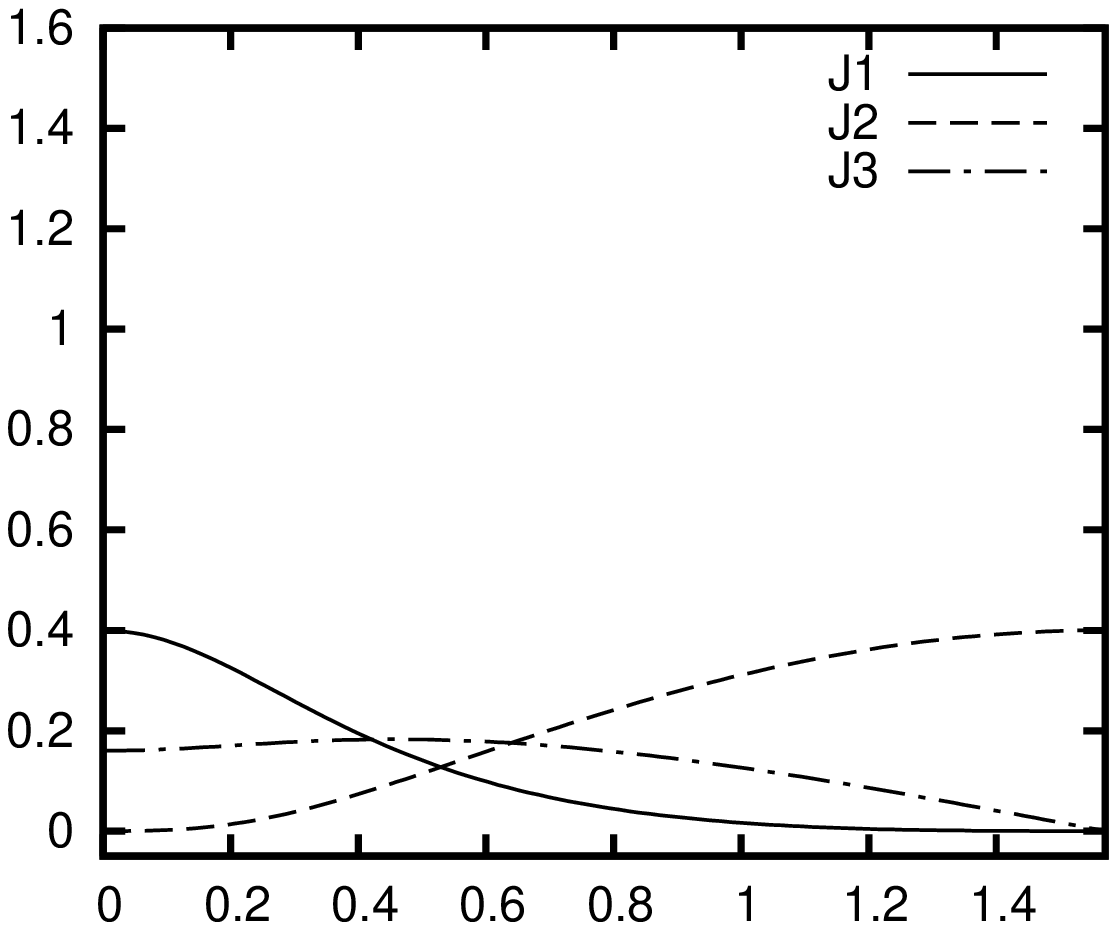}\\ \parbox[t]{0.24\textwidth}{\centering\small\textbf{(a)} $\alpha\in[0,\pi/2]$, $s=0$}& \parbox[t]{0.24\textwidth}{\centering\small\textbf{(b)} $\alpha\in[0,\pi/2]$, $s=1/2$}& \parbox[t]{0.24\textwidth}{\centering\small\textbf{(c)} $\alpha\in[0,\pi/2]$, $s=1$}& \parbox[t]{0.24\textwidth}{\centering\small\textbf{(d)} $\alpha\in[0,\pi/2]$, $s=2$} \end{tabular}} \caption{\label{fi-j123-l2-alpha} Coefficients $\frac32(1+s^2\sin^2\alpha)^{1/2}J_k$, $k=1,2,3$ for $f_{\bm{\zeta\zeta}}$, $f_{\bm{\zeta\chi}}$, $f_{\bm{\chi\chi}}$, respectively, in \eqref{genaacmed} with $L^2$ amoeba metric for four fixed gradient magnitudes $s=\lvert\bm{\nabla}f\rvert$ and $\alpha\in[0,\pi/2]$.} \end{figure*}

In Figs.~\ref{fi-j123-l2-s} and \ref{fi-j123-l2-alpha} the behaviour of the coefficients for $f_{\bm{\zeta\zeta}}$, $f_{\bm{\zeta\chi}}$ and $f_{\bm{\chi\chi}}$ for variable $s$ and $\alpha$ is shown.

A similar result for the $L^1$ norm follows.

\begin{corollary}\label{cor-aacl1} Amoeba median filtering of a smooth function $u$ governed by amoebas generated from $f$ with amoeba radius $\varrho$ and $L^1$ amoeba norm asymptotically approximates in the sense of Theorem~\ref{thm} the PDE \eqref{genaacmed} where $J_2$ is given by \begin{align} J_2(s,\alpha) &= \frac{\sin^2\alpha\,(3+\lvert\sin\alpha\rvert)} {3\,(1+s\,\lvert\sin\alpha\rvert)^3}\;, \label{J2L1} \end{align} while $J_1$ and $J_3$ are given for $s>1$ by \begin{align} J_1(s,\alpha) &= \frac1{(s^2-1)^{5/2}} \ln\frac {\sqrt{s+1}\,(1+\lvert\sin\alpha\rvert)+\sqrt{s-1}\,\cos\alpha} {\sqrt{s+1}\,(1+\lvert\sin\alpha\rvert)-\sqrt{s-1}\,\cos\alpha} \notag\\*&\quad{} -\frac{ \left\{ \begin{array}{@{}r@{}} \cos\alpha\bigl((2\,s^3+s)\sin^2\alpha+3\,(s^2+1)\,\lvert\sin\alpha\rvert\\ -2\,s^3+5\,s\bigr) \end{array} \right.  } {3(s^2-1)^2(1+s\,\lvert\sin\alpha\rvert)^2}\;, \label{J1L1sgt1}\\ J_3(s,\alpha) &= \frac{-4\,s^2-1}{2\,(s^2-1)^{7/2}} \ln\frac {\sqrt{s+1}\,(1+\lvert\sin\alpha\rvert)+\sqrt{s-1}\,\cos\alpha} {\sqrt{s+1}\,(1+\lvert\sin\alpha\rvert)-\sqrt{s-1}\,\cos\alpha} \notag\\*&\quad{} +\frac{ \left\{ \begin{array}{@{}l@{}} \cos\alpha\bigl(s\,(2\,s^2+13)\\ \quad{}+3\,(2\,s^4+9\,s^2-1)\,\lvert\sin\alpha\rvert\\ \quad{}+s\,(6\,s^4+10\,s^2-1)\sin^2\alpha\bigr) \end{array} \right.  } {3\,(s^2-1)^3(1+s\,\lvert\sin\alpha\rvert)^3}\;, \label{J3L1sgt1} \end{align} for $s=1$ by \begin{align} J_1(1,\alpha) &= \frac{2\cos^3\alpha\,(4+\lvert\sin\alpha\rvert)} {15\,(1+\lvert\sin\alpha\rvert)^4}\;, \label{J1L1seq1}\\ J_3(1,\alpha) &= \frac{2\cos\alpha\,(8+32\,\lvert\sin\alpha\rvert+52\sin^2\alpha +13\,\lvert\sin^3\alpha\rvert)} {105\,(1+\lvert\sin\alpha\rvert)^4} \label{J3L1seq1} \end{align} and for $0\le s<1$ by \begin{align} J_1(s,\alpha) &= \frac2{(1-s^2)^{5/2}} \arctan\left(\sqrt{\frac{1-s}{1+s}}\, \frac{\cos\alpha}{1+\lvert\sin\alpha\rvert} \right) \notag\\*&\quad{} -\frac{ \left\{ \begin{array}{@{}r@{}} \cos\alpha\bigl((2\,s^3+s)\sin^2\alpha+3\,(s^2+1)\,\lvert\sin\alpha\rvert\\ -2\,s^3+5\,s\bigr) \end{array} \right.  } {3\,(s^2-1)^2(1+s\,\lvert\sin\alpha\rvert)^2}\;, \label{J1L1slt1}\\ J_3(s,\alpha) &= \frac{-8\,s^2-2}{2\,(s^2-1)^{7/2}} \arctan\left(\sqrt{\frac{1-s}{1+s}}\,\frac{\cos\alpha}{1+\sin\alpha} \right) \notag\\*&\quad{} +\frac{ \left\{ \begin{array}{@{}l@{}} \cos\alpha\bigl(s\,(2\,s^2+13)\\ \quad{}+3\,(2\,s^4+9\,s^2-1)\,\lvert\sin\alpha\rvert\\ \quad{}+s\,(6\,s^4+10\,s^2-1)\sin^2\alpha\bigr) \end{array} \right.  } {3\,(s^2-1)^3(1+s\,\lvert\sin\alpha\rvert)^3}\;.  \label{J3L1slt1} \end{align} \end{corollary}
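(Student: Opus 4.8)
The plan is to proceed exactly as in the proof of Corollary~\ref{cor-aacl2}, substituting the $L^1$ data into the integrals \eqref{J1}--\eqref{J3} and evaluating them explicitly. Here $\nu(s)=1+\lvert s\rvert$ gives $\nu'(s)=\sgn s$, so with $s=\lvert\bm{\nabla}f\rvert\ge0$ the integrand factors become $\nu(s\cos\vartheta)^4=(1+s\lvert\cos\vartheta\rvert)^4$ and $\nu'(s\cos\vartheta)=\sgn(\cos\vartheta)$. The new difficulty relative to the smooth $L^2$ case is precisely that $\lvert\cos\vartheta\rvert$ and $\sgn(\cos\vartheta)$ are only piecewise smooth, changing character where $\cos\vartheta$ vanishes inside the integration arc $[\alpha-\pi/2,\alpha+\pi/2]$.

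First I would use the symmetries of the integrands to restrict to $\alpha\in[0,\pi/2]$: under $\vartheta\mapsto-\vartheta$ the integrands of $J_1$, $J_3$ are even while that of $J_2$ is odd, so that $J_1,J_3$ are even and $J_2$ odd in $\alpha$, which together with the evenness of $\cos\vartheta$ accounts for the absolute values $\lvert\sin\alpha\rvert$ and the unsigned $\cos\alpha$ in the stated formulas. On the reduced range the sole sign change of $\cos\vartheta$ inside $[\alpha-\pi/2,\alpha+\pi/2]$ is at $\vartheta=\pi/2$, so I would split there and apply the reflection $\vartheta\mapsto\pi-\vartheta$ to the upper piece. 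For $J_1$ and $J_3$ this folds the two pieces into a single integral over the symmetric arc $[\alpha-\pi/2,\pi/2-\alpha]$, on which $\cos\vartheta\ge0$ so that the denominator is simply $(1+s\cos\vartheta)^4$; for $J_2$ it collapses, after the substitution $w=\cos\vartheta$, to $2\int_0^{\sin\alpha}w\,(1+sw)^{-4}\dd w$. On every piece $1+s\cos\vartheta\ge1$, so all integrals are proper regardless of $s$.

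The integral governing $J_2$ is purely rational and integrates at once to the single closed form \eqref{J2L1}, with no dependence on the position of $s$ relative to $1$. For $J_1$ and $J_3$ I would write the even numerators through $\sin^2\vartheta=1-\cos^2\vartheta$ and express the remaining powers of $\cos\vartheta$ in terms of $1+s\cos\vartheta$, reducing everything to the elementary integrals $\int(1+s\cos\vartheta)^{-n}\dd\vartheta$ for $n=2,3,4$. These are evaluated by the Weierstrass substitution $t=\tan(\vartheta/2)$, under which $1+s\cos\vartheta$ becomes proportional to $(1+s)+(1-s)t^2$ and the arc endpoints $\vartheta=\pm(\pi/2-\alpha)$ map to $t=\pm\cos\alpha/(1+\sin\alpha)$. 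The sign of $1-s$ then dictates the three regimes: positive definite for $s<1$, yielding the $\arctan$ forms \eqref{J1L1slt1},\eqref{J3L1slt1}; degenerate for $s=1$, yielding the rational forms \eqref{J1L1seq1},\eqref{J3L1seq1}; and indefinite, hence factorising, for $s>1$, yielding the logarithmic forms \eqref{J1L1sgt1},\eqref{J3L1sgt1}.

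The obstacle here is not conceptual but the sheer volume of algebra. The reduction formulas for $\int(1+s\cos\vartheta)^{-n}\dd\vartheta$ up to $n=4$ are lengthy, and after inserting the endpoints $t=\pm\cos\alpha/(1+\sin\alpha)$ one must simplify the resulting half-angle expressions so that they collapse into the compact combinations of $\cos\alpha$ and $\lvert\sin\alpha\rvert$ recorded in \eqref{J1L1sgt1}--\eqref{J3L1slt1}. A useful safeguard against errors is to check that the $\arctan$, rational, and logarithmic branches agree in the limit $s\to1$, which cross-verifies the most delicate parts of the computation.
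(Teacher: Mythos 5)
Your plan coincides with the paper's own proof (Appendix~\ref{app-proofcor-aacl1}) in all essentials: insert $\nu(s)=1+\lvert s\rvert$, $\nu'(s)=\sgn s$, split each integral at the sign change of $\cos\vartheta$ at $\vartheta=\pi/2$, fold by reflection onto a symmetric arc (for $J_1$, $J_3$) respectively onto $2\int_{\pi/2-\alpha}^{\pi/2}$, i.e.\ $2\int_0^{\sin\alpha}w\,(1+s\,w)^{-4}\dd w$, for $J_2$, and evaluate the resulting rational trigonometric integrals with three-branch antiderivatives whose endpoint values reduce via $\tan\left(\frac\pi4-\frac\alpha2\right)=\frac{\cos\alpha}{1+\sin\alpha}$; so the verdict is: correct, same approach. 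Two bookkeeping caveats are worth recording. First, the $\vartheta\mapsto-\vartheta$ parity you invoke only reduces the problem to $\alpha\in[0,\pi]$ (the paper's WLOG), not to $[0,\pi/2]$; reaching $[0,\pi/2]$ would additionally require the reflection $\alpha\mapsto\pi-\alpha$, under which $J_1$ and $J_3$ are \emph{odd} rather than even (consistently, \eqref{J1L1sgt1}--\eqref{J3L1slt1} change sign with $\cos\alpha$) — harmless here, since your split-and-fold computation goes through verbatim on all of $[0,\pi]$. Second, your (correct) observation that $J_2$ is odd in $\alpha$ cannot be ``accounted for'' by modulus bars in the manifestly even expression \eqref{J2L1}; moreover, your elementary $w$-integral does not give \eqref{J2L1} as printed but $\sin^2\alpha\,\bigl(3+s\,\lvert\sin\alpha\rvert\bigr)/\bigl(3\,(1+s\,\lvert\sin\alpha\rvert)^3\bigr)$ — the paper's own antiderivative $\frac{3\,s\cos\vartheta+1}{6\,s^2(1+s\cos\vartheta)^3}$ yields the same, and the $s\to0$ limit ($J_2\to\sin^2\alpha$) confirms it — so the factor $3+\lvert\sin\alpha\rvert$ in \eqref{J2L1} is a typo of the corollary rather than an error in your plan, but your claim that the integral ``integrates at once to \eqref{J2L1}'' should be amended accordingly.
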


\begin{proof} See Appendix~\ref{app-proofcor-aacl1}.  \end{proof}

\begin{figure*}[t!] \centerline{\begin{tabular}{@{}cccc@{}} \includegraphics[width=0.24\textwidth]{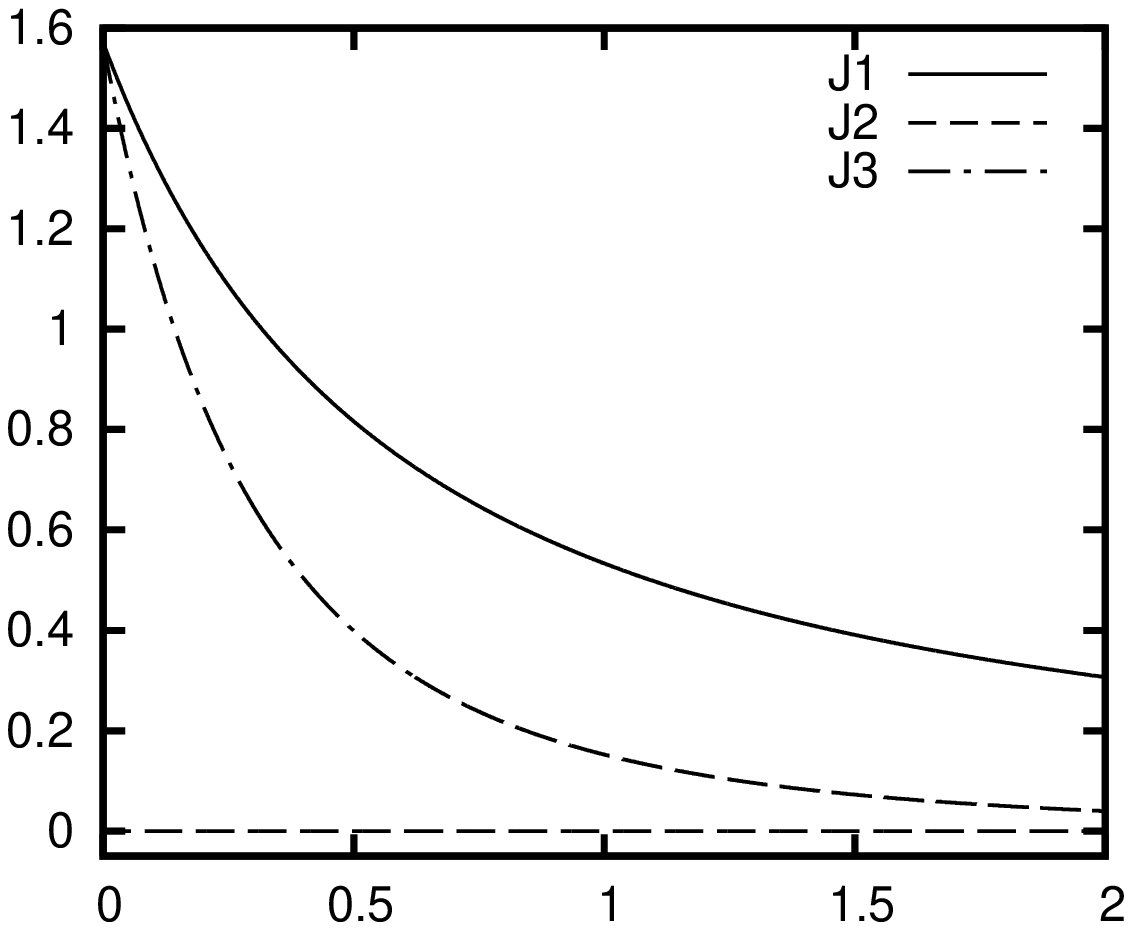}& \includegraphics[width=0.24\textwidth]{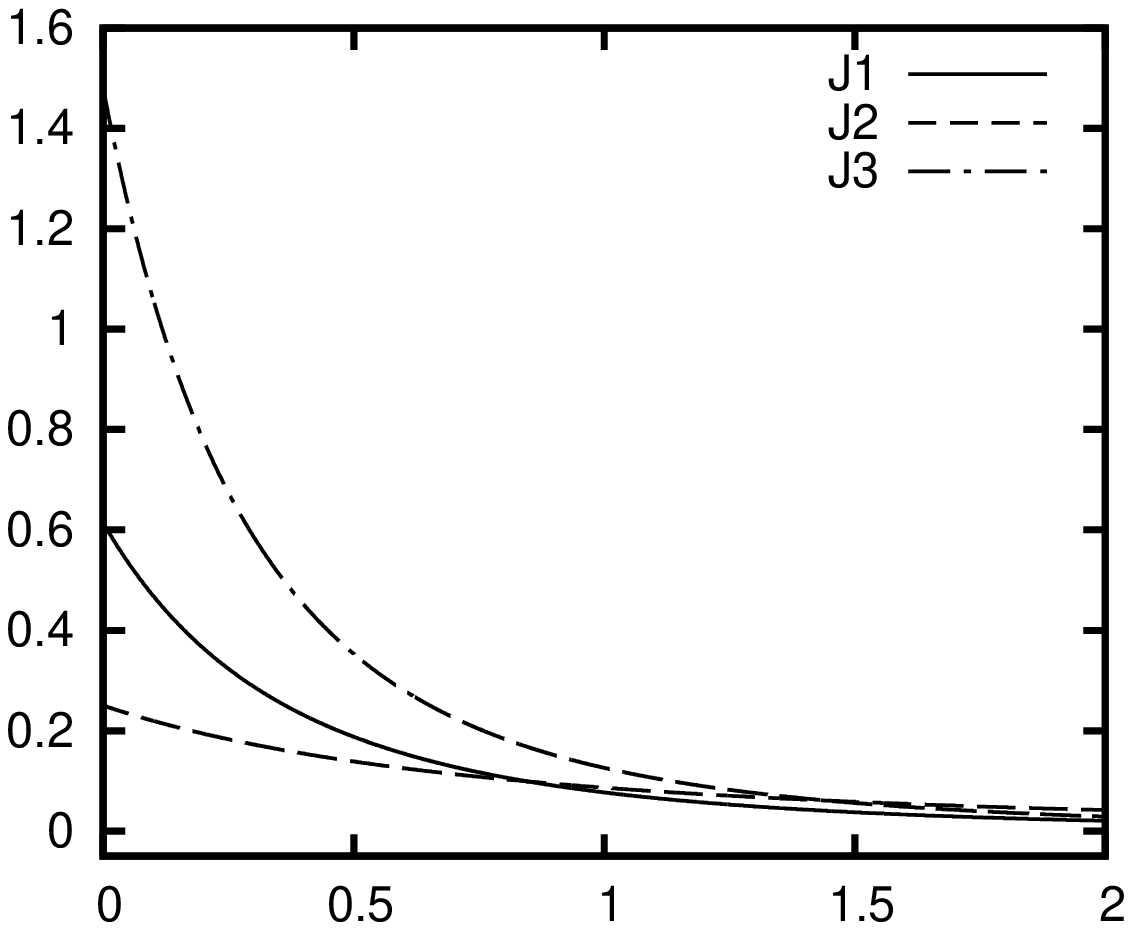}& \includegraphics[width=0.24\textwidth]{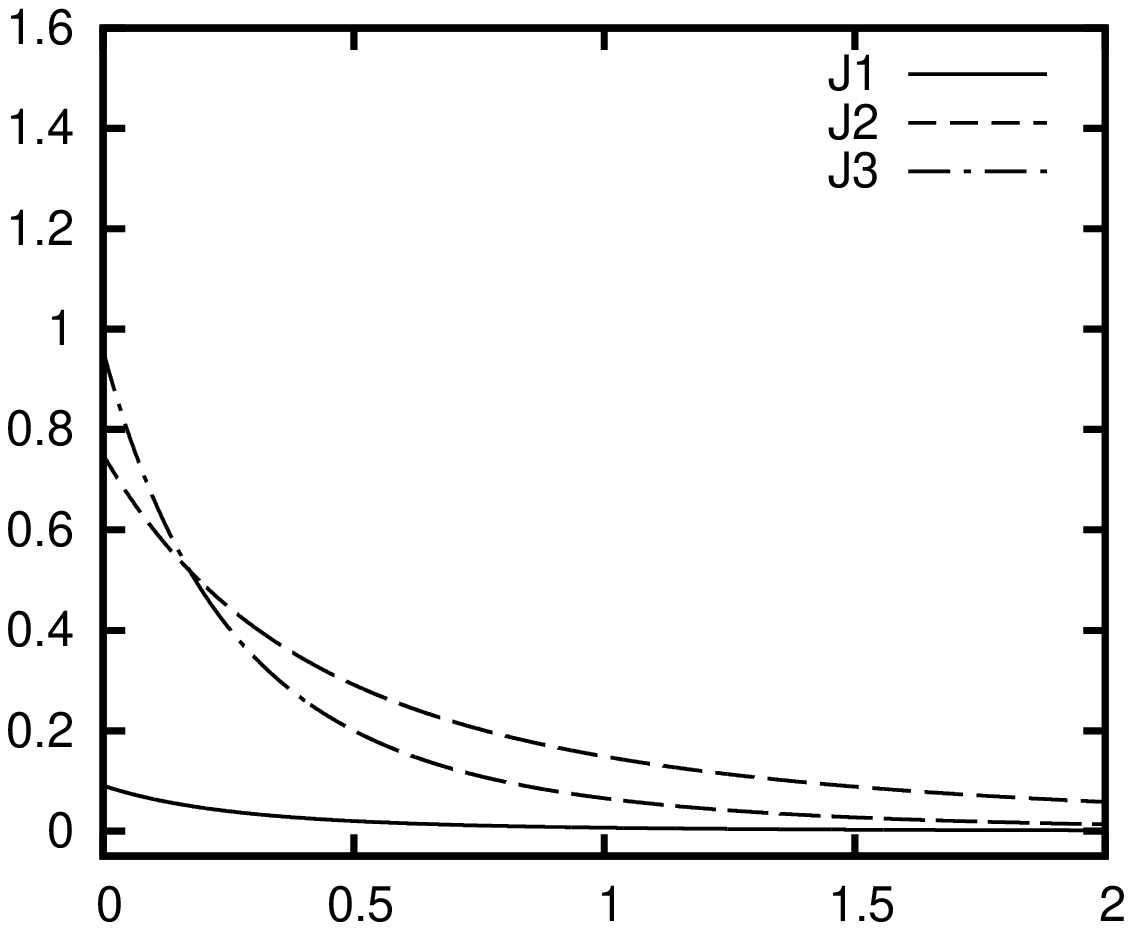}& \includegraphics[width=0.24\textwidth]{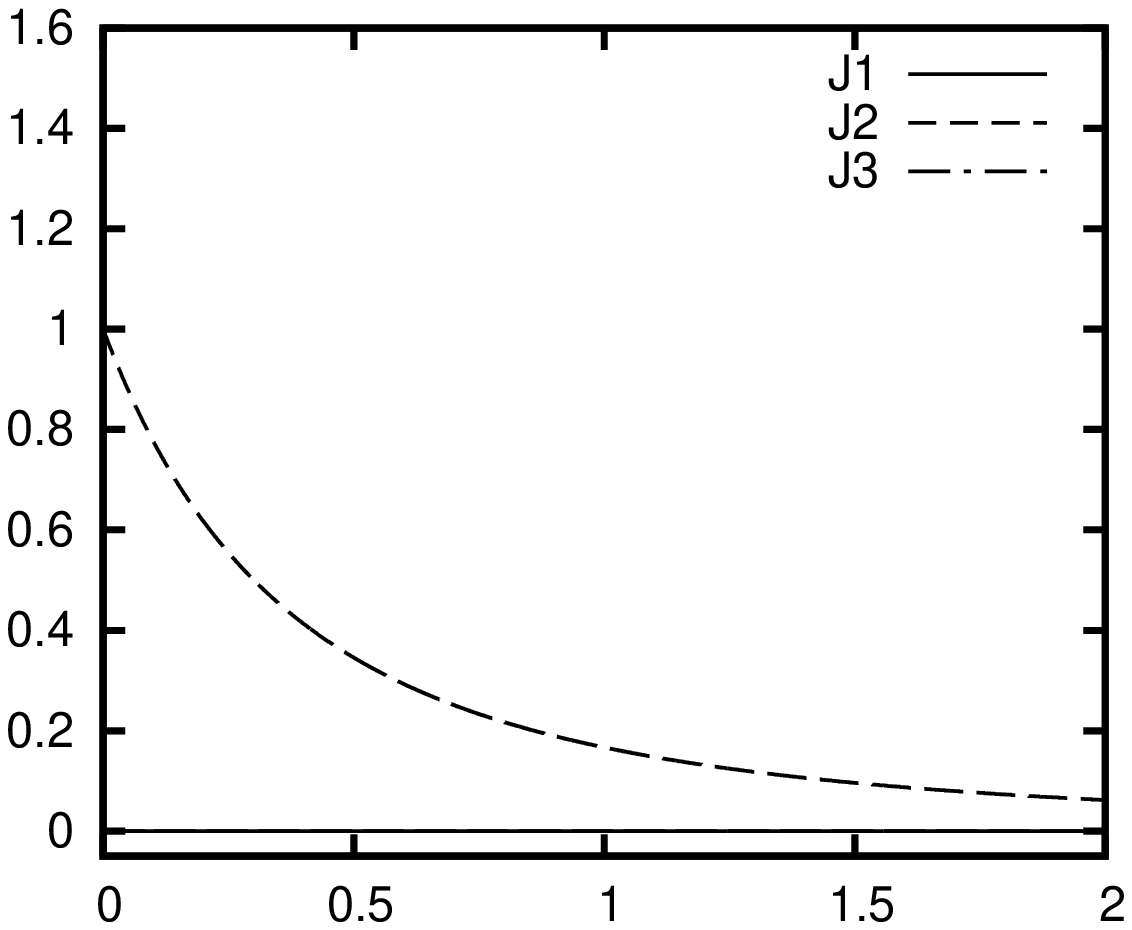}\\ \parbox[t]{0.24\textwidth}{\centering\small\textbf{(a)} $\alpha=0$, $s\in[0,2]$}& \parbox[t]{0.24\textwidth}{\centering\small\textbf{(b)} $\alpha=\pi/6$, $s\in[0,2]$}& \parbox[t]{0.24\textwidth}{\centering\small\textbf{(c)} $\alpha=\pi/3$, $s\in[0,2]$}& \parbox[t]{0.24\textwidth}{\centering\small\textbf{(d)} $\alpha=\pi/2$, $s\in[0,2]$} \end{tabular}} \caption{\label{fi-j123-l1-s} Coefficients $\frac32(1+s\sin\alpha)J_k$, $k=1,2,3$ for $f_{\bm{\zeta\zeta}}$, $f_{\bm{\zeta\chi}}$, $f_{\bm{\chi\chi}}$, respectively, in \eqref{genaacmed} with $L^1$ amoeba metric for four fixed angles $\alpha$ and $s\in[0,2]$.} \end{figure*}

\begin{figure*}[t!] \centerline{\begin{tabular}{@{}cccc@{}} \includegraphics[width=0.24\textwidth]{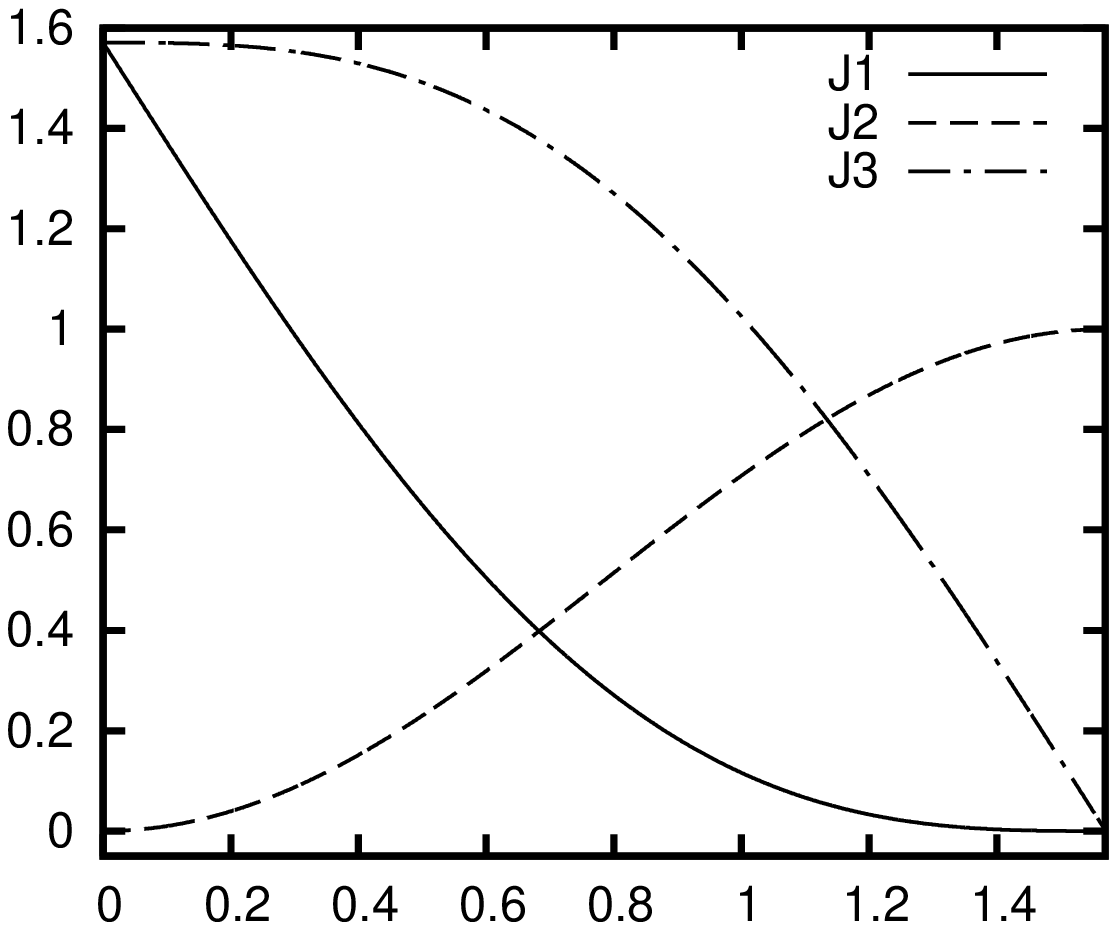}& \includegraphics[width=0.24\textwidth]{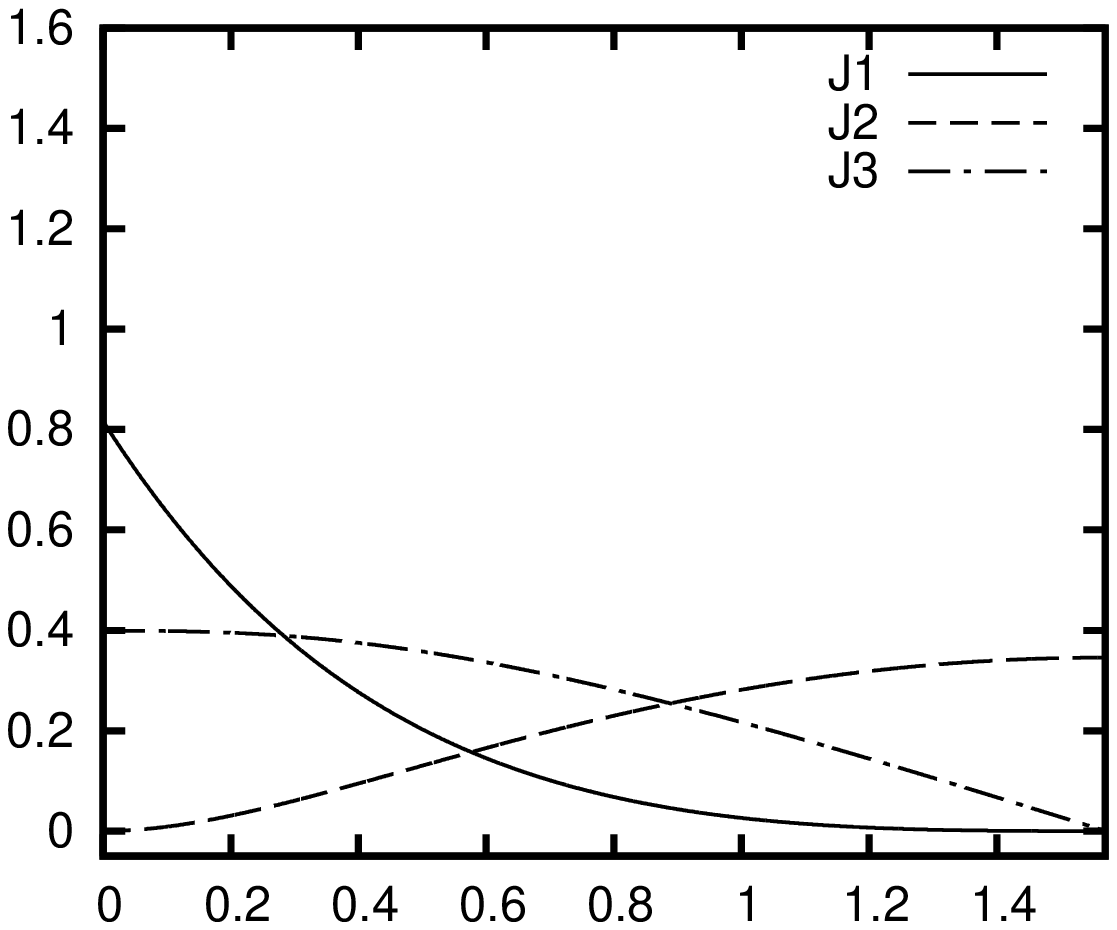}& \includegraphics[width=0.24\textwidth]{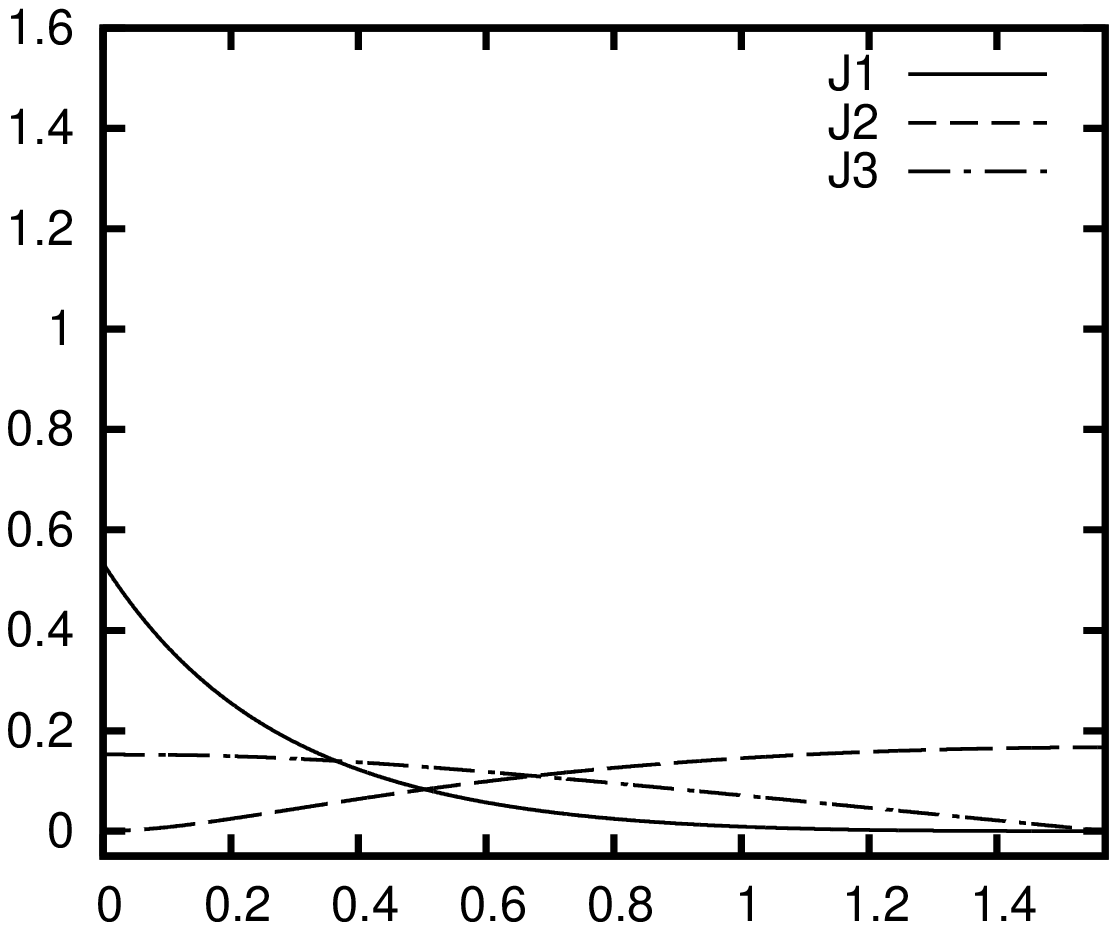}& \includegraphics[width=0.24\textwidth]{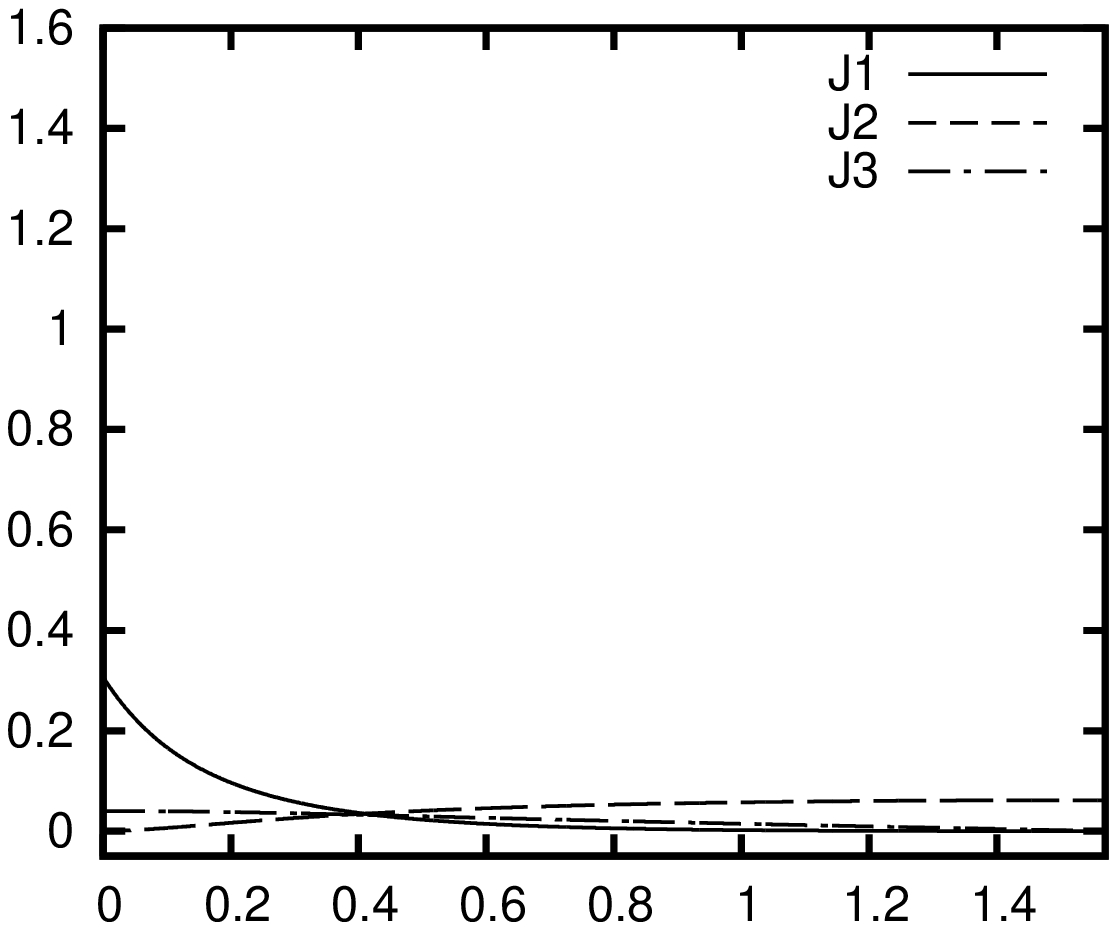}\\ \parbox[t]{0.24\textwidth}{\centering\small\textbf{(a)} $\alpha\in[0,\pi/2]$, $s=0$}& \parbox[t]{0.24\textwidth}{\centering\small\textbf{(b)} $\alpha\in[0,\pi/2]$, $s=1/2$}& \parbox[t]{0.24\textwidth}{\centering\small\textbf{(c)} $\alpha\in[0,\pi/2]$, $s=1$}& \parbox[t]{0.24\textwidth}{\centering\small\textbf{(d)} $\alpha\in[0,\pi/2]$, $s=2$} \end{tabular}} \caption{\label{fi-j123-l1-alpha} Coefficients $\frac32(1+s\sin\alpha)J_k$, $k=1,2,3$ for $f_{\bm{\zeta\zeta}}$, $f_{\bm{\zeta\chi}}$, $f_{\bm{\chi\chi}}$, respectively, in \eqref{genaacmed} with $L^1$ amoeba metric for four fixed gradient magnitudes $s=\lvert\bm{\nabla}f\rvert$ and $\alpha\in[0,\pi/2]$.} \end{figure*}

To illustrate the behaviour of the coefficients of $f_{\bm{\zeta\zeta}}$, $f_{\bm{\zeta\chi}}$ and $f_{\bm{\chi\chi}}$ in \eqref{genaacmed} for the $L^1$ amoeba metric, we display in Figs.~\ref{fi-j123-l1-s} and \ref{fi-j123-l1-alpha} graphs of all three coefficients for variable $s$ and variable $\alpha$, respectively. The most striking difference to the $L^2$ amoeba metric is that the coefficients start out with nonzero values already for $\lvert\bm{\nabla}f\rvert=0$ such that the level line curvature of $f$ influences the evolution also in almost flat regions. Combined with the fact that also the coefficient $1/\nu(\lvert\bm{\nabla}f\rvert\,\sin\alpha)^2$ of $u_{\bm{\xi\xi}}$ shows a faster decay away from $\lvert\bm{\nabla}f\rvert=0$ than in the $L^2$ case it becomes evident that the $L^1$ contour evolution reacts generally more sensitive to small image gradients.

\subsection{Special Evolutions} The following two statements reproduce the more specialised approximation results from \cite{Welk-JMIV11} (in the case of the $L^2$ amoeba metric) and \cite{Welk-ssvm11}, respectively.

\begin{corollary}\label{cor1} The amoeba median filter with $f\equiv \beta\,u$ approximates the self-snakes equation \begin{align} u_t &= \lvert\bm{\nabla}u\rvert~\mathrm{div}\left( g(\lvert\bm{\nabla}u\rvert)\, \frac{\bm{\nabla}u}{\lvert\bm{\nabla}u\rvert} \right) \label{ssn} \end{align} with \begin{align} g(s) &:= 1-\frac32\,\beta\,s\,\tilde{J}_1(\beta\,s)\;, \label{g} \\ \tilde{J}_1(s)&:= \int\limits_{-\pi/2}^{+\pi/2} \frac{\nu'(s\cos\vartheta)} {\nu(s\cos\vartheta)^4}\, \sin^2\vartheta \dd\vartheta \label{J1a0} \end{align} in the sense of Theorem~\ref{thm}.  \end{corollary}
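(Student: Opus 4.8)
The plan is to specialise Theorem~\ref{thm} to the configuration $f\equiv\beta\,u$ and then to recognise the resulting right-hand side as the expanded self-snakes operator. First I would invoke the third remark after Theorem~\ref{thm} to carry the contrast parameter, so that it suffices to evaluate \eqref{genaacmed} with $f$ replaced by $\beta\,u$. In this case $\bm{\nabla}f=\beta\,\bm{\nabla}u$, so the gradient of $f$ is parallel to that of $u$, giving $\alpha=0$ together with $\bm{\chi}=\bm{\eta}$ and $\bm{\zeta}=\bm{\xi}$. Consequently $f_{\bm{\zeta\zeta}}=\beta\,u_{\bm{\xi\xi}}$, $f_{\bm{\chi\chi}}=\beta\,u_{\bm{\eta\eta}}$, $f_{\bm{\zeta\chi}}=\beta\,u_{\bm{\xi\eta}}$, and $\lvert\bm{\nabla}f\rvert=\beta\,\lvert\bm{\nabla}u\rvert$, while $\nu(\lvert\bm{\nabla}f\rvert\sin\alpha)=\nu(0)=1$ under the normalisation $\nu(0)=1$ shared by both example metrics.

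Second I would simplify the angular integrals at $\alpha=0$. Writing $s=\lvert\bm{\nabla}u\rvert$, the integration range in \eqref{J1}--\eqref{J3} becomes $[-\pi/2,\pi/2]$, so $J_1(\beta s,0)=\tilde{J}_1(\beta s)$ by \eqref{J1a0}, whereas $J_2(\beta s,0)=0$ because its integrand is odd in $\vartheta$ on the symmetric interval. Substituting into \eqref{genaacmed} collapses the right-hand side to a combination of $u_{\bm{\xi\xi}}$ and $u_{\bm{\eta\eta}}$ only. Reading off the coefficient of $u_{\bm{\xi\xi}}$ yields $1-\tfrac32\,\beta s\,\tilde{J}_1(\beta s)=g(s)$, which already matches the scalar factor in \eqref{g}.

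The decisive step is to show that the remaining term is exactly the transport part of the self-snakes operator. Expanding the divergence in \eqref{ssn} in the $\bm{\xi}$, $\bm{\eta}$ frame gives $u_t=g(s)\,u_{\bm{\xi\xi}}+s\,g'(s)\,u_{\bm{\eta\eta}}$, using $\mathrm{div}(\bm{\nabla}u/\lvert\bm{\nabla}u\rvert)=u_{\bm{\xi\xi}}/\lvert\bm{\nabla}u\rvert$ and $\partial_{\bm{\eta}}\lvert\bm{\nabla}u\rvert=u_{\bm{\eta\eta}}$. Hence the corollary reduces to identifying the $u_{\bm{\eta\eta}}$ coefficient $-\tfrac32\,\beta s\,J_3(\beta s,0)$ produced by Theorem~\ref{thm} with $s\,g'(s)$; differentiating \eqref{g}, this is equivalent to the purely analytic identity $J_3(w,0)=\frac{\mathrm{d}}{\mathrm{d}w}\bigl(w\,\tilde{J}_1(w)\bigr)$ for all $w\ge0$.

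I expect this identity to be the main obstacle, and I would establish it by differentiation under the integral sign followed by integration by parts in $\vartheta$. Abbreviating $P(x):=\nu'(x)/\nu(x)^4$, one has $\frac{\mathrm{d}}{\mathrm{d}w}\bigl(w\,\tilde{J}_1(w)\bigr)=\int_{-\pi/2}^{\pi/2}\bigl(P(w\cos\vartheta)+w\,\cos\vartheta\,P'(w\cos\vartheta)\bigr)\sin^2\vartheta\dd\vartheta$. The key manipulation is the relation $w\,\sin\vartheta\,P'(w\cos\vartheta)=-\frac{\mathrm{d}}{\mathrm{d}\vartheta}P(w\cos\vartheta)$, which recasts the second summand as $-\int_{-\pi/2}^{\pi/2}\frac{\mathrm{d}}{\mathrm{d}\vartheta}\bigl(P(w\cos\vartheta)\bigr)\,\sin\vartheta\cos\vartheta\dd\vartheta$; integrating this by parts and using $\frac{\mathrm{d}}{\mathrm{d}\vartheta}(\sin\vartheta\cos\vartheta)=\cos^2\vartheta-\sin^2\vartheta$ turns it into $\int_{-\pi/2}^{\pi/2}P(w\cos\vartheta)(\cos^2\vartheta-\sin^2\vartheta)\dd\vartheta$, the boundary terms vanishing since $\sin\vartheta\cos\vartheta=0$ at $\vartheta=\pm\pi/2$. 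Adding back the first summand $\int P(w\cos\vartheta)\sin^2\vartheta\dd\vartheta$ leaves $\int P(w\cos\vartheta)\cos^2\vartheta\dd\vartheta=J_3(w,0)$, which proves the identity. With it, the $u_{\bm{\eta\eta}}$ coefficients agree, and the specialised PDE coincides with \eqref{ssn}, completing the proof.
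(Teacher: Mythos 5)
Your proposal is correct and matches the paper's proof in all essential respects: the same specialisation $\alpha=0$, $\bm{\zeta}=\bm{\xi}$, $\bm{\chi}=\bm{\eta}$ with $J_2$ vanishing by odd symmetry, the same identification of $g$, and the same reduction of the divergence form \eqref{ssn} to the identity $h(s)=s\,g'(s)$ with $h(s)=-\frac32\,\beta\,s\,\tilde{J}_3(\beta\,s)$, established by an integration by parts in $\vartheta$. The only (minor) difference is the order of operations in that final identity: you differentiate $w\,\tilde{J}_1(w)$ under the integral sign first---which transiently invokes $\nu''$ through $P'$, a regularity the statement does not strictly need---and then integrate by parts, whereas the paper's Appendix~\ref{app-j1j3} first integrates by parts via the antiderivative of $\sin\vartheta\,\nu(\beta\,s\cos\vartheta)^{-3}$ to obtain the closed form $g(s)=\frac12\int_{-\pi/2}^{+\pi/2}\cos\vartheta\,\nu(\beta\,s\cos\vartheta)^{-3}\dd\vartheta$ and only then differentiates, so that no second derivative of $\nu$ ever appears.
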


\begin{proof} First one observes that the hypothesis of the corollary entails that the identities $\alpha=0$, $\bm{\zeta}=\bm{\xi}$, and $\bm{\chi}=\bm{\eta}$ hold everywhere.  Inserting these into \eqref{J1}--\eqref{J3}, all integrals run from $-\pi/2$ to $\pi/2$, and $J_2$ vanishes by the odd symmetry of its integrand.  The expressions $J_1$ and $J_3$ become \begin{align} J_1(s,0) &= \tilde{J}_1(s)\;, \\ J_3(s,0) &= \tilde{J}_3(s) := \int\limits_{-\pi/2}^{+\pi/2} \frac{\nu'(s\cos\vartheta)} {\nu(s\cos\vartheta)^4}\, \cos^2\vartheta \dd\vartheta\;.  \label{J3a0} \end{align} Substituting these together with $\nu(\lvert\bm{\nabla}f\rvert\,\sin\alpha)=\nu(0)=1$, $f\equiv\beta\,u$, $f_{\bm{\zeta\zeta}}\equiv\beta\,u_{\bm{\xi\xi}}$ and $f_{\bm{\chi\chi}}\equiv\beta\,u_{\bm{\eta\eta}}$ into \eqref{genaacmed} yields \begin{align} u_t &= g(\lvert\bm{\nabla}u\rvert)\,u_{\bm{\xi\xi}} + h(\lvert\bm{\nabla}u\rvert)\,u_{\bm{\eta\eta}} \label{ssn-gh} \end{align} with $g(s)$ as stated in \eqref{g}, and \begin{align} h(s) &= -\frac32\,\beta\,s\,\tilde{J}_3(\beta\,s)\;.  \end{align} A short calculation (see Appendix~\ref{app-j1j3}) verifies that \begin{equation} h(s) = s\,g'(s) \label{heqsdg} \end{equation} such that \eqref{ssn-gh} can be rewritten into \eqref{ssn}.  This completes the proof.  \end{proof}

\begin{remark} Corollary~\ref{cor1} reproduces the result from \cite{Welk-JMIV11} on the approximation of self-snakes by iterated amoeba median filtering with a general amoeba metric.  This is not quite obvious since due to the different proof strategy used in \cite{Welk-JMIV11} the actual integral expressions look fairly different. Appendix~\ref{app-ssn-jmiv11}, however, demonstrates that the coefficients are in fact identical.

Note also that in the case of the $L^2$ amoeba metric, $g$ coincides with the Perona-Malik function \eqref{peronamalikg} with $\lambda=1/\beta$.  \end{remark}

We return now to the active contour setting where the roles of the evolving function $u$ and the image $f$ governing the amoebas are separated, and consider the special geometric situation of both functions being rotationally symmetric with the same centre. In this case, the PDE approximated by amoeba active contours is identical to the geodesic active contour equation.

\begin{corollary}\label{cor2} If input image $f$ and evolving level-set image $u$ are rotationally symmetric with respect to the origin, amoeba active contours approximate the geodesic active contour equation \begin{align} u_t &= \lvert\bm{\nabla}u\rvert~\mathrm{div}\left(g(\lvert\bm{\nabla}f\rvert) \,\frac{\bm{\nabla}u}{\lvert\bm{\nabla}u\rvert} \right) \label{gac1} \end{align} with \begin{align} g(s) &= 1-\frac32\,s\,\tilde{J}_1(s) \label{g-aacrotinv} \end{align} and $\tilde{J}_1$ as in \eqref{J1a0} in the sense of Theorem~\ref{thm}.  \end{corollary}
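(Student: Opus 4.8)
The plan is to reuse the reduction carried out in the proof of Corollary~\ref{cor1}, replacing the proportionality hypothesis $f\equiv\beta\,u$ by rotational symmetry, which supplies exactly the geometric identity needed to recover \eqref{gac1}. First I would note that rotational symmetry of both $f$ and $u$ about the origin forces their gradients to be radial everywhere; hence the gradient directions coincide, giving $\alpha=0$, $\bm{\chi}=\bm{\eta}$ and $\bm{\zeta}=\bm{\xi}$, just as in Corollary~\ref{cor1}. Substituting $\alpha=0$ into \eqref{J1}--\eqref{J3} makes the integration interval symmetric, so $J_2(s,0)=0$ vanishes by odd symmetry while $J_1(s,0)=\tilde{J}_1(s)$ and $J_3(s,0)=\tilde{J}_3(s)$, and $\nu(\lvert\bm{\nabla}f\rvert\sin\alpha)=\nu(0)=1$. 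Inserting these into \eqref{genaacmed} collapses the approximated PDE to
\begin{align}
u_t &= u_{\bm{\xi\xi}} -\frac32\,\lvert\bm{\nabla}u\rvert\, \bigl(\tilde{J}_1(s)\,f_{\bm{\zeta\zeta}} +\tilde{J}_3(s)\,f_{\bm{\chi\chi}}\bigr)\;,\qquad s:=\lvert\bm{\nabla}f\rvert\;.
\end{align}

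Next I would expand the target equation \eqref{gac1} in the same local frame. Writing $g=g(\lvert\bm{\nabla}f\rvert)$ and applying the product rule to the divergence, together with the identities $\mathrm{div}\bigl(\bm{\nabla}u/\lvert\bm{\nabla}u\rvert\bigr)=u_{\bm{\xi\xi}}/\lvert\bm{\nabla}u\rvert$ and $\bm{\nabla}\lvert\bm{\nabla}f\rvert=\mathrm{D}^2f\,\bm{\chi}$, and using that $\bm{\nabla}u$ is parallel to $\bm{\chi}=\bm{\eta}$, the right-hand side of \eqref{gac1} becomes $g(s)\,u_{\bm{\xi\xi}}+g'(s)\,\lvert\bm{\nabla}u\rvert\,f_{\bm{\chi\chi}}$. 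Matching the coefficient of $f_{\bm{\chi\chi}}$ then requires only $g'(s)=-\frac32\,\tilde{J}_3(s)$, which follows from $g(s)=1-\frac32\,s\,\tilde{J}_1(s)$ together with the identity $\tilde{J}_3(s)=\tilde{J}_1(s)+s\,\tilde{J}_1'(s)=\bigl(s\,\tilde{J}_1(s)\bigr)'$ already established in Appendix~\ref{app-j1j3} (this is precisely the relation $h(s)=s\,g'(s)$ invoked in Corollary~\ref{cor1}).

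The crux of the argument, and the step I expect to be the real obstacle, is reconciling the remaining tangential terms, namely showing that $u_{\bm{\xi\xi}}-\frac32\,\lvert\bm{\nabla}u\rvert\,\tilde{J}_1(s)\,f_{\bm{\zeta\zeta}}=g(s)\,u_{\bm{\xi\xi}}$. Here rotational symmetry enters decisively: since $f$ and $u$ are symmetric about the \emph{same} centre, their level lines are concentric circles, so at radius $r$ both carry level-line curvature $1/r$, that is $f_{\bm{\zeta\zeta}}/\lvert\bm{\nabla}f\rvert=u_{\bm{\xi\xi}}/\lvert\bm{\nabla}u\rvert=1/r$. I would confirm this by the short polar-coordinate computation showing that the Hessian of a radial function is diagonal in the radial/tangential frame with tangential entry $f'(r)/r=\lvert\bm{\nabla}f\rvert/r$. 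Substituting $f_{\bm{\zeta\zeta}}=\bigl(s/\lvert\bm{\nabla}u\rvert\bigr)\,u_{\bm{\xi\xi}}$ turns the tangential contribution into $-\frac32\,s\,\tilde{J}_1(s)\,u_{\bm{\xi\xi}}$, so the coefficient of $u_{\bm{\xi\xi}}$ becomes $1-\frac32\,s\,\tilde{J}_1(s)=g(s)$, as needed. With both the tangential and the radial coefficients matched, the reduced PDE of Theorem~\ref{thm} coincides term by term with the expanded form of \eqref{gac1}, completing the proof.
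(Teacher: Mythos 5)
Your proposal is correct and takes essentially the same route as the paper's own proof: you reduce \eqref{genaacmed} via the symmetry identities ($\alpha=0$, aligned frames, $J_2$ vanishing, and the equal-curvature relation $u_{\bm{\xi\xi}}/\lvert\bm{\nabla}u\rvert = f_{\bm{\zeta\zeta}}/\lvert\bm{\nabla}f\rvert$, which the paper states as $u_{\bm{\xi\xi}}/\lvert\bm{\nabla}u\rvert\equiv f_{\bm{\xi\xi}}/\lvert\bm{\nabla}f\rvert$), obtaining $g(s)\,u_{\bm{\xi\xi}}$ plus a radial term, and then invoke the relation $h(s)=s\,g'(s)$ from Appendix~\ref{app-j1j3} to recast the result in the divergence form \eqref{gac1}. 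Your only additions are the explicit polar-coordinate verification of the curvature identity and the explicit expansion of the divergence in the local frame, both of which the paper leaves implicit.
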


\begin{proof} The assumed rotational symmetry implies $\alpha=0$, $\bm{\zeta}=\bm{\xi}$, $\bm{\chi}=\bm{\eta}$, $u_{\bm{\xi\eta}}\equiv f_{\bm{\xi\eta}}\equiv0$, and $u_{\bm{\xi\xi}}/\lvert\bm{\nabla}u\rvert\equiv f_{\bm{\xi\xi}}/\lvert\bm{\nabla}f\rvert$.  Substituting these identities into \eqref{genaacmed} leads to \begin{align} u_t &= u_{\bm{\xi\xi}} - \frac32\,\lvert\bm{\nabla}u\rvert\,\tilde{J}_1( \lvert\bm{\nabla}f\rvert)f_{\bm{\xi\xi}} -\frac32\,\lvert\bm{\nabla}u\rvert\,\tilde{J}_3( \lvert\bm{\nabla}f\rvert)f_{\bm{\eta\eta}} \notag\\* &= g(\lvert\bm{\nabla}f\rvert)\,u_{\bm{\xi\xi}} +h(\lvert\bm{\nabla}f\rvert)\,f_{\bm{\eta\eta}} \label{ut-aacrotinv-gh} \end{align} with $g(s)$ as given by \eqref{g-aacrotinv} and \begin{align} h(s) &= -\frac32\,s\,\tilde{J}_3(s)\;.  \end{align} Since $g$ and $h$ differ from their counterparts in the previous corollary only by the constant factor $\beta$, relation \eqref{heqsdg} transfers verbatim, such that \eqref{ut-aacrotinv-gh} can be transformed into \eqref{gac1}.  \end{proof}

\begin{remark} The statement of Corollary~\ref{cor2} generalises the equivalence of amoeba active contours and geodesic active contours that was proven in \cite{Welk-ssvm11} for the $L^2$ amoeba metric to a general amoeba metric.

From a practical viewpoint, the hypothesis of Corollary~\ref{cor2} may appear rather artificial at first glance. However, it mimicks a situation which is common in an active contour evolution when the evolving contour has almost attained its final state delineating a segment boundary: If the segment boundary in $f$ is given by a level line, along which the image contrast is uniform, and the evolving contour in $u$ is already close to it, then the level lines of $u$ and $f$ will be aligned and of equal curvature. The same is achieved in the rotationally symmetric scenario considered in Corollary~\ref{cor2}.  \end{remark}

\section{Comparison of Amoeba Active Contours with Geodesic Active Contours}\label{sec-aacgac} In the general amoeba active contour setting, however, it is evident that equation \eqref{genaacmed} does not exactly coincide with \eqref{gac}. For a better understanding of the differences between both active contour methods, we analyse further typical configurations. Throughout this section, we restrict our considerations to the $L^2$ amoeba metric, making the PDE \eqref{l2aacmed} our starting point.

\begin{figure*}[t] \setlength{\unitlength}{0.001\textwidth} \begin{picture}(1000,250) \put(30,0){\includegraphics[height=0.25\textwidth]{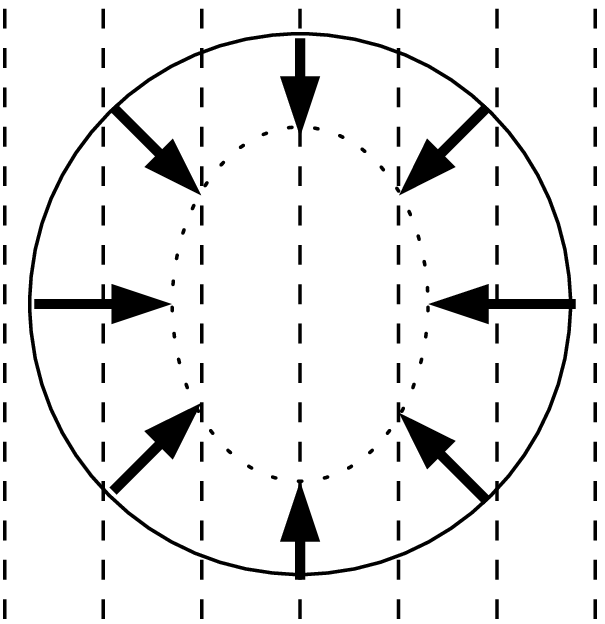}} \put(0,0){(a)} \put(340,0){\includegraphics[height=0.25\textwidth]{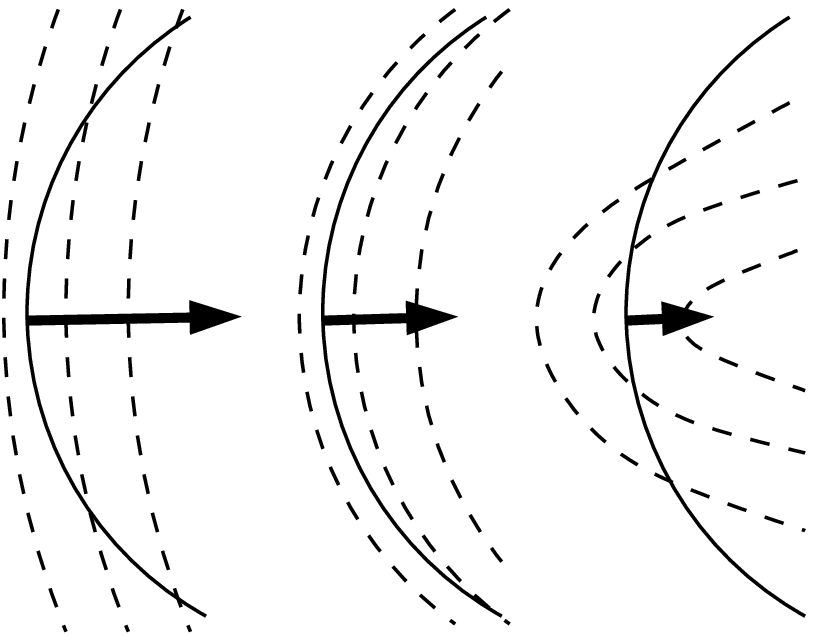}} \put(320,0){(b)} \put(730,0){\includegraphics[height=0.25\textwidth]{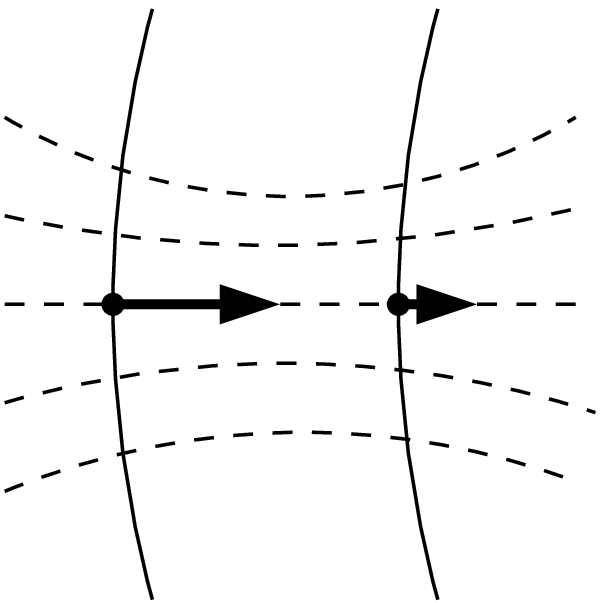}} \put(720,0){(c)} \end{picture} \caption{\label{fi-levellines}Evolution of level lines under the PDE \eqref{l2aacmed} in exemplary configurations (schematic).  Solid lines: level lines of $u$, dashed lines: level lines of $f$.  \textbf{Left to right:} \textbf{(a)} In a region with homogeneous $\bm{\nabla} f$, aligned level line segments of $u$ evolve faster. -- \textbf{(b)} At a location with aligned $\bm{\nabla}u$ and $\bm{\nabla}f$, the contour evolves inward faster when the curvature of $u$ exceeds that of $f$. -- \textbf{(c)} At locations with orthogonal $\bm{\nabla}u$ and $\bm{\nabla}f$, the curvature-dependent movement of the contour is attracted towards high-contrast regions of $f$. Assuming that $\bm{\eta}$ points to the right, $f_{\bm{\xi\eta}}<0$ holds in the left, and $f_{\bm{\xi\eta}}>0$ in the right part, while $u_{\bm{\xi\xi}}<0$ in both cases. -- From \cite{Welk-ssvm13}.  } \end{figure*}

\subsection{Homogeneous image gradients} In flat image regions ($\bm{\nabla}f\equiv\bm{0}$), geodesic active contours \eqref{gac} as well as amoeba active contours evolve the level set function $u$ by curvature motion. Let us consider now an image region with a homogeneous non-zero gradient, $\bm{\nabla}f=\mathrm{const}$.  In such a region, geodesic active contours still perform curvature motion, but at an evolution speed slowed down by the contrast-dependent factor $g(\lvert\bm{\nabla}f\rvert)=1/(1+\lvert\bm{\nabla}f\rvert^2)$.  The amoeba-based PDE \eqref{l2aacmed} in this case becomes \begin{equation} u_t = \frac{u_{\bm{\xi\xi}}}{1+\lvert\bm{\nabla}f\rvert^2\sin^2\alpha}\;.  \end{equation} This, too, describes a slowed-down curvature motion, but the evolution is slowed down the less, the more the level lines of $f$ and $u$ are aligned. This leads to a faster straightening of aligned contour segments, thereby boosting adaptation of $u$'s level lines to those of $f$, see the schematic representation in Figure~\ref{fi-levellines}(a).

\subsection{Aligned gradients}\label{ssec-aacgac-alig} Relaxing the condition of Corollary~\ref{cor2}, we assume now that the gradient directions of $f$ and $u$ coincide, $\alpha=0$, $\bm{\zeta}=\bm{\xi}$, $\bm{\chi}= \bm{\eta}$, but make no assumption on their curvatures. At such a location, \eqref{l2aacmed} takes the form \begin{align} u_t &= u_{\bm{\xi\xi}} - \lvert\bm{\nabla}f\rvert\,\lvert\bm{\nabla}u\rvert\, \left( \frac{f_{\bm{\xi\xi}}}{1+\lvert\bm{\nabla}f\rvert^2} +\frac{2\,f_{\bm{\eta\eta}}}{\bigl(1+\lvert\bm{\nabla}f\rvert^2\bigr)^2} \right) \notag\\* &= \frac{u_{\bm{\xi\xi}}}{1+\lvert\bm{\nabla}f\rvert^2} - \frac{2\,\lvert\bm{\nabla}f\rvert\,\lvert\bm{\nabla}u\rvert\, f_{\bm{\eta\eta}}}{\bigl(1+\lvert\bm{\nabla}f\rvert^2\bigr)^2}  +\frac{2\,\lvert\bm{\nabla}f\rvert^2\,\lvert\bm{\nabla}u\rvert} {1+\lvert\bm{\nabla}f\rvert^2} \left( \frac{u_{\bm{\xi\xi}}}{2\,u_{\bm{\eta}}} -\frac{f_{\bm{\xi\xi}}}{2\,f_{\bm{\eta}}} \right) \end{align} which coincides with the corresponding geodesic active contour evolution except for the last summand that speeds up the evolution if the level line curvature $u_{\bm{\xi\xi}}/(2\,u_{\bm{\eta}})$ of $u$ exceeds that of $f$, see Figure~\ref{fi-levellines}(b).  The same offset is obtained in the anti-aligned case, $\alpha=\pi$, $\bm{\zeta}=-\bm{\xi}$, $\bm{\chi}=-\bm{\eta}$; note that the curvature of $f$'s level lines is measured with respect to the orientation of $u$'s level lines.  Relative to geodesic active contours, this implies an accelerated removal of sharp contour corners that do not match the given image $f$.

\subsection{Orthogonal gradients}\label{ssec-aacgac-orth} Consider now the complementary situation where the gradient directions of $u$ and $f$ are orthogonal, i.e.\ $\alpha=\pi/2$, $\bm{\zeta}=\bm{\eta}$, $\bm{\chi}=-\bm{\xi}$. Then \eqref{l2aacmed} becomes \begin{align} u_t &= \frac{u_{\bm{\xi\xi}}}{1+\lvert\bm{\nabla}f\rvert^2} + \frac{2\,\lvert\bm{\nabla}f\rvert\,\lvert\bm{\nabla}u\rvert\, f_{\bm{\xi\eta}}}{1+\lvert\bm{\nabla}f\rvert^2} \end{align} where the last summand is by a factor $\bigl(1+\lvert\bm{\nabla}f\rvert^2\bigr)$ larger than in the corresponding geodesic active contour evolution.  This means that attraction of the contour in $u$ towards high-contrast regions in $f$ is strengthened, see Figure~\ref{fi-levellines}(c).

\medskip In summary, our findings in this section indicate that compared to geodesic active contours \eqref{gac} the amoeba active contour equation \eqref{l2aacmed} tends to attract the contour $u$ faster to high-contrast image regions and to strengthen the alignment of level lines of $u$ to those of $f$. These effects are in line with the somewhat finer adaptation of amoeba active contours to structure details that was observed in \cite{Welk-ssvm11}; see also Section~\ref{sec-exp}.

\section{Active Contours with Force Term}\label{sec-bias} From the literature on geodesic active contours \cite{Caselles-iccv95,Caselles-IJCV97,Kichenassamy-ARMA96} it is known that active contour evolutions tend to progress very slowly in image regions far from contours, and can also get stuck in undesired local minima away from the desired contour. To overcome this problem, it has been proposed already early in the active contour literature \cite{Caselles-NUMA93,Cohen-CVGIPIU91,Malladi-PAMI95} to introduce an additional \emph{force term} similar to morphological dilation or erosion into the active contour evolution.  An erosion force pushes the evolving contour in inward direction, allowing a faster evolution in homogeneous image areas and also the escape from local minima. A dilation force can be used to push the contour in outward direction, thereby also enabling the model to be used with an initial contour inside the desired segment.

\subsection{Force Terms in Active Contour PDEs} In \cite{Cohen-CVGIPIU91}, the so-called \emph{balloon force} is stated as a constant $k$ times the normal vector of the contour, equivalent to $k\,\lvert\bm{\nabla}u\rvert$ for the level set evolution, i.e.\ a plain erosion or dilation (dependent on the sign of $k$).  The geodesic active contour equation with such a force term would read \begin{align} u_t &= \lvert\bm{\nabla}u\rvert\,\mathrm{div}\left( g(\lvert\bm{\nabla}f\rvert)\,\frac{\bm{\nabla}u}{\lvert\bm{\nabla}u\rvert} \right) + k\,\lvert\bm{\nabla}u\rvert \;.  \label{gac-force-Cohen} \end{align} Already in \cite{Cohen-CVGIPIU91} it is mentioned that $k$ may be steered in such a way that erosion or dilation is reduced at high-gradient locations in order to achieve a more precise localisation of the final contour. Following this direction, \cite{Caselles-NUMA93,Kichenassamy-ARMA96,Malladi-PAMI95} couple $k$ to the edge-stopping function $g$ of the actual active contour evolution. The curve evolution in \cite{Kichenassamy-ARMA96} therefore reads \begin{align} u_t &= \lvert\bm{\nabla}u\rvert\,\mathrm{div}\left( g(\lvert\bm{\nabla}f\rvert)\,\frac{\bm{\nabla}u}{\lvert\bm{\nabla}u\rvert} \right) + g(\lvert\bm{\nabla}f\rvert)\,k\,\lvert\bm{\nabla}u\rvert \;.  \label{gac-force-Kichenassamy} \end{align}

\subsection{Force Terms in Amoeba Active Contours} \label{ssec-aac-force} To achieve a similar effect in connection with amoeba active contours, it was proposed in \cite{Welk-ssvm11} to bias the median filter.  Considering the ordered sequence $v_0,v_1,\ldots,v_p$ of intensity values within the amoeba, it was proposed to select not the middle element $v_{p/2}$, but either the element with index $q\,p$ for some $q\in[0,1]\setminus\{1/2\}$ (the $q$-quantile), or the element with index $p/2+b$ where $b$ is a fixed offset.  We will refer to these modifications as \emph{quantile bias} and \emph{fixed offset bias,} respectively.

In Subsection~\ref{ssec-bias-ana} we will analyse the correspondence between such a modified amoeba active contour method and force terms in the corresponding PDEs.  Since the PDE \eqref{genaacmed} approximated by amoeba active contours already coincides with geodesic active contours only in special cases, we expect also here an exact equivalence only in special cases.  Our analysis will eventually lead us to propose a new \emph{quadratic bias} strategy in which the offset is chosen proportional to the squared amoeba size.  Given that the correspondence between amoeba filters and PDEs involves the limit $\varrho\to0$, it is important to note that the procedures described above, with fixed quantile parameter $q$ or offset $b$, refer to a fixed amoeba radius $\varrho$. It will therefore be of particular interest whether and how $q$ or $b$ needs to be adapted for varying $\varrho$ in order to take comparable effect over different amoeba sizes, and therefore to allow a consistent limit for $\varrho\to0$.

\subsection{Analysis of the Correspondence}\label{ssec-bias-ana} In the space-continuous amoeba model, the above-mentioned modifications are implemented by choosing $\mu$ in such a way that the area difference $\lvert\mathcal{A}_+\rvert-\lvert\mathcal{A}_-\rvert$ is equalled not to zero as in Subsections~\ref{ssec-imbalance} and~\ref{ssec-median} but to some value $\delta\mathcal{A}$.  To mimick the two modification strategies mentioned in Subsection~\ref{ssec-aac-force}, $\delta\mathcal{A}$ can be chosen as constant for the fixed offset bias, or proportional to $\lvert\mathcal{A}\rvert$ to implement a quantile bias.  We remark that $\delta\mathcal{A}$ may still depend on $\varrho$.

With this modification, equation \eqref{compens} is changed to \begin{align} & 2\,\frac{\mu-u(\bm{x}_0)}{\lvert\bm{\nabla}u\rvert} \bigl(z_+(\varphi+\alpha+\pi/2)+z_-(\varphi+\alpha+\pi/2)\bigr) \notag\\*&\quad{} = \varDelta_1+\varDelta_2 + \delta\mathcal{A} + \LandauO(\varrho^4)\;, \label{compens-force} \end{align} such that the time step $u^{k+1}-u^k$ of the explicit PDE scheme being approximated is modified by an additional summand \begin{equation} \delta m := +\frac{\delta\mathcal{A}}{2\,\varrho}\, \nu(\lvert\bm{\nabla}f\rvert\,\sin\alpha)\,\lvert\bm{\nabla}u\rvert \;.  \label{aacgac-force-dm} \end{equation} In order for the biased filter to yield a finite limit for $\varrho\to0$, and thus a force term in the limiting PDE, it is necessary that $\delta m$ behaves as $\mathcal{O}(\varrho^2)$.

\subsubsection{Fixed Offset Bias} Considering first the case of a fixed offset, i.e.\ $\delta\mathcal{A}$ is constant (with regard to $u$ and $f$ but not necessarily to $\varrho$), we see that $\delta m\sim\varrho^2$ is obtained if \begin{equation} \delta\mathcal{A}=\frac{\gamma_b}3\,\varrho^3 \end{equation} with some fixed $\gamma_b$.  Then the PDE \eqref{genaacmed} from Theorem~\ref{thm} is effectively modified by the additional summand \begin{equation} +\gamma_b\,\nu(\lvert\bm{\nabla}f\rvert\,\sin\alpha)\,\lvert\bm{\nabla}u\rvert \;.  \end{equation} This is indeed a force term of the desired type.  In the rotationally symmetric case (compare Corollary~\ref{cor2}) where $\alpha=0$ and thus $\nu(\lvert\bm{\nabla}f\rvert\,\sin\alpha)\equiv1$, it coincides exactly with the constant balloon force as in \eqref{gac-force-Cohen}.

In the general situation, the factor $\nu(\lvert\bm{\nabla}f\rvert\,\sin\alpha)$ strengthens the balloon force in regions where the level lines of $u$ and $f$ are not aligned, compare also the similar findings for the basic amoeba active contour method in Subsection~\ref{ssec-aacgac-orth}.

\subsubsection{Quantile Bias} For a $q$-quantile, in contrast, one wants the amoeba to be cut into $\mathcal{A}_+$ and $\mathcal{A}_-$ with $\lvert\mathcal{A}_-\rvert=q\,\lvert\mathcal{A}\rvert$, $\lvert\mathcal{A}_+\rvert=(1-q)\,\lvert\mathcal{A}\rvert$, such that $\delta\mathcal{A}$ needs to be chosen as $(1-2\,q)\,\lvert\mathcal{A}\rvert$.

To insert this into \eqref{aacgac-force-dm}, we need to calculate $\lvert\mathcal{A}\rvert$. Similar to \eqref{Delta1}, one has \begin{align} \lvert\mathcal{A}\rvert &= \int\limits_{-\pi}^{\pi}\frac12\,z_+(\vartheta)^2\dd\vartheta =\frac{\varrho^2}2\,K(\lvert\bm{\nabla}f\rvert) \end{align} with \begin{align} K(s) := \int\limits_{-\pi}^{\pi}\frac{\dd\vartheta}{\nu( s\,\cos\vartheta)^2}\;.  \end{align} As a consequence, we have \begin{align} \delta m = \frac{\varrho}{4}\, (1-2\,q)\, \nu(\lvert\bm{\nabla}f\rvert\,\sin\alpha)\, K(\lvert\bm{\nabla}f\rvert)\, \lvert\bm{\nabla}u\rvert \end{align} from which it is evident that \begin{equation} q=\frac12-\frac{\gamma_q}{6\,\pi}\,\varrho \end{equation} with some fixed $\gamma_q$ is a suitable choice in order to have a finite force term in the PDE limit.  In this case the resulting force term to be introduced into the PDE \eqref{genaacmed} reads \begin{equation} +\gamma_q\, \nu(\lvert\bm{\nabla}f\rvert\,\sin\alpha)\, \frac{K(\lvert\bm{\nabla}f\rvert)}{2\,\pi}\, \lvert\bm{\nabla}u\rvert\;.  \end{equation}

Specialising to the $L^2$ amoeba metric $\nu(s)=\sqrt{1+s^2}$, the integral expression $K$ evaluates to \begin{align} K(s) &= \frac{2\,\pi}{\sqrt{1+s^2}\,} \end{align} such that the force term equals \begin{equation} +\gamma_q\, \frac{\sqrt{1+\lvert\bm{\nabla}f\rvert^2\sin^2\alpha}\,} {\sqrt{1+\lvert\bm{\nabla}f\rvert^2}\,}\, \lvert\bm{\nabla}u\rvert\;, \end{equation} which is in the rotationally symmetric case a term half-way between the constant balloon force as in \eqref{gac-force-Cohen} and the modulated force $\frac{k}{1+\lvert\bm{\nabla}f\rvert^2}\,\lvert\bm{\nabla}u\rvert$ as in \eqref{gac-force-Kichenassamy}. In the general case, the force term is strengthened as before by the alignment-dependent amplification factor $\sqrt{\lvert\bm{\nabla}f\rvert\,\sin\alpha}$.

\subsubsection{Quadratic Bias} The results from the previous two subsections motivate a third bias strategy in order to reproduce the force term of \eqref{gac-force-Kichenassamy} in the rotationally symmetric case with $L^2$ amoeba metric: The idea is to make $\delta\mathcal{A}$ proportional to the squared amoeba area $\lvert\mathcal{A}\rvert^2$, i.e.\ \begin{equation} \delta\mathcal{A} = \gamma_r\,\frac{\varrho}{3\,\pi^2}\, \lvert\mathcal{A}\rvert^2 \end{equation} leading to the force term \begin{equation} +\gamma_r\,\frac{\nu(\lvert\bm{\nabla}f\rvert\,\sin\alpha)} {\nu(\lvert\bm{\nabla}f\rvert)^2}\, \lvert\bm{\nabla}u\rvert\;, \end{equation} which becomes \begin{equation} +\gamma_r\, \frac{\sqrt{1+\lvert\bm{\nabla}f\rvert^2\sin^2\alpha}\,} {1+\lvert\bm{\nabla}f\rvert^2}\, \lvert\bm{\nabla}u\rvert \end{equation} for the $L^2$ amoeba metric.

Translating back to the discrete amoeba filter procedure, this means to select from the ordered sequence of intensity values within the amoeba the element with index $p/2+r\,p^2$.

\section{Experiments}\label{sec-exp} 
\begin{figure*}[t] \setlength{\unitlength}{0.001\textwidth} \begin{picture}(1000,328) \put(  0,168){\includegraphics[width=0.16\textwidth] {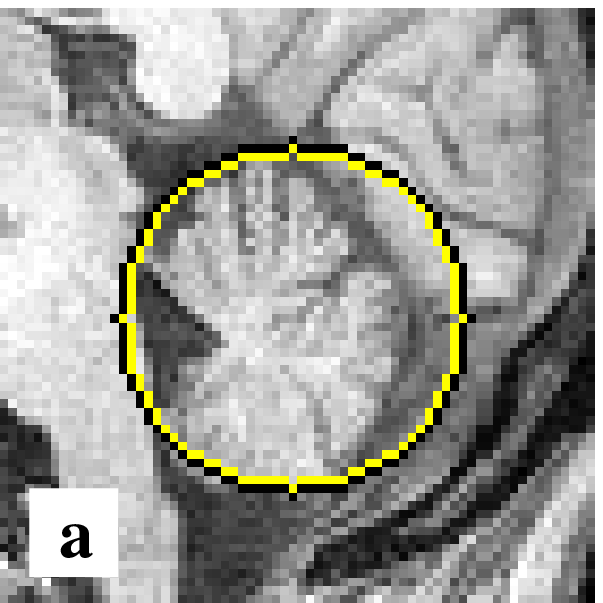}} \put(168,168){\includegraphics[width=0.16\textwidth] {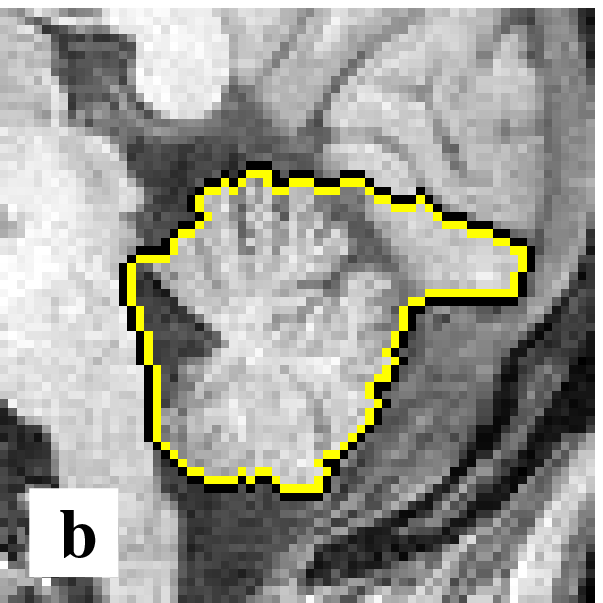}} \put(336,168){\includegraphics[width=0.16\textwidth] {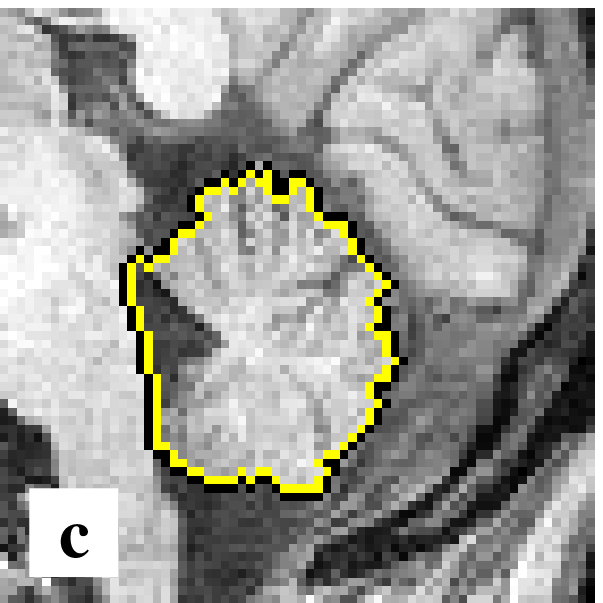}} \put(504,168){\includegraphics[width=0.16\textwidth] {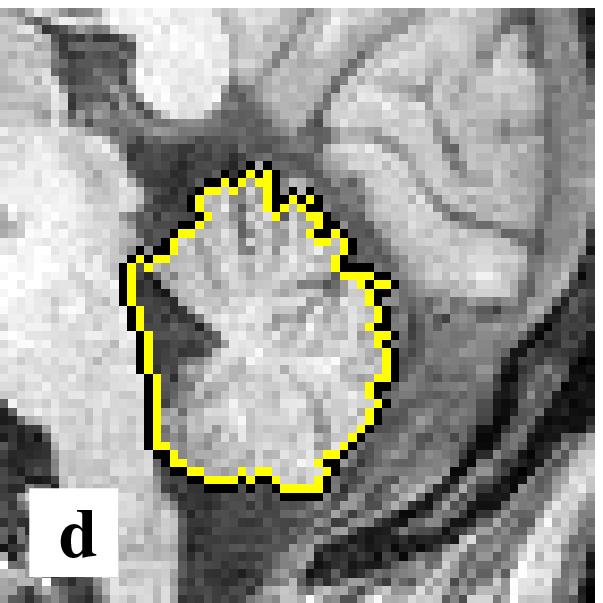}} \put(672,168){\includegraphics[width=0.16\textwidth] {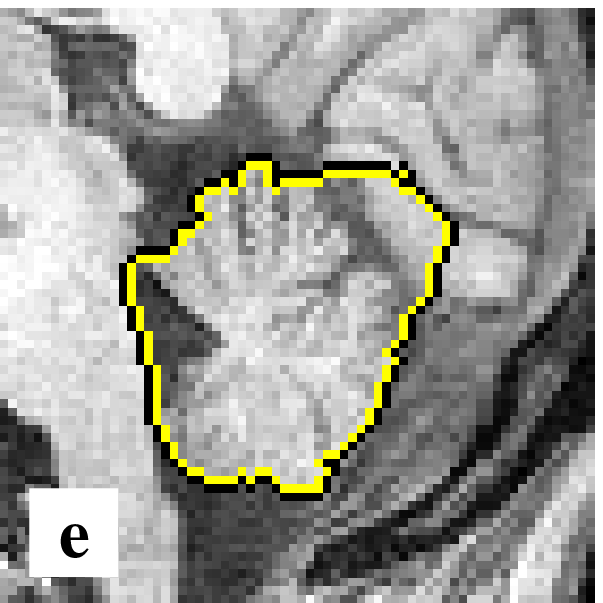}} \put(840,168){\includegraphics[width=0.16\textwidth] {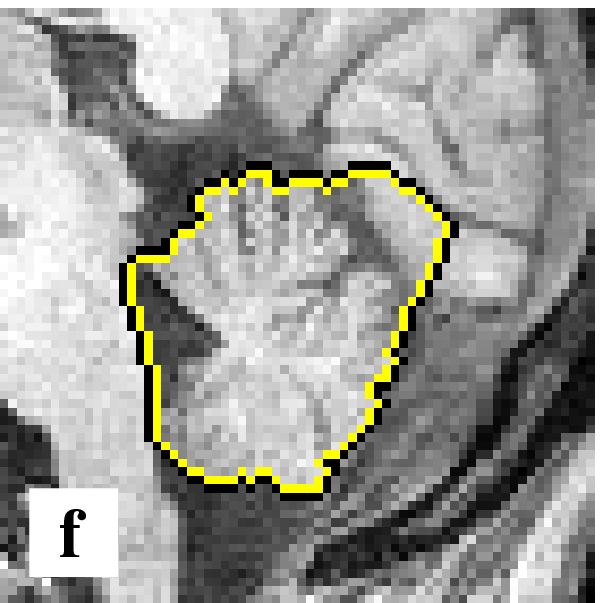}} \put(168,  0){\includegraphics[width=0.16\textwidth] {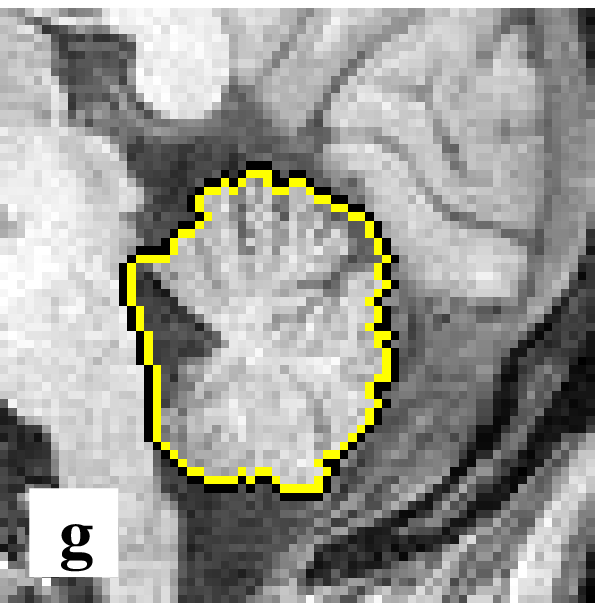}} \put(336,  0){\includegraphics[width=0.16\textwidth] {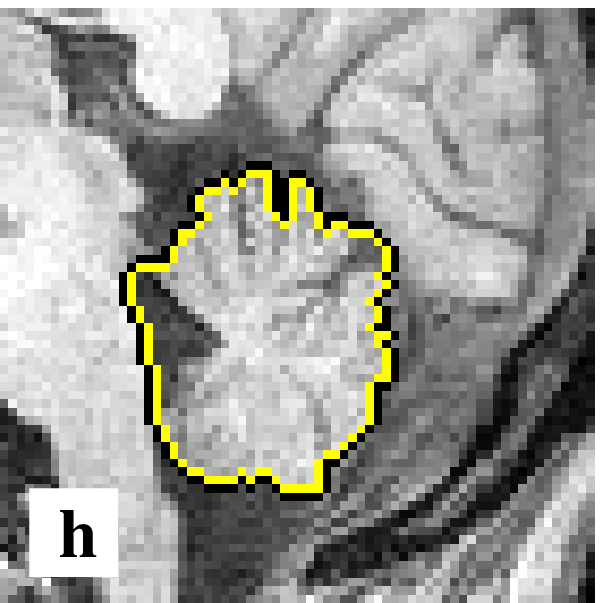}} \put(504,  0){\includegraphics[width=0.16\textwidth] {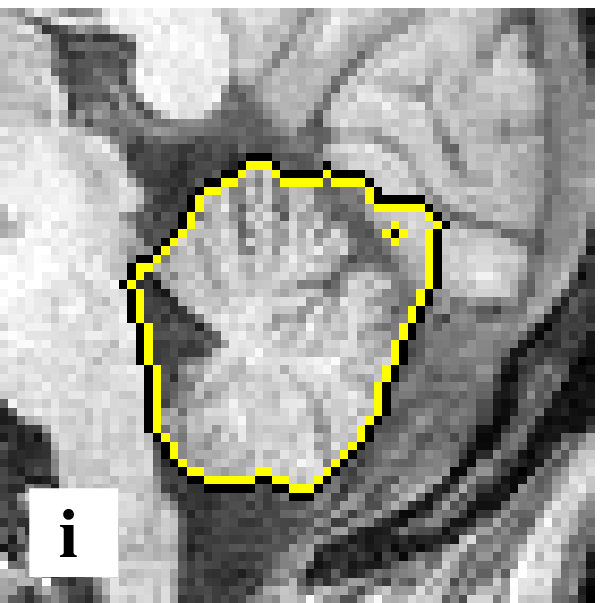}} \put(672,  0){\includegraphics[width=0.16\textwidth] {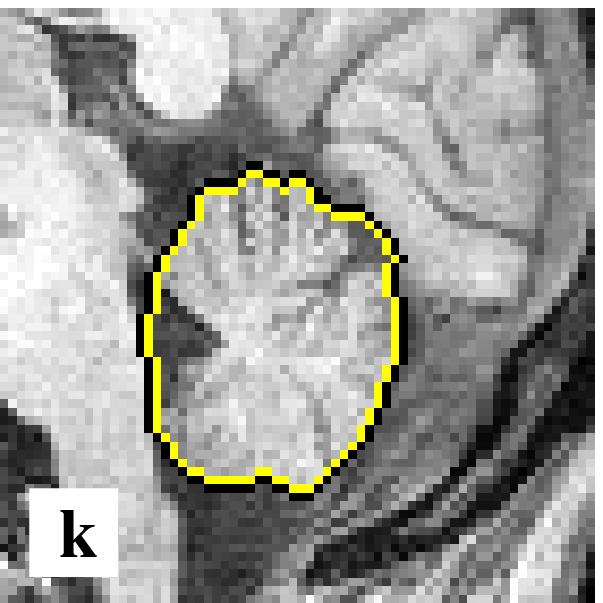}} \put(840,  0){\includegraphics[width=0.16\textwidth] {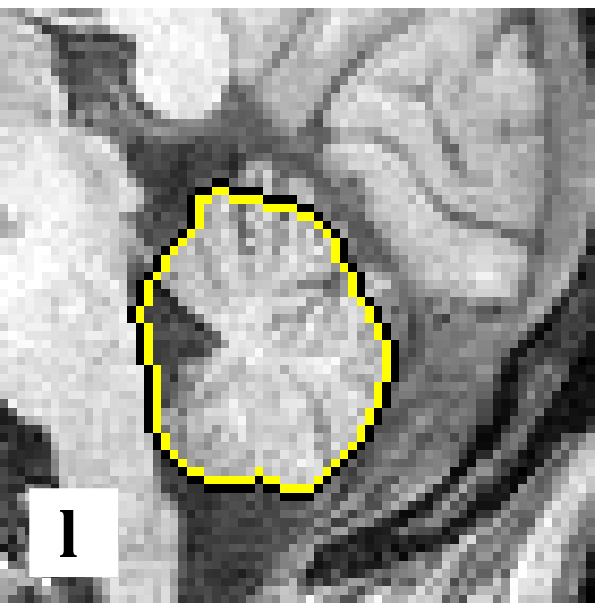}} \end{picture} \caption{\label{fi-cerebellum}Cerebellum segmentation with amoeba and geodesic active contours.  \textbf{Top left to bottom right in rows:} \textbf{(a)} Detail from an MR slice of a human brain with initial contour. -- \textbf{(b)} AAC with $L^2$ amoeba metric, $\beta=0.1$, $\varrho=10$, $20$ iterations. -- \textbf{(c)} AAC with $L^2$ amoeba metric, $\beta=0.1$, $\varrho=12$, $10$ iterations. -- \textbf{(d)} AAC with $L^2$ amoeba metric, $\beta=0.1$, $\varrho=12$, $60$ iterations. -- \textbf{(e)} AAC with $L^1$ amoeba metric, $\beta=0.1$, $\varrho=10$, $20$ iterations. -- \textbf{(f)} AAC with $L^1$ amoeba metric, $\beta=0.1$, $\varrho=12$, $10$ iterations. -- \textbf{(g)} AAC with $L^1$ amoeba metric, $\beta=0.1$, $\varrho=12$, $60$ iterations. -- \textbf{(h)} Biased AAC with $L^1$ amoeba metric, $\beta=0.1$, $\varrho=10$, fixed offset bias with $b=2$, $20$ iterations. -- \textbf{(i)} GAC with Perona-Malik edge-stopping function, $\lambda=10$, explicit scheme with time step $\tau=0.25$, $960$ iterations. -- \textbf{(k)} GAC with same parameters but $3000$ iterations. -- \textbf{(l)} Same with $57\,600$ iterations. -- (a--d) and (i--l) from \cite{Welk-ssvm11}.  } \end{figure*}

\begin{figure*}[t!] \setlength{\unitlength}{0.001\textwidth} \begin{picture}(1000,526) \put(  0,356){\includegraphics[width=0.244\textwidth] {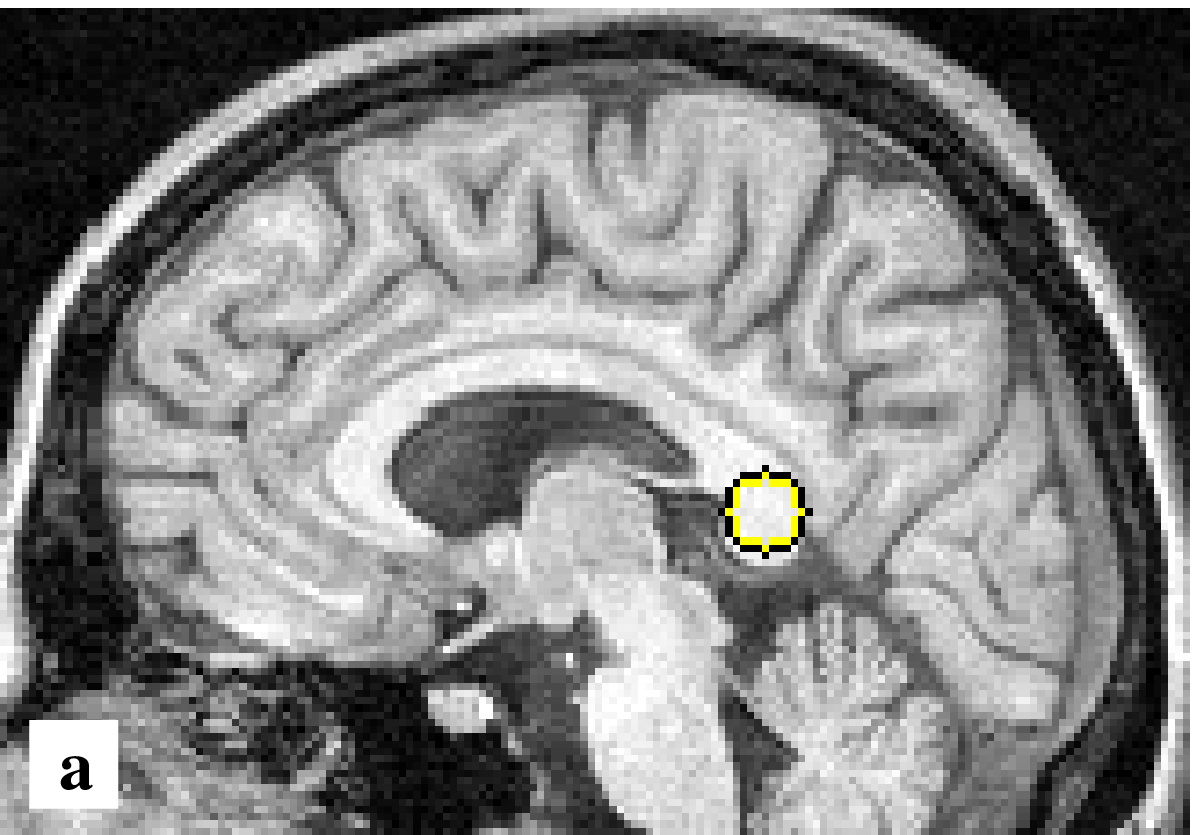}} \put(252,356){\includegraphics[width=0.244\textwidth] {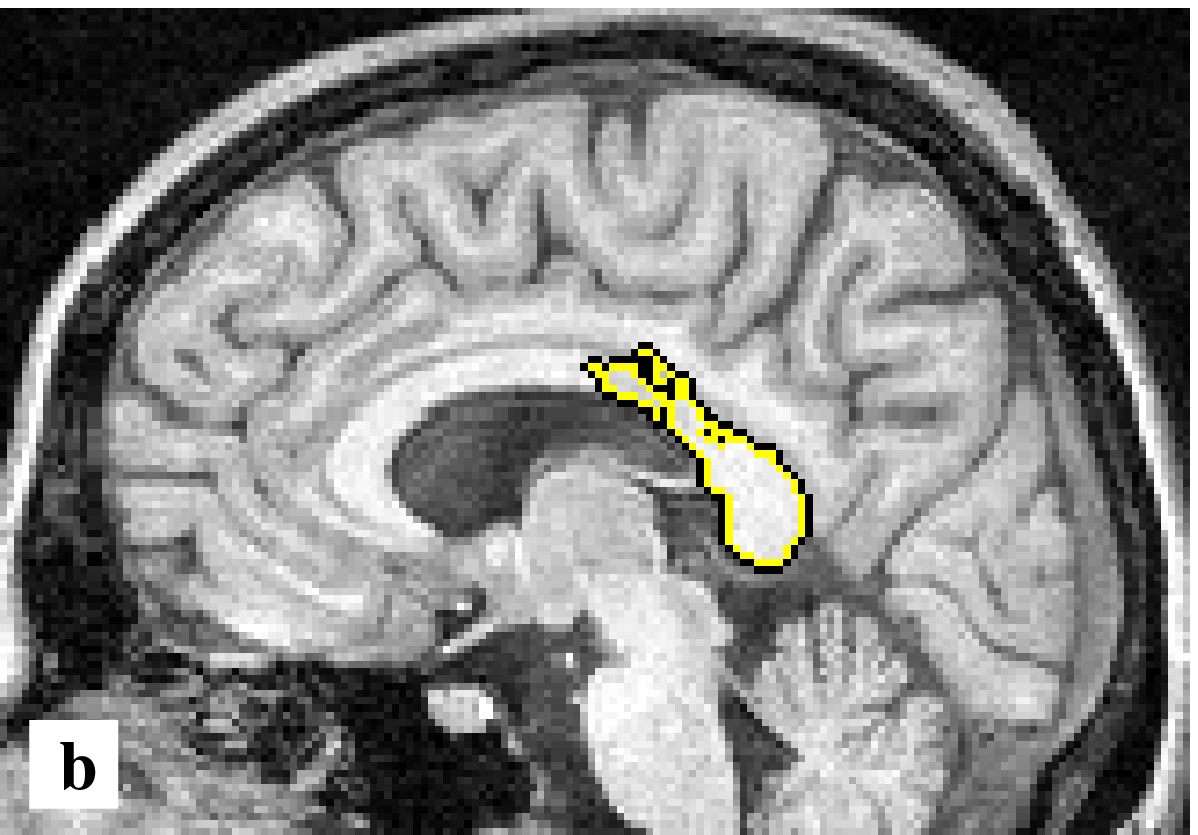}} \put(504,356){\includegraphics[width=0.244\textwidth] {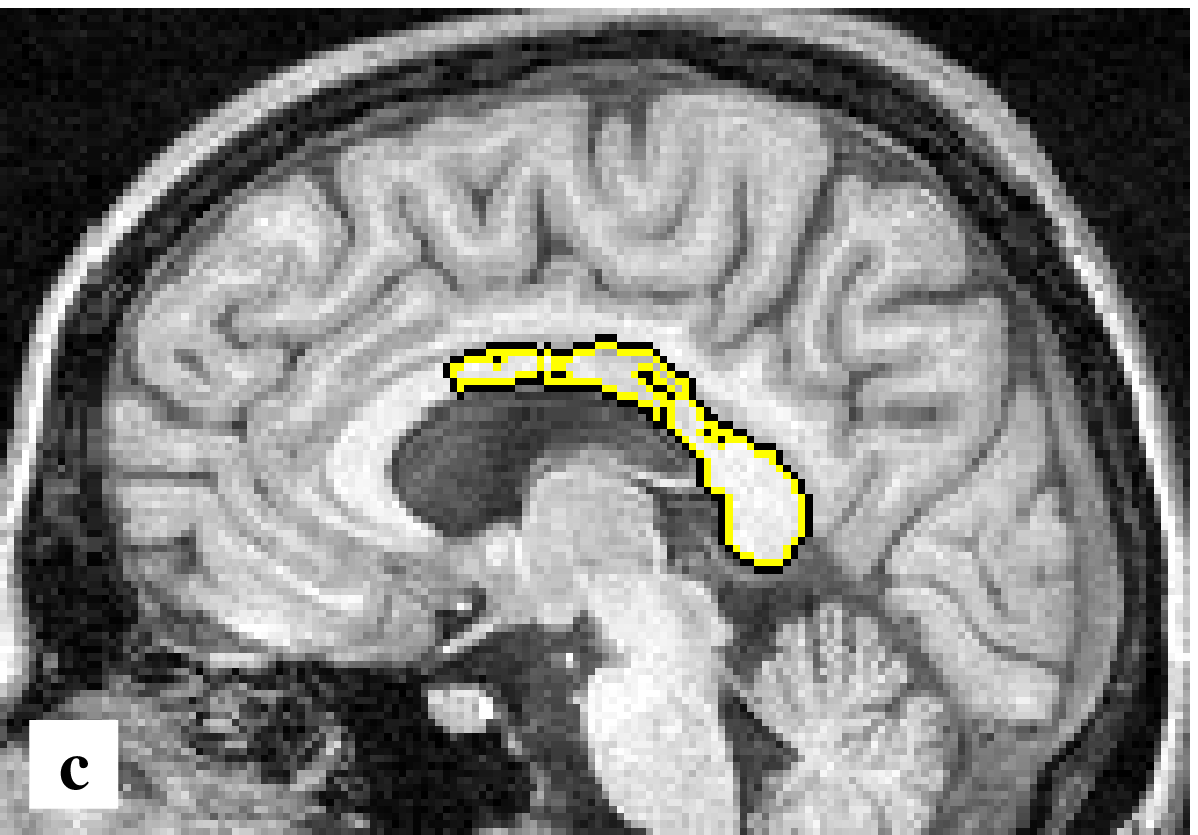}} \put(756,356){\includegraphics[width=0.244\textwidth] {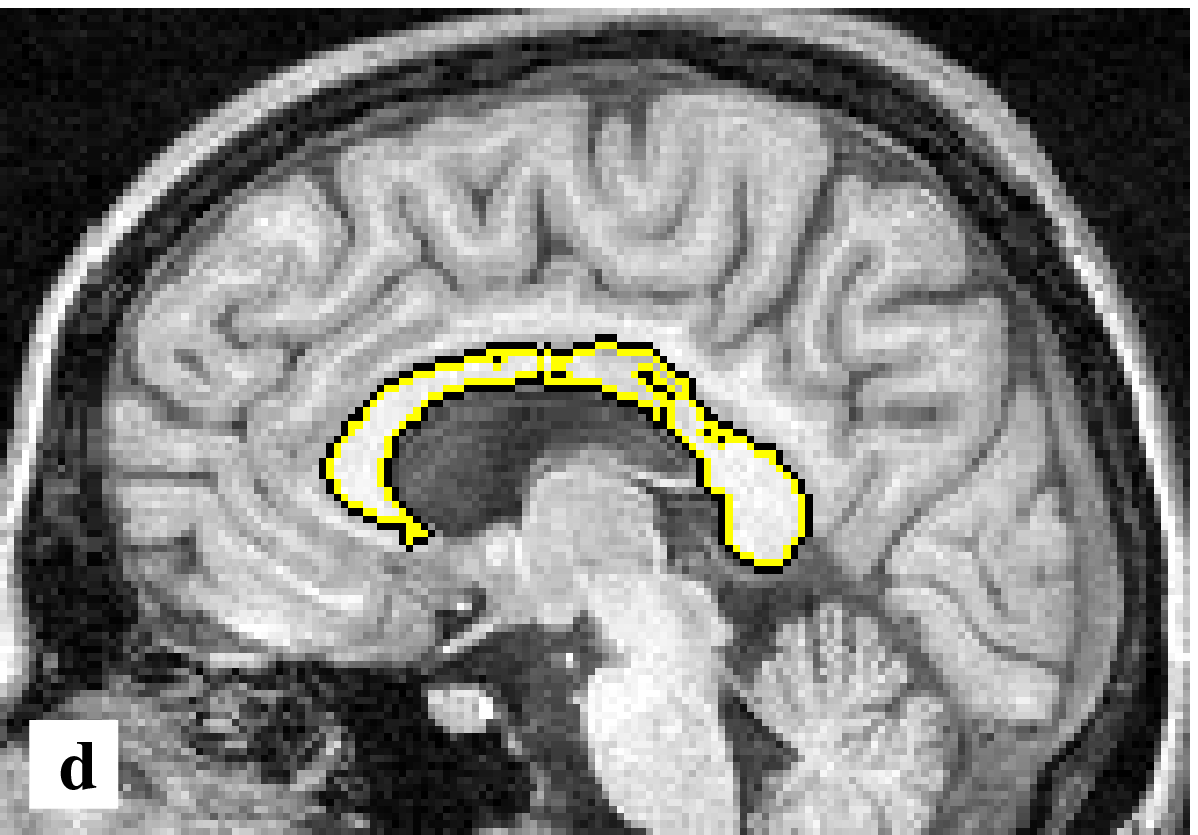}} \put(  0,178){\includegraphics[width=0.244\textwidth] {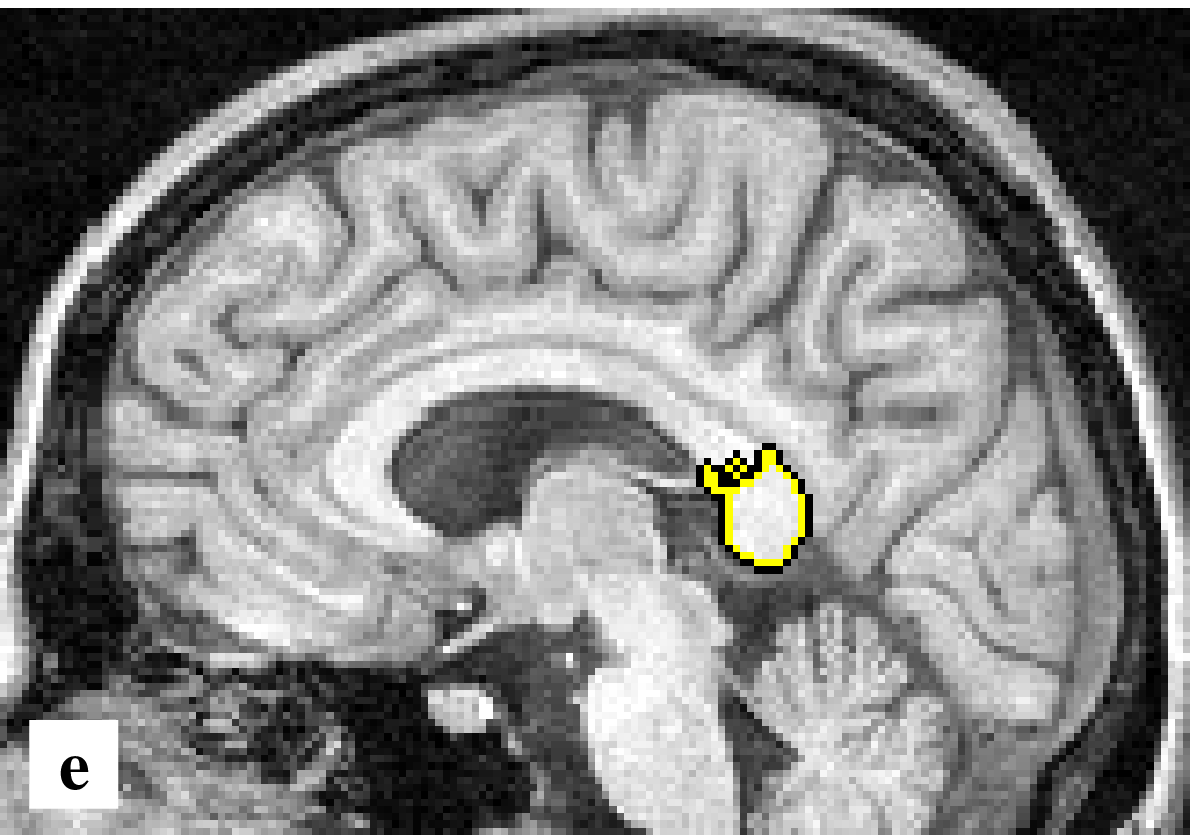}} \put(252,178){\includegraphics[width=0.244\textwidth] {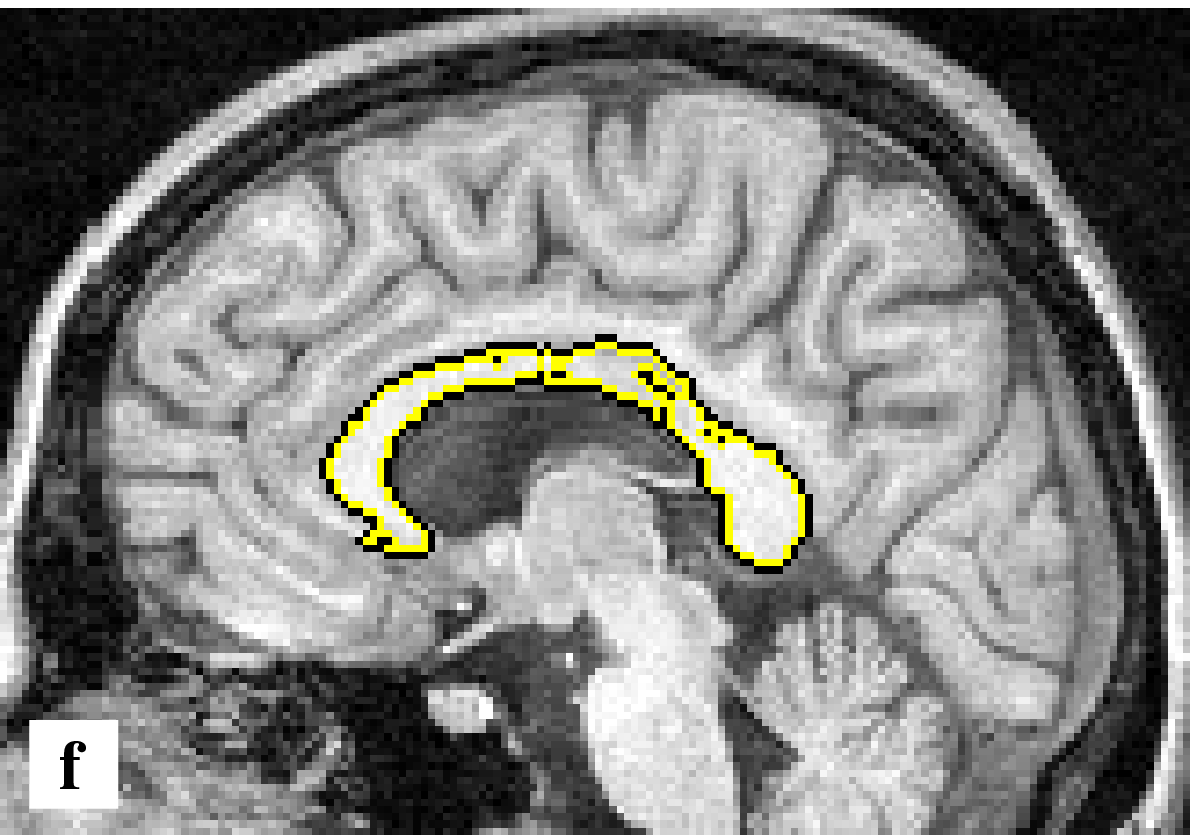}} \put(504,178){\includegraphics[width=0.244\textwidth] {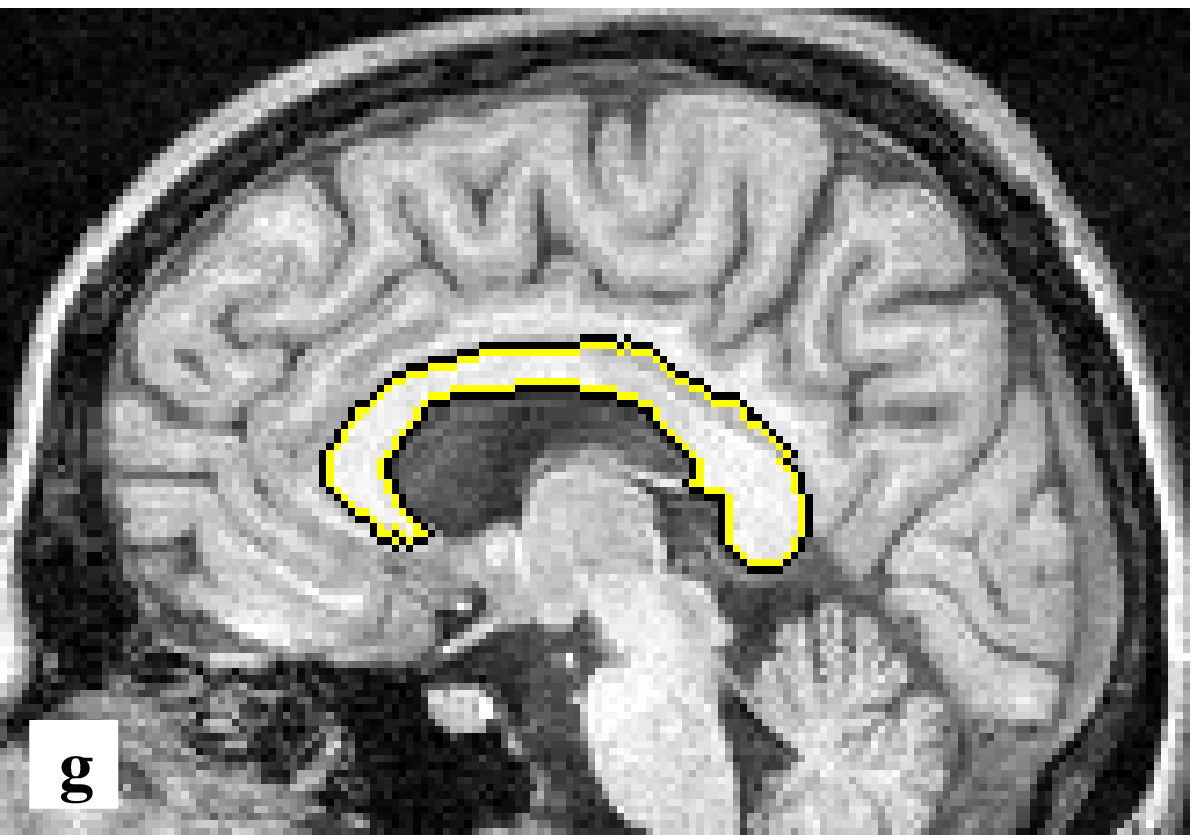}} \put(756,178){\includegraphics[width=0.244\textwidth] {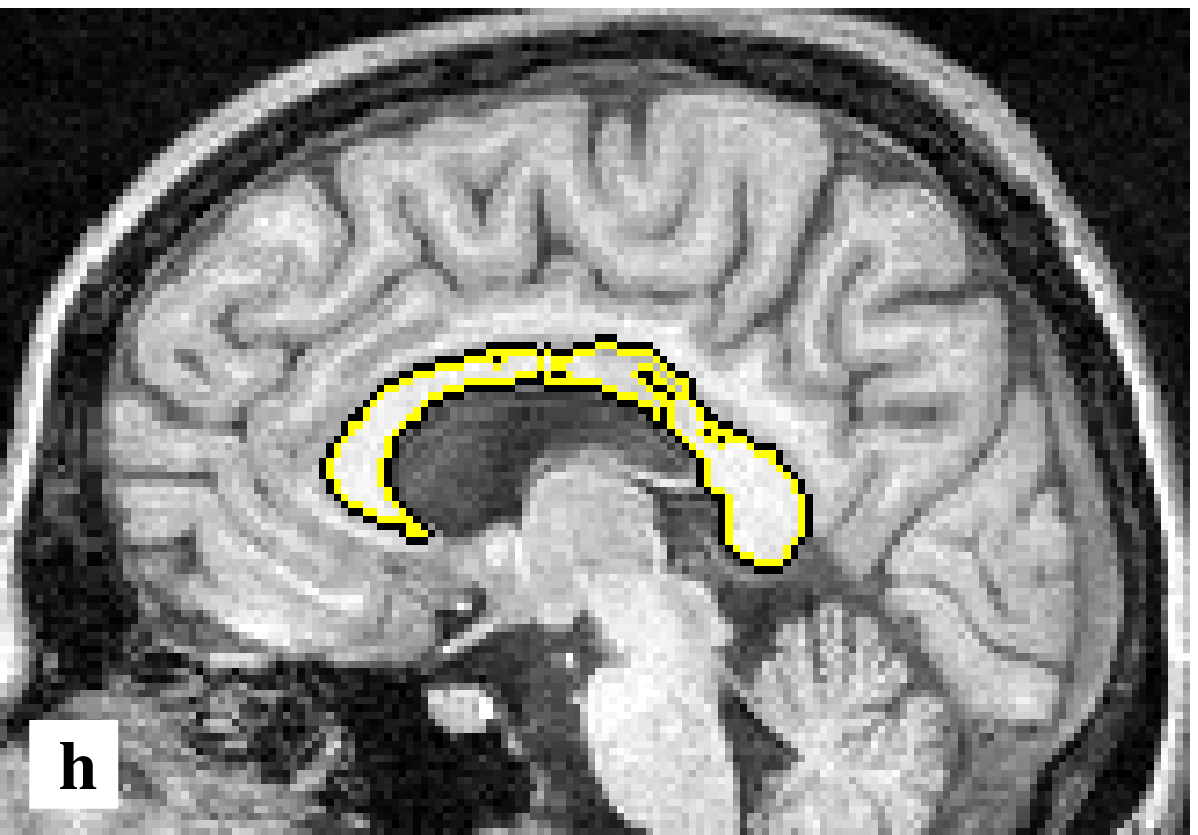}} \put(  0,  0){\includegraphics[width=0.244\textwidth] {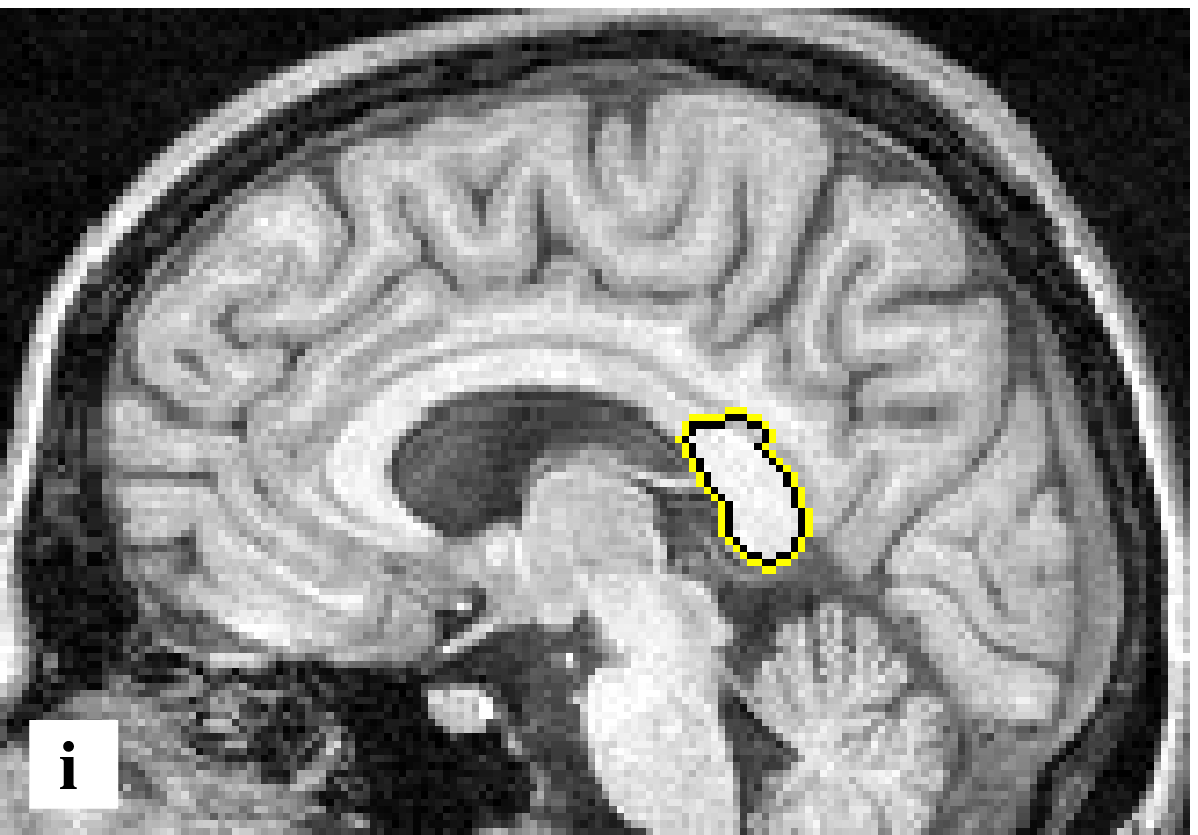}} \put(252,  0){\includegraphics[width=0.244\textwidth] {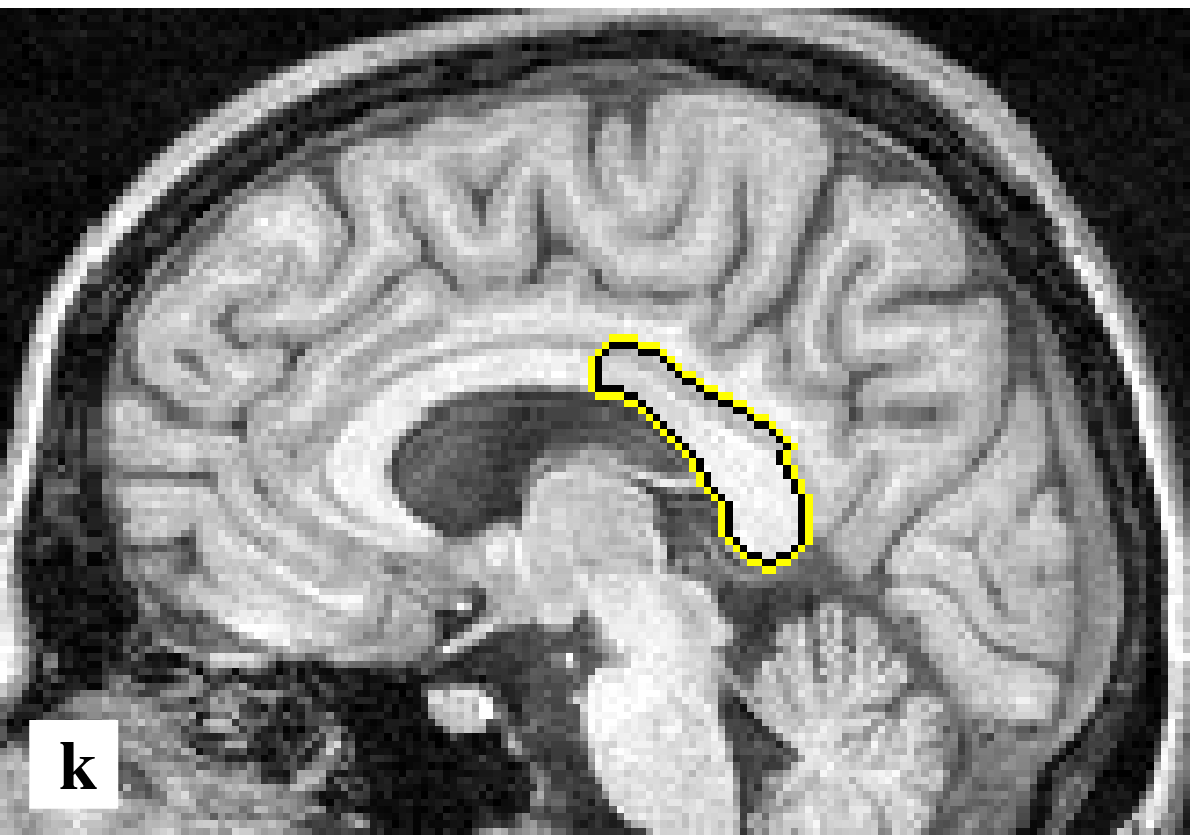}} \put(504,  0){\includegraphics[width=0.244\textwidth] {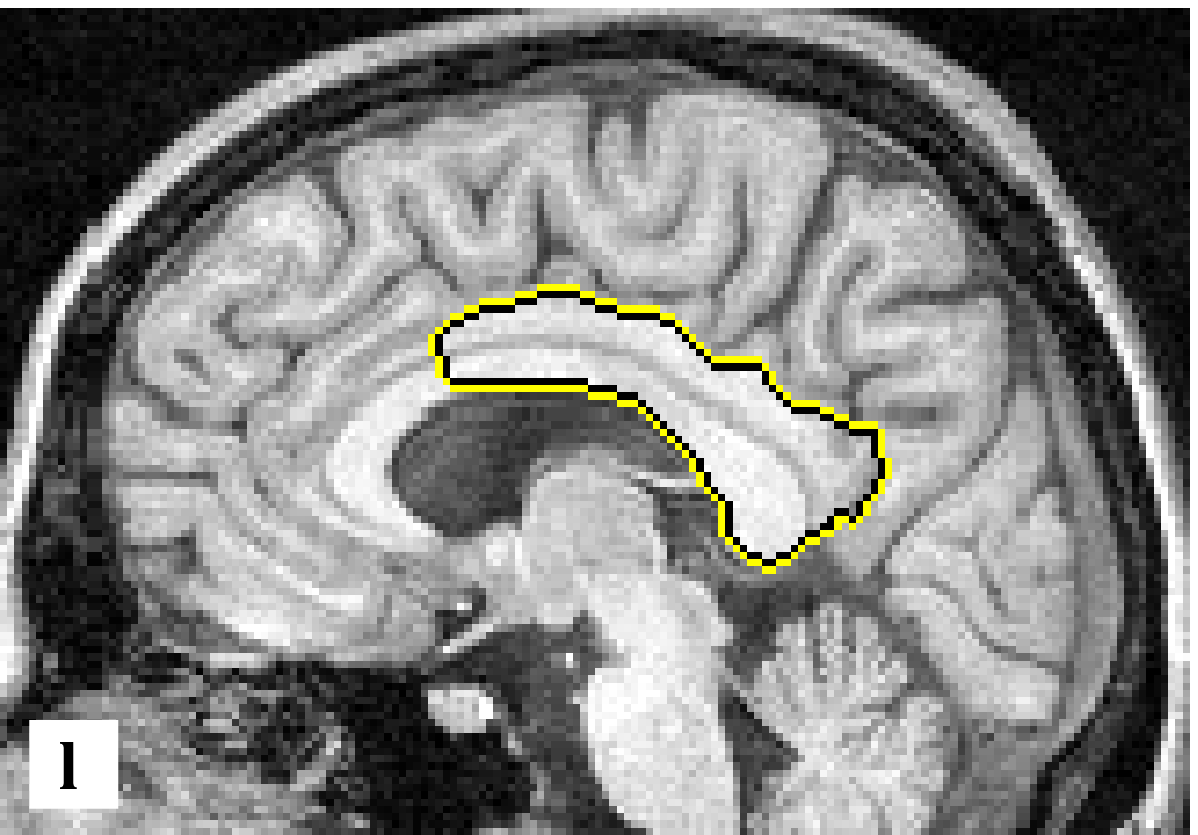}} \put(756,  0){\includegraphics[width=0.244\textwidth] {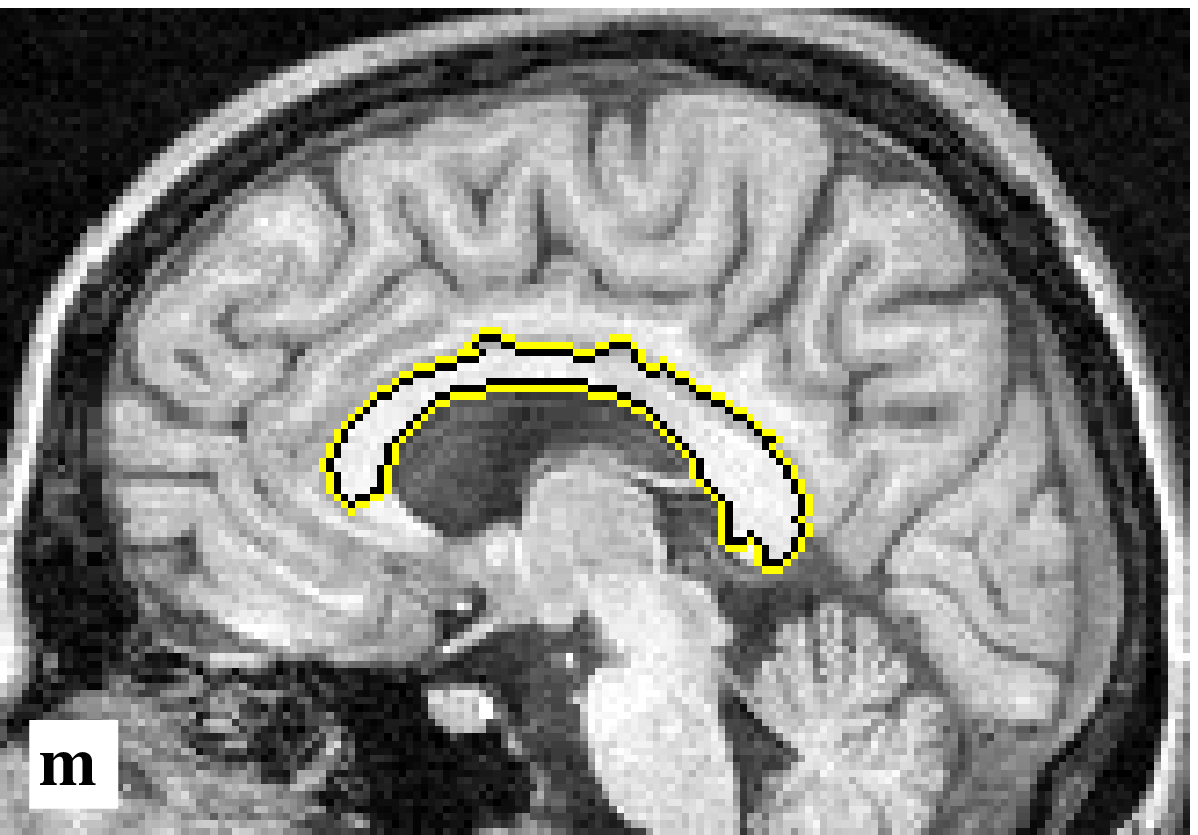}} \end{picture} \caption{\label{fi-corpuscallosum}Corpus callosum segmentation using amoeba active contours with dilation bias, and geodesic active contours with dilating force term.  \textbf{Top left to bottom right in rows:} \textbf{(a)} Detail from an MR slice of a human brain with initial contour. -- \textbf{(b)} AAC with $L^2$ amoeba metric, $\beta=2$, $\varrho=20$, fixed offset $b=10$, $10$ iterations. -- \textbf{(c)} AAC with $L^2$ amoeba metric, $\beta=2$, $\varrho=20$, $b=10$, $20$ iterations. -- \textbf{(d)} AAC with $L^2$ amoeba metric, $\beta=2$, $\varrho=20$, $b=10$, $35$ iterations. -- \textbf{(e)} AAC with $L^2$ amoeba metric, $\beta=2$, $\varrho=20$, $b= 5$, $35$ iterations. -- \textbf{(f)} AAC with $L^2$ amoeba metric, $\beta=2$, $\varrho=20$, $b=15$, $35$ iterations. -- \textbf{(g)} AAC with $L^2$ amoeba metric, $\beta=0.4$, $\varrho=20$, quantile bias with $q=0.7$, $35$ iterations. -- \textbf{(h)} AAC with $L^1$ amoeba metric, $\beta=2$, $\varrho=20$, fixed offset $b=10$, $35$ iterations. -- \textbf{(i)} GAC with Perona-Malik edge-stopping function, $\lambda=1.3$, applied to Gaussian pre-smoothed gradient field, $\sigma=1.9$, with dilation force term, $k=-0.075$, explicit scheme with time step size $\tau=0.25$, $130\,000$ iterations. -- \textbf{(k)} GAC with dilation force term, same parameters as (i) but $160\,000$ iterations. -- \textbf{(l)} GAC with dilation force term, same parameters as (i) but $270\,000$ iterations. -- \textbf{(m)} GAC with modified dilation force term (see text), $\sigma=0$, $\lambda=0.5$, $k=-0.16$, $\bar{k}=0.0005$, $\tau=0.25$, $18\,960\,000$ iterations. -- AAC experiments in (a--h) adapted from \cite{Welk-ssvm11}.  } \end{figure*}

As mentioned in the introduction, this paper aims primarily on theoretical analysis. The experiments presented in this section serve to illustrate the principal behaviour of amoeba active contours and to validate our theoretical findings on its relation to geodesic active contours.

We use magnetic resonance images in the experiments because they allow to judge the performance of active contour segmentation in the presence of high-contrast contours, small-scale details and confusing structures.  With regard to the purpose to compare two active contour formulations, comparisons to unsupervised segmentation approaches based on non-contour information or to anatomical ground truth information do not belong in this context.  One should be aware that neither active contours nor active region methods in their pure form represent the state of the art in medical image segmentation. State-of-the-art segmentation of anatomical structures for medical application is achieved by complex frameworks that include active contours or active regions as a component but combine them with anatomical knowledge encoded e.g.\ in shape and appearance models \cite{Cootes-tr01}, see \cite{Leventon-cvpr00} for an approach combining geodesic active contours with shape models.  While it will definitely be an interesting subject of future work to integrate amoeba active contours in such a framework, this is beyond the scope of the present paper.

Our first experiment aims at unsupervised active contour segmentation of the cerebellum from a human brain MR slice. The initial contour is shown in Figure~\ref{fi-cerebellum}(a). Frame~(b) shows an amoeba active contour (AAC) result with $L^2$ amoeba metric, $\beta=0.1$, amoeba radius $\varrho=10$ and $20$ iterations. The segment boundary is nicely aligned to high-contrast edges but encloses noticeable areas outside the cerebellum region. With slightly larger amoeba radius $\varrho=12$, a better segmentation is achieved, Figure~\ref{fi-cerebellum}(c). More iterations as in frame~(d) let the contour cut off parts of the cerebellum. The behaviour of amoeba active contours with $L^1$ amoeba metric is largely similar, see frames~(e--g), however due to the higher sensitivity of this amoeba metric to smaller contrasts it is the third setting with $60$ iterations that leads to a segment that surrounds the cerebellum without including too much additional area. To speed up this slow evolution of the contour an erosion bias can be used, which has been done in Figure~\ref{fi-cerebellum}(h). Indeed, a result comparable to the previous one is now reached within $20$ iterations.

Finally, Figures~\ref{fi-cerebellum}(i--l) show segmentation results obtained by geodesic active contours (GAC) with a standard explicit scheme based on forward differences in time and central differences in space except for the $\langle\bm{\nabla}g,\bm{\nabla}u\rangle$ term for which an upwind discretisation \cite{Osher-JCP88} is used.  Juxtaposition of the AAC and GAC results confirms the overall similarity between the methods which is also the result of our theoretical analysis. However, GAC have a stronger tendency to smooth away small details from the contour shape. This can partially be attributed to the inevitable numerical dissipation of standard discrete schemes for the active contour PDE. In contrast, the discrete amoeba procedure can adjust to small contour details on pixel resolution level. This aspect is of course not visible in our theoretical analysis that abstracts from the discretisation of the image and considers actually an infinite-resolution limit case.

In our second experiment we consider unsupervised segmentation of the corpus callosum from the same MR slice (a different cutout is shown).  It demonstrates how biased AAC allows initialisation inside the shape. The initial contour is shown in Figure~\ref{fi-corpuscallosum}(a).  Frames~(b--d) of the same figure show progressive stages of an outward curve evolution of AAC with fixed offset bias and $L^2$ amoeba metric, where frame~(d) forms a reasonable segmentation result. The contrast parameter $\beta=2$ in the amoeba metric is relatively large in this case.  The next two frames, (e) and (f), demonstrate that a smaller bias is not strong enough to push the contour to the desired extent while with a larger bias it overruns the desired boundaries.  Figure~\ref{fi-corpuscallosum}(g) demonstrates that a similar segmentation result can also be obtained with the quantile bias, with a smaller contrast parameter $\beta=0.4$ in this case.  In frame~(h) one sees that AAC, again with fixed offset bias but this time with $L^1$ amoeba metric, does also allow a reasonable segmentation of the same structure.

Similar segmentation results can be achieved using GAC with a dilation force term, i.e.\ \eqref{aacgac-force-dm} with negative $k$.  Figure~\ref{fi-corpuscallosum}(i--l) shows such an evolution at three evolution times. In frames (i) and (k) the segment grows successively to cover about half of the corpus callosum region but then it extends to include further regions of brain matter (l).  Due to the thin structures that separate structures in this image, this test case turns out rather hard for GAC segmentation: In most parameter settings tested, the gap between the corpus callosum and the brain matter above even earlier in the evolution. In (m) a result is shown in which the segmented region covers most of the corpus callosum. This was achieved with carefully selected parameters and a slight modification of the dilation force: the force term $g(\lvert\bm{\nabla}f\rvert)\,k\,\lvert\bm{\nabla}u\rvert$ was replaced with $S_{\bar{k}}\bigl(g(\lvert\bm{\nabla}f\rvert)\,k\bigr) \lvert\bm{\nabla}u\rvert$, where $S_{\bar{k}}(X):=\sgn(X)\cdot \max \{ \lvert X\rvert-\bar{k},0\}$ is a soft shrinkage function, and $\bar{k}$ a small shrinkage parameter. Compared to \eqref{aacgac-force-dm}, this modification suppresses the dilation force at high gradient locations, thus allowing the evolution to lock in easier at edges.  However, the very small $\lambda$ parameter in Figure~\ref{fi-corpuscallosum}(m) slows down the GAC evolution a lot, thus necessitating more than 100 times as many iterations as for frame~(k).

Closer inspection of the AAC results in Figure~\ref{fi-corpuscallosum}(b--h) reveals that the detected segments consistently exclude a small number of pixels inside the corpus callosum region, which are visible as small ``isles''. Such a behaviour is not observed in the GAC results in Figure~\ref{fi-corpuscallosum}(i--m). This is another expression of the high sensitivity with which amoebas adapt to image structures on pixel scale.  In fact the isolated pixels are noise pixels with high contrast relative to their neighbourhood. By its high spatial resolution, the AAC model keeps these pixels out of the segment while the more dissipative numerics of the GAC model smoothes the contrast and thereby incorporates the pixels in the segment.

\section{Conclusion}\label{sec-conc} In this paper, we have presented a contribution deepening the theoretical understanding of the relations between adaptive morphology and PDE methods.  We have analysed the amoeba active contour method proposed in \cite{Welk-ssvm11} and derived a partial differential equation that it approximates asymptotically for vanishing structuring element size for general geometric situations and for general amoeba metric. 

Our result reproduces as special cases results from our previous work: the approximation of geodesic active contours in a special geometric setting \cite{Welk-ssvm11} and general geometric situation \cite{Welk-ssvm13}, both in the case of the $L^2$ amoeba metric, and the approximation of self-snakes by iterated amoeba median filtering \cite{Welk-JMIV11}. In the general case, the PDE derived for amoeba active contours differs from the geodesic active contour equation. Implications of the differences for active contour segmentation have been discussed.

Further, we have analysed modifications of the amoeba median algorithm designed to approximate dilation-/erosion-like force terms as are frequently used also with geodesic active contours. Besides two variants of such modifications already proposed in \cite{Welk-ssvm11}, we have formulated a third variant based on our analysis.

By experiments based on those in \cite{Welk-ssvm11}, the basic behaviour of amoeba active contour algorithms in comparison with geodesic active contours has been demonstrated.  A deeper experimental evaluation as well as application in practical segmentation contexts, however, remains as a task for future work.

As a further subject of ongoing research we shortly mention the first results in \cite{Welk-ssvm13} on the relation between amoeba filtering with non-vanishing structuring elements and pre-smoothing in curvature-based PDEs. While we have not followed this direction further in the present paper, a more extensive investigation of this aspect is intended for a forthcoming paper.

\appendix \section{Details of Proofs}

\subsection{Proof of Corollary~\ref{cor-aacl1}} \label{app-proofcor-aacl1} For the $L^1$ amoeba norm, one has $\nu(s)=1+\lvert s\rvert$, thus $\nu'(s)=\sgn s$. Inserting these into \eqref{J1} yields \begin{align} J_1(s,\alpha) &= \int\limits_{\alpha-\pi/2}^{\alpha+\pi/2}\!\!  \frac{\sin^2\vartheta\,\,\sgn\!\cos\vartheta} {(1+s\,\lvert\cos\vartheta\rvert)^4}\dd\vartheta \notag\\* &=\int\limits_{\alpha-\pi/2}^{\pi/2}\!\!  \frac{\sin^2\vartheta} {(1+s\cos\vartheta)^4}\dd\vartheta -\int\limits_{\pi/2}^{\alpha+\pi/2}\!\!  \frac{\sin^2\vartheta} {(1-s\cos\vartheta)^4}\dd\vartheta \notag\\* &=\int\limits_{\alpha-\pi/2}^{\pi/2}\!\!  \frac{\sin^2\vartheta} {(1+s\cos\vartheta)^4}\dd\vartheta -\int\limits_{\pi/2-\alpha}^{\pi/2}\!\!  \frac{\sin^2\vartheta} {(1+s\cos\vartheta)^4}\dd\vartheta \notag\\* &=\int\limits_{-\pi/2+\alpha}^{\pi/2-\alpha}\!\!  \frac{\sin^2\vartheta} {(1+s\cos\vartheta)^4}\dd\vartheta \end{align} where we have assumed without loss of generality $\alpha\in[0,\pi]$.  Evaluating the indefinite integrals \begin{align} \int\frac{\sin^2\vartheta} {(1+s\cos\vartheta)^4}\dd\vartheta &= \begin{cases} \frac1{2\,(s^2-1)^{5/2}} \ln\frac{\sqrt{s+1}+\sqrt{s-1}\tan\frac\vartheta2} {\sqrt{s+1}-\sqrt{s-1}\tan\frac\vartheta2}\;,&s>1\;,\\ \frac1{(1-s^2)^{5/2}} \arctan\left(\sqrt{\frac{1-s}{1+s}}\,\tan\frac\vartheta2\right)\;, &s<1\end{cases} \notag\\*&\qquad{} -\frac{ \left\{ \begin{array}{@{}r@{}} \sin\vartheta\bigl((2\,s^3+s)\cos^2\vartheta+3(s^2+1)\cos\vartheta\\ -2\,s^3+5\,s\bigr) \end{array} \right.  } {6(s^2-1)^2(1+s\cos\vartheta)^2} \end{align} and \begin{align} & \int\frac{\sin^2\vartheta} {(1+\cos\vartheta)^4}\dd\vartheta = \frac{\sin^3\vartheta\,(4+\cos\vartheta)}{15\,(1+\cos\vartheta)^4} \end{align} at the integration boundaries $\pm(\pi/2-\alpha)$ and inserting $\tan\left(\frac\pi4-\frac\alpha2\right)=\frac{\cos\alpha}{1+\sin\alpha}$ yields \eqref{J1L1sgt1}, \eqref{J1L1seq1} and \eqref{J1L1slt1}.  The proof for $\alpha\in[-\pi,0]$ is analogous but the integral is split for the $\sgn\cos\vartheta$ factor at $-\pi/2$, finally leading to integration boundaries $\pm(\pi/2+\alpha)$. As a consequence, all instances of $\sin\alpha$ are replaced with $-\sin\alpha$, which is subsumed by the use of $\lvert\sin\alpha\rvert$ in \eqref{J1L1sgt1}, \eqref{J1L1seq1} and \eqref{J1L1slt1}.  Analogously, one has for $\alpha\in[0,\pi]$
\begin{align} J_3(s,\alpha) &= \int\limits_{\alpha-\pi/2}^{\alpha+\pi/2} \frac{\cos^2\vartheta\,\sgn\cos\vartheta} {(1+s\,\lvert\cos\vartheta\rvert)^4}\dd\vartheta =\int\limits_{-\pi/2+\alpha}^{\pi/2-\alpha} \frac{\cos^2\vartheta} {(1+s\cos\vartheta)^4}\dd\vartheta \end{align} which is evaluated via the indefinite integrals \begin{align} \int\frac{\cos^2\vartheta} {(1+s\cos\vartheta)^4}\dd\vartheta &= \begin{cases} \frac{-4\,s^2-1}{2\,(s^2-1)^{7/2}} \ln\frac{\sqrt{s+1}+\sqrt{s-1}\tan\frac\vartheta2} {\sqrt{s+1}-\sqrt{s-1}\tan\frac\vartheta2}\;,&s>1\;,\\ \frac{4\,s^2+1}{(1-s^2)^{7/2}} \arctan\left(\sqrt{\frac{1-s}{1+s}}\,\tan\frac\vartheta2\right)\;, &s<1\end{cases} \notag\\*&\qquad{} +\frac{ \left\{ \begin{array}{@{}r@{}} \sin\vartheta\bigl( (6\,s^5+10\,s^3-s)\cos^2\vartheta\\ +3(2\,s^4+9\,s^2-1)\cos\vartheta\\ +(2\,s^3+13\,s) \bigr) \end{array} \right.  } {6(s^2-1)^2(1+s\cos\vartheta)^2}\;, \\ \int\frac{\cos^2\vartheta} {(1+\cos\vartheta)^4}\dd\vartheta &=\frac{2\,\sin\vartheta\,(13\cos^3\vartheta+52\cos^2\vartheta +32\cos\vartheta+8)}{105\,(1+\cos\vartheta)^4} \end{align} to obtain \eqref{J3L1sgt1}, \eqref{J3L1seq1} and \eqref{J3L1slt1}.  As before, the inclusion of the case $\alpha\in[-\pi,0]$ implies the use of $\lvert\sin\alpha\rvert$ in all three equations.  Finally, one has for $J_2$ and $\alpha\in[0,\pi]$
\begin{align} J_2(s,\alpha) &= \int\limits_{\alpha-\pi/2}^{\alpha+\pi/2}\!\!  \frac{\sin\vartheta\,\cos\vartheta\,\,\sgn\!\cos\vartheta} {(1+s\,\lvert\cos\vartheta\rvert)^4}\dd\vartheta \notag\\* &=\int\limits_{\alpha-\pi/2}^{\pi/2}\!\!  \frac{\sin\vartheta\,\cos\vartheta} {(1+s\cos\vartheta)^4}\dd\vartheta -\!\!\int\limits_{\pi/2}^{\alpha+\pi/2}\!\!  \frac{\sin\vartheta\,\cos\vartheta} {(1-s\cos\vartheta)^4}\dd\vartheta \notag\\* &=\int\limits_{\alpha-\pi/2}^{\pi/2}\!\!  \frac{\sin\vartheta\,\cos\vartheta} {(1+s\cos\vartheta)^4}\dd\vartheta -\!\!\int\limits_{-\pi/2}^{\alpha-\pi/2}\!\!  \frac{\sin\vartheta\,\cos\vartheta} {(1+s\cos\vartheta)^4}\dd\vartheta \notag\\* &=\underbrace{\int\limits_{-\pi/2}^{\pi/2}\!\!  \frac{\sin\vartheta\,\cos\vartheta} {(1+s\cos\vartheta)^4}\dd\vartheta}_{{}=0} -~2\!\!\!\!\int\limits_{-\pi/2}^{\alpha-\pi/2}\!\!  \frac{\sin\vartheta\,\cos\vartheta} {(1+s\cos\vartheta)^4}\dd\vartheta \notag\\* &=~2\!\!\!\!\int\limits_{\pi/2-\alpha}^{\pi/2}\!\!  \frac{\sin\vartheta\,\cos\vartheta} {(1+s\cos\vartheta)^4}\dd\vartheta \end{align} and the indefinite integral \begin{align} &\int\frac{\sin\vartheta\,\cos\vartheta}{(1+s\cos\vartheta)^4}\dd\vartheta =\frac{3\,s\cos\vartheta+1}{6\,s^2\,(1+s\,\cos\vartheta)^3} \end{align} from which \eqref{J2L1} is obtained in a straightforward way.  As before, the case $\alpha\in[-\pi,0]$ is subsumed by inserting modulus bars around $\sin\alpha$.

\subsection{Relation between $\tilde{J}_1$ and $\tilde{J}_3$} \label{app-j1j3} To complete the proof of Corollary~\ref{cor1}, we show that $h(s)=s\,g'(s)$.  We notice first that \begin{align} &\frac{\dd}{\dd\vartheta}\left(\frac{\sin\vartheta} {\nu(\beta\,s\cos\vartheta)^3}\right) = \frac{\cos\vartheta}{\nu(\beta\,s\cos\vartheta)^3} +3\,\beta\,s\,\frac{\nu'(\beta\,s\cos\vartheta)} {\nu(\beta\,s\cos\vartheta)^4}\sin^2\vartheta \label{dsinovernu3} \end{align} where the last summand is essentially the integrand of $\tilde{J}_1(\beta\,s)$.  By integration it follows that \begin{align} 3\,\beta\,s\,\tilde{J}_1(\beta\,s) &= \underbrace{\left[\frac{\sin\vartheta}{\nu(\beta\,s\cos\vartheta)^3}\right] ^{\vartheta=+\pi/2}_{\vartheta=-\pi/2}}_{{}=2} \!\!-\!\!  \int\limits_{-\pi/2}^{+\pi/2}\!\!  \frac{\cos\vartheta}{\nu(\beta\,s\cos\vartheta)^3}\dd\vartheta\;.  \end{align} Substituting this into \eqref{g} yields \begin{align} g(s)&=\frac12\int\limits_{-\pi/2}^{+\pi/2} \frac{\cos\vartheta}{\nu(\beta\,s\cos\vartheta)^3}\dd\vartheta \end{align} from which one easily calculates \begin{align} s\,g'(s) &= \frac{s}{2}\,\int\limits_{-\pi/2}^{+\pi/2} \frac{-3\,\beta\,\cos\vartheta}{\nu(\beta\,s\cos\vartheta)^4} \nu'(\beta\,s\cos\vartheta)\cos\vartheta\dd\vartheta =-\frac32\,\beta\,s\,\tilde{J}_3(\beta\,s) = h(s)\;.  \end{align} 

\subsection{Equivalence of Corollary~\ref{cor1} to the Result from \cite{Welk-JMIV11}} \label{app-ssn-jmiv11} In \cite{Welk-JMIV11} it was shown that iterated amoeba median filtering approximates the PDE \eqref{genaacmed} as in Corollary~\ref{cor1} with the edge-stopping function $g$ given by \begin{align} g(s) &= \frac{3\,I_1(\beta\,s)}{\beta^2\,s^2\,\psi\left(\frac1{\beta\,s} \right)^3}\;, \label{g-jmiv11} \\ I_1(\beta\,s) &= \int\limits_0^1\xi^2\,\sqrt{ \left(\psi^{-1}\left(\frac1\xi\psi\left(\frac1{\beta\,s}\right)\right) \right)^2-\frac1{\beta^2\,s^2}\,}\dd\xi\;, \end{align} where the function $\psi$ is related to $\nu$ via \begin{align} \psi(q) &= \hat{\nu}(q,1)=q\,\nu\left(\frac1q\right)\;, \label{psi-jmiv11-nu} \end{align} and $\psi^{-1}$ denotes the inverse function of $\psi$.  Substituting \begin{align} \xi &= \frac{\psi\left(\frac1{\beta\,s}\right)} {\psi\left(\frac1{\beta\,s\cos\vartheta}\right)}\;, \\ \dd\xi &= -\frac{\psi\left(\frac1{\beta\,s}\right)}{\beta\,s}\, \frac{\psi'\left(\frac1{\beta\,s\cos\vartheta}\right)} {\psi\left(\frac1{\beta\,s\cos\vartheta}\right)^2}\, \frac{\sin\vartheta}{\cos^2\vartheta}\dd\vartheta \end{align} into $I_1$ yields \begin{align} I_1(\beta\,s)&= -\int\limits_{\pi/2}^0 \frac{\psi\left(\frac1{\beta\,s}\right)^2} {\psi\left(\frac1{\beta\,s\cos\vartheta}\right)^2} \sqrt{\left(\psi^{-1}\left( \frac{\psi\left(\frac1{\beta\,s\cos\vartheta}\right)} {\psi\left(\frac1{\beta\,s}\right)}\, \psi\left(\frac1{\beta\,s}\right)\right)\right) -\frac1{\beta^2\,s^2}\,} \times{}\notag\\*&\qquad{}\times \frac{-\psi\left(\frac1{\beta\,s}\right)}{\beta\,s}\, \frac{\psi'\left(\frac1{\beta\,s\cos\vartheta}\right)} {\psi\left(\frac1{\beta\,s\cos\vartheta}\right)^2}\, \frac{\sin\vartheta}{\cos^2\vartheta}\dd\vartheta \notag\\ &=-\frac{\psi\left(\frac1{\beta\,s}\right)^3}{\beta^2\,s^2} \int\limits_0^{\pi/2} \frac{\psi'\left(\frac1{\beta\,s\cos\vartheta}\right)} {\psi\left(\frac1{\beta\,s\cos\vartheta}\right)^4}\, \frac{\sin^2\vartheta}{\cos^3\vartheta}\dd\vartheta \end{align}
where the inverse function has been cancelled due to $\psi^{-1}\circ\psi\equiv\mathrm{id}$.  Inserting this into \eqref{g-jmiv11} and rewriting $\psi$ into $\nu$ via \eqref{psi-jmiv11-nu} and \begin{equation} \psi'(q) = \nu\left(\frac1q\right)-\frac1q\,\nu'\left(\frac1q\right) \end{equation} gives \begin{align} g(s)&=\frac3{\beta^4s^4}\!  \int\limits_0^{\pi/2} \frac{\nu(\beta\,s\cos\vartheta)-\beta\,s\cos\vartheta\, \nu'(\beta\,s\cos\vartheta)} {\frac1{\beta^4s^4\cos^4\vartheta}\,\nu(\beta\,s\cos\vartheta)^4}\, \frac{\sin^2\vartheta}{\cos^3\vartheta}\dd\vartheta \notag\\ &=3\int\limits_0^{\pi/2} \frac{\sin^2\vartheta\cos\vartheta}{\nu(\beta\,s\cos\vartheta)^3} \dd\vartheta -3\,\beta\,s \int\limits_0^{\pi/2} \frac{\nu'(\beta\,s\cos\vartheta)} {\nu(\beta\,s\cos\vartheta)^4} \sin^2\vartheta\cos^2\vartheta\dd\vartheta\;.  \label{g-jmiv11-trf1} \end{align} Integration by parts using \eqref{dsinovernu3} gives for the first summand \begin{align} \int\limits_0^{\pi/2} \frac{\sin^2\vartheta\cos\vartheta}{\nu(\beta\,s\cos\vartheta)^3} \dd\vartheta &= \int\limits_0^{\pi/2} \frac{\sin\vartheta\cos\vartheta}{\nu(\beta\,s\cos\vartheta)^3}\, \frac{\sin(2\vartheta)}{2}\dd\vartheta \notag\\ &= \left[ \frac{\sin\vartheta}{\nu(\beta\,s\cos\vartheta)^3}\, \left(-\frac14\cos(2\vartheta)\right)\right] _{\vartheta=0}^{\vartheta=\pi/2} \notag\\*&\quad{} +\frac14\int\limits_0^{\pi/2} \frac{\cos\vartheta}{\nu(\beta\,s\cos\vartheta)^3}\,\cos(2\vartheta) \dd\vartheta \notag\\*&\quad{} +\frac34\,\beta\,s\int\limits_0^{\pi/2} \frac{\nu'(\beta\,s\cos\vartheta)} {\nu(\beta\,s\cos\vartheta)^4} \sin^2\vartheta\cos(2\vartheta)\dd\vartheta \notag\\ &=\frac14 +\frac14\int\limits_0^{\pi/2} \frac{\cos\vartheta}{\nu(\beta\,s\cos\vartheta)^3}\dd\vartheta -\frac12\int\limits_0^{\pi/2} \frac{\cos\vartheta\sin^2\vartheta}{\nu(\beta\,s\cos\vartheta)^3} \dd\vartheta \notag\\*&\quad{} +\frac34\,\beta\,s\int\limits_0^{\pi/2} \frac{\nu'(\beta\,s\cos\vartheta)} {\nu(\beta\,s\cos\vartheta)^4} \sin^2\vartheta\cos(2\vartheta)\dd\vartheta \end{align} and after reordering of terms and division by $3/2$
\begin{align} \int\limits_0^{\pi/2} \frac{\sin^2\vartheta\cos\vartheta}{\nu(\beta\,s\cos\vartheta)^3} \dd\vartheta &= \frac16 +\frac16\int\limits_0^{\pi/2} \frac{\cos\vartheta}{\nu(\beta\,s\cos\vartheta)^3}\dd\vartheta \notag\\*&\quad{} +\frac12\,\beta\,s\int\limits_0^{\pi/2} \frac{\nu'(\beta\,s\cos\vartheta)} {\nu(\beta\,s\cos\vartheta)^4} \sin^2\vartheta\cos(2\vartheta)\dd\vartheta\;.  \label{intsin2cosovernu3} \end{align} Making once more use of \eqref{dsinovernu3}, we calculate \begin{align} & \int\limits_0^{\pi/2} \frac{\cos\vartheta}{\nu(\beta\,s\cos\vartheta)^3}\dd\vartheta = \underbrace{\left[\frac{\sin\vartheta}{\nu(\beta\,s\cos\vartheta)^3}\right] _{\vartheta=0}^{\vartheta=\pi/2}}_{{}=1} -3\,\beta\,s\int\limits_0^{\pi/2} \frac{\nu'(\beta\,s\cos\vartheta)} {\nu(\beta\,s\cos\vartheta)^4} \sin^2\vartheta\dd\vartheta\;.  \label{intcosovernu3} \end{align} Substituting \eqref{intsin2cosovernu3} and \eqref{intcosovernu3} into \eqref{g-jmiv11-trf1} eventually leads to \begin{align} g(s) &= 1-3\,\beta\,s\int\limits_0^{\pi/2} \frac{\nu'(\beta\,s\cos\vartheta)} {\nu(\beta\,s\cos\vartheta)^4}\sin^2\vartheta(\sin^2\vartheta+\cos^2\vartheta) \dd\vartheta \notag\\ &=1-\frac32\,\beta\,s\,\tilde{J}_1(\beta\,s) \end{align} in accordance with the representation from Corollary~\ref{cor1}.  This completes the proof.
\end{document}